%% file: main.tex
\documentclass[11pt]{article}
\usepackage{fullpage}
\usepackage{url}
\usepackage{xcolor}

\usepackage{graphicx}
\usepackage{subcaption}
\usepackage[numbers]{natbib}
\usepackage{amssymb}
\usepackage{amsmath}
\usepackage{amsthm}
\usepackage{microtype}
\usepackage{booktabs}
\usepackage{verbatim}

\usepackage[backref]{hyperref}
\hypersetup{
    colorlinks=true,
    urlcolor=blue,
    linkcolor=blue,
    citecolor=blue,
    linktocpage=true
}
\usepackage{cleveref}

\newcommand{\E}{\mathbb E}
\newcommand{\cD}{\mathcal D}
\newcommand{\cJ}{\mathcal J}
\newcommand{\ind}{\mathbb I}
\newcommand{\R}{\mathbb R}

\newcommand{\CAL}{\mathsf{CAL}}
\newcommand{\cdl}{\mathsf{CDL}}
\newcommand{\scdl}{\mathsf{SCDL}}

\newcommand{\scfdl}{\mathsf{SCFDL}}

\newcommand{\ECE}{\mathsf{ECE}}

\newcommand{\distcal}{\mathsf{DistCal}}
\newcommand{\smCE}{\mathsf{smCE}}
\newcommand{\Ber}{\mathsf{Ber}}
\newcommand{\sr}{\mathsf{SR}}
\newcommand{\cutoff}{\mathsf{Cutoff}}
\newcommand{\bECE}{\mathsf{bECE}}

\DeclareMathOperator*{\argmax}{arg\,max}

\newtheorem{theorem}{Theorem}[section]
\newtheorem{lemma}[theorem]{Lemma}
\newtheorem{claim}[theorem]{Claim}

\newtheorem{definition}{Definition}[section]
\newtheorem{remark}{Remark}
\newtheorem{example}{Example}
\newtheorem{question}{Question}

\renewcommand{\citet}{\cite}

\title{Testable and Actionable Calibration for Full Swap Regret 
}
\author{Konstantina Bairaktari\thanks{Department of Computer Science, Aarhus University. Supported by the European Union (ERC, TUCLA, 101125203). Views and opinions expressed are however those of the author(s) only and do not necessarily reflect those of the European Union or the European Research Council. Neither the European Union nor the granting authority can be held responsible for them. Supported by NSF awards CNS-2232692 and CCF-2311649, while at Northeastern University. } \and Lunjia Hu\thanks{Khoury College of Computer Sciences, Northeastern University.} \and Huy L. Nguyen\thanks{Khoury College of Computer Sciences, Northeastern University. Supported by NSF award CCF-2311649.} \and Jonathan Ullman\thanks{Khoury College of Computer Sciences, Northeastern University.  Supported by NSF awards CNS-2232692 and CNS-2247484.} }
\date{}

\allowdisplaybreaks
\begin{document}
\maketitle
\input{body}

\bibliographystyle{alphaurl}
\bibliography{refs}
\appendix

\input{appendix}

\newpage
\end{document}

%% file: body.tex
\thispagestyle{empty}
\begin{abstract}
AI generated predictions increasingly inform decision making in critical tasks, and therefore must be trustworthy.  One widely used measure of trustworthiness is calibration, which requires that the predictions match the true frequencies and can be treated like real probabilities of a given outcome.  However, defining calibration is subtle, and designing good measures of calibration error has been an active topic of recent research.  The first goal is to find calibration measures that are \emph{actionable}, meaning they can inform decision makers about their utility loss when predictions are treated as true probabilities, which is known as swap regret.  The second goal is to find calibration measures that are \emph{testable}, meaning that calibration error can be measured from a small sample of predictions and outcomes.  Although these are very basic requirements, there is no existing calibration measure that fully satisfies both properties, and all existing measures relax actionability by bounding a weaker notion of swap regret, or relax testability by having suboptimal estimation error. We introduce a new calibration measure, Soft-Binned Calibration Decision Loss ($\scdl$), which we prove is fully actionable without weakening either requirement, and testable with nearly optimal error rate. In addition, $\scdl$ satisfies other desired properties such as continuity and consistency.  We also provide a set of experiments confirming that the theoretical advantages of $\scdl$ compared to other measures lead to better performance in practice.
\end{abstract}

\newpage

\section{Introduction}
\label{sec:intro}

Deciding whether to carry an umbrella in the morning seems like a trivial problem, but it exemplifies the fundamental challenge of decision making under uncertainty, where we have to commit to an action without knowing the outcome in advance.  To navigate this uncertainty, we turn to forecasts, a predicted probability of rain that compresses complex atmospheric dynamics into a single actionable number.  But when can we trust this number?  A forecast that reports a $20\%$ chance of rain is meaningful if that number faithfully reflects the true probability of rain, but if it systematically overstates or understates the true probability, then decisions based on this forecast will be systematically wrong. The stakes grow considerably when we move from umbrellas to medical diagnoses where acting on an unreliable prediction can have severe consequences.

A decision maker can trust a predicted probability when it can be interpreted as a true probability and used to make the best possible decisions.  Intuitively, when a forecast reports a $20\%$ chance of rain, it should rain on $20\%$ of the days that receive that forecast. This property, known as (perfect) calibration, has its roots in the classical forecasting literature \citep{calibration}. Formally, given a predictive model that outputs probabilities
$p \in [0,1]$ for binary outcomes $y \in \{0, 1\}$, perfect calibration requires that the joint distribution $\cD$ of $(p,y)$ satisfies
\[
\E_{(p,y)\sim \cD}[y\mid p] = p.
\]

Perfect calibration is an ideal, it is never really achieved in practice, so the next question we need to ask is how far a predictor is from perfectly calibrated.  Quantifying this calibration error turns out to be less straightforward than it may appear.  A classic calibration measure is the \emph{Expected Calibration Error ($\ECE$)}, defined as the expected absolute difference between the prediction $p$ and the true probability $\E[y | p]$, where the expectation is taken over the predictions made by the model.  While the definition is very natural, $\ECE$ has the problem of being discontinuous and sensitive to small perturbations in the predictions, limiting its applicability for a robust and meaningful measure of calibration error. Many recent works aim to design better calibration measures that satisfy a variety of desired properties \cite{metrics-cal,utc,smoothECE,multiclass,cdl,truthful-HQYZ,test-action,truthful-QZ,truthful,efficient-decision,importance}.

In this work, we build on the line of research that evaluates calibration through the lens of downstream decision making, where we evaluate calibration not based on the differences between the predicted and true probabilities, but by the impact these differences have on decision makers who use the predicted probabilities.  In the literature, measures of calibration error that provide guarantees about the impact of miscalibration on the quality of downstream decisions are called \emph{actionable} \cite{test-action}.

Measures of calibration error are a property of a predictor and a data distribution.  But to be useful, the measure of calibration error must also be \emph{testable}, meaning we can accurately estimate the calibration error using a small finite sample of predictions and outcomes.  Without testability, we do not know which predictors actually perform well or how to improve our predictors to be more calibrated.

In this work, our goal is to design calibration measures that are simultaneously \emph{fully actionable}, meaning they bound the impact of miscalibration on arbitrary downstream decision makers, and \emph{nearly optimally testable}, meaning we can approximate the calibration error from samples with the best possible rate of convergence.  Our main contribution is to introduce the first measure of calibration error that satisfies both properties.

\subsection{An actionable calibration measure: Calibration Decision Loss}

The quality of decisions can be evaluated through a utility function $U(a,y)$ that assigns a value to each combination of action $a$ (take an umbrella or don't) and outcome $y$ (rainy or not). A perfectly calibrated predictor allows the decision maker to act on the predicted probabilities as if they were true without suffering any \emph{regret}, which is the feeling that they should have interpreted the predictions differently in hindsight, and swapped certain actions for others.

\begin{example}
Consider a distribution $\cD$ over (prediction, outcome), where the prediction is the probability of rain on some day and the outcome is $1$ if it rains and $0$ otherwise. The predictor says there is a $20\%$ chance of rain with probability $\frac12$ and says there is an $80\%$ chance of rain with probability $\frac12$. If the predictor is calibrated, then it rains on $20\%$ of the days where the forecast is $20\%$ and on $80\%$ of the days with forecast $80\%$. Now suppose the decision maker carries an umbrella whenever the predicted probability exceeds $50\%$, and receives utility $1$ for a correct match (carrying an umbrella on a rainy day or not carrying one on a dry day) and $0$ otherwise. Under this strategy, the expected utility is $\frac12 \times 0.8 + \frac12 \times 0.8 = 0.80$. Since the decision maker has no information to go on other than the prediction, and chooses an action based solely on the predicted probability, a swap would mean choosing a different action for a given prediction value. In this case, no such swap improves the expected utility: carrying an umbrella when $p=0.8$ and not carrying one when $p=0.2$ is already optimal in hindsight.
\end{example}

The connection between perfect calibration and decision making can be made precise: the joint distribution over predictions and outcomes is (perfectly) calibrated if and only if no decision maker incurs \emph{swap regret} when relying solely on the predictions to take actions \citep{foster1997calibrated}. Let $\cD$ be a joint distribution over predictions and outcomes $(p,y)\in [0,1]\times \{0,1\}$. For a decision task $Z = (A,U)$ with action space $A$ and utility function $U$, the swap regret of a response function $r:[0,1]\to A$ is defined as 
\begin{equation}
\label{eq:intro-swap}
\sr_Z(r,\cD):= \sup_{\sigma:A\to A}\E_{(p,y)\sim \cD}[U(\sigma(r(p)),y) - U(r(p),y)].
\end{equation}
Upon receiving a prediction $p$, a decision maker who trusts it as the true probability will respond with the action $a\in A$ that maximizes the expected utility \emph{assuming} that the outcome $y$ is drawn from the Bernoulli distribution with mean $p$. Formally, the \emph{best-response} function $r_Z^*$ for a decision task $Z = (A,U)$ is defined as
\[
r_Z^*(p):= \argmax_{a\in A}\E_{y\sim \Ber(p)}U(a,y).
\]
A distribution $\cD$ of predictions and outcomes is calibrated if and only if for every decision task $Z$, the best-response function incurs no swap regret: $\sr_Z(r_Z^*,\cD) = 0$. From the perspective of decision makers, this property defines what it means to be calibrated.

Thus, it is natural to measure the calibration error of a distribution $\cD$ using the swap regret incurred for decision tasks. Hu and Wu \cite{cdl} introduced a calibration error measure called the Calibration Decision Loss ($\cdl$) as the swap regret $\sr_Z(r_Z^*,\cD)$ maximized over all decision tasks $Z$ with a bounded utility function $U:A\times \{0,1\}\to [0,1]$. A distribution $\cD$ is calibrated if and only if $\cdl(\cD) = 0$. A low $\cdl$ provides the meaningful guarantee of low swap regret for any downstream task $Z$ when the decision maker simply uses the best-response function $r_Z^*$. %
Because of this property, Rossellini et al.\ \cite{test-action} describe $\cdl$ as an \emph{actionable} measure of calibration.

\subsection{The challenge of being both actionable and testable}

Although $\cdl$ is actionable, it is fundamentally not \emph{testable} \cite{test-action}, and cannot be accurately estimated from a finite dataset of predictions and outcomes. The intuition is that even a perfectly calibrated predictor can exhibit high $\cdl$ on a finite sample, so that simply measuring $\cdl$ on a finite sample cannot distinguish a perfectly calibrated predictor from a badly miscalibrated one.  To be precise, when evaluating $\cdl$ on a sample $S$, we treat $S$ as the uniform distribution over the data points in the sample.
The following example shows that $\cdl$ on a finite sample can be bounded away from zero even when the underlying distribution is perfectly calibrated, for any sample size $T$.

\begin{example}
\label{example:2}
Consider the decision task $Z = (A,U)$ where $A = [0,1]$ and $U(a,y) = 1 - (a - y)^2$. The best-response function $r_Z^*$ is the identity function $r_Z^*(p) = p$, so the swap regret for this decision task can be simplified as
\[
\sr_Z(r_Z^*,\cD) = \sup_{\sigma:[0,1]\to [0,1]}\E[(p - y)^2 - (\sigma(p) - y)^2].
\]
Let $\cD$ be the calibrated distribution of $(p,y)$ where $p$ is drawn uniformly from $[1/3, 2/3]$ and $y$ is drawn from $\Ber(p)$. On a finite sample $S$ of $T$ i.i.d.\ points $(p_1,y_1),\ldots,(p_T,y_T)\sim \cD$, the predictions $p_1,\ldots,p_T$ are almost surely all distinct, and thus a swap function $\sigma$ can map each $p_t\in [1/3,2/3]$ to the corresponding $y_t\in \{0,1\}$, incurring $\Omega(1)$ swap regret:
\[
\sr_Z(r_Z^*,S) = \sup_{\sigma:[0,1]\to [0,1]}\frac 1 T\sum_{t = 1}^T \left((p_t - y_t)^2 - (\sigma(p_t) - y_t)^2\right) = \frac 1 T\sum_{t = 1}^T (p_t - y_t)^2 \ge 1/9.
\]
Since $\cdl(S)$ is an upper bound on the swap regret $\sr_Z(r_Z^*,S)$, we also have $\cdl(S) \ge 1/9$.
\end{example}

\Cref{example:2} shows that a calibration error that is meaningful on the full distribution $\cD$ may not be very meaningful on a finite sample $S$, no matter the sample size.  In practice the predictor performance is almost always evaluated on finite data, and we need to be able to estimate the calibration error on new samples from the distribution based on the calibration error we observe on the sample.

We say a calibration measure $\CAL$ is \emph{testable} if it incurs a small estimation error $|\CAL(\cD) - \CAL(S)|$ between the full distribution $\cD$ and a finite sample $S$.\footnote{This requirement is stronger than the testability required by Rossellini et al.\ \cite{test-action}, since they allow an arbitrary estimator to be used on the sample $S$ in order to estimate $\CAL(\cD)$, whereas we require the calibration measure itself to be an accurate plug-in estimator. For the calibration measures we discuss in this paper, the two notions give the same bounds.}  Unfortunately, \Cref{example:2} shows that being testable is incompatible with providing the actionable no-swap-regret guarantee for the best-response function, even when we only require testability on perfectly calibrated distributions $\cD$ in the limit of infinite sample size.  Formally, \Cref{example:2} implies the following claim:
\begin{claim}
\label{claim:impossibility}
No calibration measure $\CAL$ can satisfy the following two properties at the same time:
\begin{enumerate}
    \item \textbf{Actionable for the best-response function.} For every distribution $\cD$ of $(p,y)\in [0,1]\times \{0,1\}$ and every decision task $Z = (A,U)$ with bounded utility function $U:A\times \{0,1\}\to [0,1]$, we require $\sr_Z(r_Z^*,\cD) \le \CAL(\cD)$.
    \item \textbf{Asymptotically testable for calibrated distributions.} Let $\cD$ be a \emph{calibrated} distribution of $(p,y)\in [0,1]\times \{0,1\}$. Let $S_T$ be a sample of $T$ i.i.d.\ points from $\cD$. We require $\lim_{T\to +\infty}\E[\CAL(S_T)] = 0$.
\end{enumerate}
\end{claim}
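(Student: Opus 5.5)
The argument is by contradiction, and the sample-level computation of \Cref{example:2} does almost all of the work. Suppose $\CAL$ satisfied both properties. Let $\cD$ be the calibrated distribution from \Cref{example:2}: draw $p$ uniformly from $[1/3,2/3]$, then draw $y\sim\Ber(p)$. Let $Z=(A,U)$ be the squared-loss task with $A=[0,1]$ and $U(a,y)=1-(a-y)^2$, whose utility lies in $[0,1]$. The plan has two steps.

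\textbf{Step 1: apply actionability to the empirical distribution.} Property 1 quantifies over \emph{every} distribution of $(p,y)\in[0,1]\times\{0,1\}$. A sample $S_T$, viewed as the uniform distribution on its $T$ points, is one such distribution. So property 1 gives $\CAL(S_T)\ge \sr_Z(r_Z^*,S_T)$ for every realization of the sample.

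\textbf{Step 2: lower-bound the empirical swap regret.} The predictions $p_1,\dots,p_T$ are continuous and i.i.d., so they are almost surely distinct. On that event, the swap $\sigma$ defined by $\sigma(p_t)=y_t$ (and arbitrary elsewhere) is well defined. This gives
\[
\sr_Z(r_Z^*,S_T)\ \ge\ \frac1T\sum_{t=1}^T (p_t-y_t)^2 .
\]
Each term is at least $\min(p_t,1-p_t)^2\ge 1/9$, because $p_t\in[1/3,2/3]$. Therefore $\CAL(S_T)\ge 1/9$ almost surely, and hence $\E[\CAL(S_T)]\ge 1/9$ for every $T$. This contradicts property 2, which requires $\lim_{T\to\infty}\E[\CAL(S_T)]=0$ for the calibrated distribution $\cD$.

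The argument has no real obstacle; three points just need care. First, $\CAL(S_T)$ means $\CAL$ evaluated on the empirical distribution, exactly as the paper stipulates, and this is what lets us apply property 1 to samples. Second, the event that all predictions are distinct has probability one, so a measure-zero failure does not affect the expectation. Third, $r_Z^*(p)=p$, because $p$ maximizes $1-\E_{y\sim\Ber(p)}(a-y)^2$ over $a$; this is what identifies the regret with $(p_t-y_t)^2-(\sigma(p_t)-y_t)^2$.
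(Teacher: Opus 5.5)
Your proof is correct and is exactly the paper's argument. The paper derives the claim from \Cref{example:2}: it applies the actionability requirement to the empirical distribution $S_T$ of the calibrated distribution with $p$ uniform on $[1/3,2/3]$, under the squared-loss task, where the swap $\sigma(p_t)=y_t$ forces $\CAL(S_T)\ge \sr_Z(r_Z^*,S_T)\ge 1/9$ almost surely for every $T$. Your extra points (that $r_Z^*$ is the identity, that distinctness holds with probability one, and that property 1 applies to empirical distributions) are the details the paper leaves implicit.
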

Thus, in order to achieve both actionability and testability, we will need to somehow relax one of the two properties.  Specifically, we introduce and motivate a slight variant of actionability that allows us to circumvent this negative result, while preserving the spirit of actionability.  In \Cref{sec:approach} we motivate our relaxation and compare it to other approaches to relaxing actionability.

\paragraph{Approaches to bypassing \Cref{claim:impossibility}}
To overcome the impossibility from \Cref{claim:impossibility}, recent work shows that actionable and testable calibration measures can exist if we weaken actionability by considering only a restricted family of swap functions. Instead of taking the supremum over all swap functions $\sigma:A\to A$ as in \eqref{eq:intro-swap}, the work of \cite{test-action} focuses on swap functions that correspond to a monotone transformation of the predictions.  They introduce the \emph{cutoff calibration error} ($\cutoff$) and show that it is actionable for this restricted family of swap functions while being testable with a vanishing estimation error $O(T^{-1/2})$. The work of \cite{efficient-decision} studies calibration measures that are actionable w.r.t.\ a general family  $\mathcal K$ of swap functions and shows that the estimation error is characterized by the VC dimension of $\mathsf{thr}(\mathcal K)$, the concept class consisting of thresholds applied to $\mathcal K$. If we were to include all swap functions in $\mathcal{K}$, then we would have the strict definition of actionability for the best-response function, but then the VC dimension of would be infinite and the measure would not be testable.

However, restricting the family of swap functions weakens the decision-theoretic meaning of calibration and introduces arbitrariness in the choice of the swap functions. Intuitively, considering only monotone swaps means we only measure miscalibration from a systemic upward or downward bias in the probabilities, and do not consider the possibility that the predictor may be miscalibrated in other ways such as systemic under or overconfidence.  %

\subsection{Our approach: specifying the response function} \label{sec:approach}

Our goal is to design a testable and actionable calibration measure while preserving the strong decision-theoretic guarantee that we should be able to trust calibrated predictions and use them to make good decisions.  The key idea, 
inspired by previous work such as \cite{smoothECE,smooth-decision}, is to change the response function in the definition of actionability to one that is more robust than the exact best-response function $r_Z^*$, and measure the swap regret with respect to that response function.  For example, we use (a variant of) the response function that discretizes the predictions to some appropriate level of precision and then treats the discretized predictions as if they were real probabilities and applies best response.  Intuitively, the goal of the strict definition of actionability is to ensure that the predictions can be used as if they were real probabilities to get strong decision-theoretic guarantees.  By changing the response function to discretize the probabilities, we are instead saying that the high order bits of the predictions can be treated as if they were real probabilities and used to get strong decision-theoretic guarantees.  Thus, this relaxation preserves the spirit of actionability by still giving a bound on the swap regret provided that the predictions are used in an appropriate way.  We note that unlike restricting the class of swap functions, which makes difficult-to-verify assumptions about the predictor and its potential error, changing the response function only requires the user of the predictions to adjust their behavior in a way that is entirely under their own control.

We can see the value of this approach by revisiting \Cref{example:2}. The swap regret in \Cref{example:2} is high because every prediction $p_1,\ldots,p_T$ is distinct, and can be mapped to a distinct action by the identity function $r_Z^*(p) = p$, and thus the swap regret is high because the decision-maker will only see one example of each predicted probability, and in hindsight might have chosen any sequence of actions.  If we instead bucket the predictions and map predictions in the same bucket to the same action, the swap regret can be significantly reduced because the decision maker now sees many examples of the same bucket of predictions and will get a better sense of the true probability for the predictions in that bucket.

Once we stop insisting on using the exact best-response function $r_Z^*$, the impossibility in \Cref{claim:impossibility} no longer applies and we can hope for a testable and actionable calibration measure for full swap regret.  We can formalize this objective in the following basic question.

\begin{question}
\label{question}
Find a calibration measure $\CAL_T$ that satisfies the following properties:\footnote{To make the question easier, we allow the calibration measure to depend on the sample size $T$. This dependency is often not needed. The calibration measure $\scdl$ we introduce will not depend on $T$.}
    \begin{enumerate}
        \item \textbf{Actionable for \emph{some} response function.} For every decision task $Z = (A,U)$ with bounded utility function $U:A\times \{0,1\}\to [0,1]$ and every $\alpha \ge 0$, there exists a (possibly randomized)\footnote{We formally define the swap regret of a randomized response function in \Cref{def:swap}. The definition is in-expectation, but our calibration measure $\scdl$ will satisfy a stronger high-probability guarantee as well.} response function $r_{Z,\alpha}$ such that for every distribution $\cD$ on $[0,1]\times \{0,1\}$ satisfying $\CAL_T(\cD) \le \alpha$, the decisions made by applying $r_{Z,\alpha}$ to the prediction have low swap regret.  Formally, it holds that $\sr_Z(r_{Z,\alpha},\cD)\le \alpha$.  Importantly, the response function cannot depend on the distribution $\cD$, which the decision maker does not know. 
        
        \item \textbf{$\varepsilon(T)$-testable for calibrated distributions.} Let $\cD$ be a \emph{calibrated} distribution on $[0,1]\times \{0,1\}$. Let $S_T$ be a sample of $T$ i.i.d.\ points from $\cD$. It holds that $\E[\CAL_T(S_T)] \le \varepsilon(T)$.  We want $\varepsilon(T)$ as small as possible.
    \end{enumerate}
\end{question}

Intuitively, the strictest definition of actionability requires the response function to be exact best-response, which interprets the prediction $p$ as an exact probability and then chooses a response by calculating the utility-maximizing action, and requires that decisions made in this way have low swap regret.  However, this strict definition leads to the impossibility result of \Cref{claim:impossibility}.  In our relaxation, we get to specify some alternative way of using the predicted probabilities to make decisions and provide bounds on the swap regret of those decisions.  As discussed above, our alternative response function will roughly be to discretize the probabilities to a suitable precision and then apply best response, which we believe is a natural way of interpreting predicted probabilities.

\paragraph{Prior work on answers to \Cref{question}.}
Many existing calibration measures satisfy the two properties in \Cref{question} with vanishing estimation error $\varepsilon(T) \to 0$ as $T \to +\infty$. 
The best error rate we can get using measures proposed in prior work is $\varepsilon(T) = O(T^{-1/3})$, achieved by a binned variant of $\ECE$. Specifically, 
we choose $m \approx T^{1/3}$ and define a discretization function $b_m:[0,1]\to \{0, 1/m,2/m,\ldots,1\}$ that rounds predictions $p\in [0,1]$ down to the nearest multiple of $1/m$.  Given a distribution $\cD$ of prediction-outcome pairs $(p,y)$, we let $\cD_m$ denote the distribution of the discretized predictions and outcomes $(b_m(p),y)$. We define \emph{binned ECE} ($\bECE_m$) by applying $\ECE$ to the discretized distribution $\cD_m$ and multiplying the result by $2$. That is, $\bECE_m(\cD):= 2\,\ECE(\cD_m)$.

This definition of $\bECE_m$ is actionable when decision makers use a response function $r_Z$ that is slightly different from the best-response function $r_Z^*$.
Specifically, when a decision maker sees a prediction $p$, they should apply the best-response function $r_Z^*$ \emph{to the discretized prediction $b_m(p)$}. That is, we use $r_Z:= r_Z^*\circ b_m$ as the response function for decision task $Z$. 
It is known that $2\,\ECE$ is an upper bound on $\cdl$, so
$\bECE_m(\cD) = 2\, \ECE(\cD_m)$ is a valid upper bound on the swap regret $\sr_Z(r_Z^*,\cD_m)$, which is equal to $\sr_Z(r_Z,\cD)$ because the discretization $b_m$ in $r_Z = r_Z^*\circ b_m$ turns $\cD$ into $\cD_m$. This shows that $\bECE$ satisfies the actionability requirement of \Cref{question}.

The testability $\varepsilon(T) = O(T^{-1/3})$ of $\bECE_m$ can be proved by showing that the expected value of $\bECE_m(S)$ for a sample $S$ of $T$ i.i.d.\ points from any calibrated distribution $\cD$ is $O(T^{-1/3})$. Roughly, this can be obtained by summing up the discretization error $O(1/m) =O (T^{-1/3})$ and a sampling error which is of order $O(\sqrt{m/T}) = O(T^{-1/3})$. Choosing $m \approx T^{1/3}$ balances the two terms and achieves the optimal testability for binned $\ECE$. The discretization $b_m$ can be replaced by adding random noise to the predictions, where the absolute value of the noise is around $1/m\approx T^{-1/3}$. This leads to a calibration measure called \emph{smooth ECE} introduced by \cite{smoothECE} that is also actionable and achieves the same $O(T^{-1/3})$-testability.

In contrast, the cutoff calibration error \cite{test-action} has stronger testability $O(T^{-1/2})$, but it is not fully actionable because it only bounds swap regret for monotone swaps.  Our goal is to achieve the same testability as cutoff calibration error while satisfying the stronger actionability requirement of \Cref{question}.

The smooth calibration error $\smCE$ \cite{smooth,utc} also achieves the desired $O(T^{-1/2})$-testability. 
However, it is actionable only \emph{after we take its square root} \cite{smooth-decision,importance}. Taking the square root compromises its testability from $O(T^{-1/2})$ to $O(T^{-1/4})$, which becomes even worse than the $O(T^{-1/3})$-testability guarantee of binned $\ECE$. 
The square root is indeed necessary to make the smooth calibration error actionable, which is shown by an analysis in \cite{smooth-decision}. We present a simplified version of this analysis in \Cref{sec:quadratic-smooth}.
Similarly, in \Cref{thm:gap-cutoff} we show that if one wishes to make the cutoff calibration error fully actionable by taking its $\alpha$-th power, it is necessary to take $\alpha \le 2/3$ even when arbitrary response functions are allowed, compromising its testability from $O(T^{-1/2})$ to $O(T^{-1/3})$. For a summary of calibration measures satisfying \Cref{question}, see \Cref{tab:intro}.

\begin{table}[t]
\centering
\begin{tabular}{lll}
\toprule
Measure & Actionability & Testability \\
\midrule
$\cdl$  & {\color{blue} Full} & $\Theta(1)$ \\
$\smCE$  & None & ${\color{blue} \Theta(T^{-1/2})}$ \\
$\smCE^{1/2}$    & {\color{blue} Full} & $\Theta(T^{-1/4})$ \\
$\cutoff$  & Monotone & ${\color{blue} \Theta(T^{-1/2})}$ \\
$\cutoff^\alpha$ & {\color{blue} Full only for $\alpha \le 2/3$} & $\Theta(T^{-\alpha/2}) = \Omega(T^{-1/3})$\\
Binned/Smooth \textsf{ECE}   & {\color{blue} Full} & $\Theta(T^{-1/3})$ \\
\hline
$\scdl$ (Our Work)   & {\color{blue} Full} & ${\color{blue} O(T^{-1/2}\log T)}$  \\
\bottomrule
\end{tabular}
\vspace{2pt}
\caption{$\scdl$ achieves full actionability and near-optimal testability. {\color{blue} Blue} text denotes the most desirable guarantee, while black text denotes a suboptimal guarantee.}
\label{tab:intro}
\end{table}

\subsection{Our results: a calibration measure with nearly optimal actionability and testability}
\label{sec:contribution}
Our first contribution is an improved solution to \Cref{question} achieving $O(T^{-1/2}\log T)$-testability. We also prove a lower bound of $\Omega(T^{-1/2})$ showing that our solution is tight up to a logarithmic factor (\Cref{thm:lb}).

We prove our upper bound by designing a new calibration measure which we term \emph{Soft-Binned Calibration Decision Loss ($\scdl$)}. In addition to solving \Cref{question}, we show that $\scdl$ satisfies the following desired properties:
\begin{enumerate}
    \item $\scdl$ is \textbf{testable} not just for calibrated distributions, but for every distribution $\cD$. For a sample $S$ of $T$ i.i.d.\ points drawn from $\cD$,
    \begin{equation}
    \label{eq:intro-testable-general}
    \E|\scdl(S) - \scdl(\cD)|\le O(T^{-1/2} \log^{3/2} T).
    \end{equation}
    Moreover, the testability of $\scdl$ holds not just in expectation, but also with high probability. We formally state and prove the testability of $\scdl$ in \Cref{thm:testable}.  When $\scdl(\cD) = 0$ we prove a slightly stronger bound that $\E[\scdl(S)] = O(T^{-1/2} \log T)$.

    \item $\scdl$ is \textbf{actionable} for the full swap regret with a simple response function $r_{Z,\alpha}$. Similarly to our earlier discussion about binned ECE, we define $r_{Z,\alpha}(p)$ by applying the best-response function $r_Z^*$ to a discretized version $p'$ of the given prediction $p$. The discretization is very basic and does not depend on the decision task $Z$. We discuss the actionability of $\scdl$ in more detail in \Cref{sec:technical}.

    \item $\scdl$ is \textbf{sound and complete}. That is, a distribution $\cD$ satisfies $\scdl(\cD) = 0$ if and only if it is perfectly calibrated.

    \item $\scdl$ is a \textbf{consistent} calibration measure in the framework of \cite{utc}. That is, $\scdl$ is polynomially, in fact quadratically, related to the \emph{lower distance to calibration} $\distcal$ (\Cref{thm:smce}):
    \begin{equation}
    \label{eq:intro-relation}
    \Omega(\distcal(\cD)^2) \le \scdl(\cD) \le O(\sqrt{\distcal(\cD)}).
    \end{equation}
    The lower distance to calibration $\distcal$ (\Cref{def:distcal}) is introduced by \cite{utc} as a sound and complete calibration measure. Thus the quadratic relationship above implies the soundness and completeness of $\scdl$. We also show that both the upper and lower bounds in \eqref{eq:intro-relation} are tight up to constant factors (\Cref{remark:tight}).
    
    \item $\scdl$ is \textbf{continuous} as a function of the predictions. That is, small changes to the predictions can only cause small changes in $\scdl$. In \Cref{thm:continuity}, we prove that 
    $\scdl$ is $\frac12$-H\"older continuous.
\end{enumerate}
We complement our theoretical results with experiments following the setup of \cite{test-action}. On the testability side, $\scdl$ exhibits the smallest estimation variance across most settings, suggesting it can be estimated more reliably from finite samples than the other measures. On the actionability side, we evaluate the correlation between each calibration measure and swap regret across a range of prediction distributions. For a predictor that corresponds to a monotone optimal swap function, $\scdl$, $\ECE$ and $\cutoff$ correlate well with swap regret. However, when the optimal swap function is not monotone, $\cutoff$ loses its correlation with swap regret while $\scdl$ continues to track it, illustrating the advantage of not restricting the swap function class. %

\subsection{Technical overview}
\label{sec:technical}
\paragraph{Definition of $\scdl$.} 
We define $\scdl$ as a soft-binned variant of $\cdl$. Similarly to how binned ECE is actionable, the actionability of $\scdl$ also follows naturally from its definition. Our main technical contribution lies in proving the near-optimal testability of $\scdl$, which we discuss later. 

We start by defining the quantity $\scdl_m$, for a positive integer $m$.  First we discretize predictions into a finite grid $G_m:=\{0,1/m,2/m,\ldots,1\}$, where we view each grid point as a bin. Specifically, given a prediction $p$, we randomly round it to the closest multiple of $1/m$ either above or below, which we call $p'$, and we choose the probability of rounding up or down so that $\E p' = p$. For every distribution $\cD$ of $(p,y)\in [0,1]\times \{0,1\}$, we use $\cD_m$ to denote the distribution of $(p',y)\in G_m\times \{0,1\}$ where we use rounded predictions.

Given this discretization, the natural choice is to define $\scdl_m(\cD) = \cdl(\cD_m)$ to be the calibration decision loss of the discretized predictions.  While this is almost what we do, we make two technical modifications.  The first modification is to replace $\cdl(\cD_m)$ with an explicit formula that gives a factor-$2$ approximation to $\cdl$ and is easier to analyze (\Cref{lm:cdl}).  The second modification is to remove small bin-wise prediction bias up to $1/m$.  We make the second modification because the discretization can cause even a perfectly calibrated distribution to have nonzero $\cdl(\cD_m)$, and we want to subtract this rounding error off so that $\scdl_m(\cD) = 0$ when the predictions are perfectly calibrated. This modification makes $\scdl_m(\cD)$ non-decreasing as $m = 2^k$ grows as a power of two, which will help us choose the right discretization granularity $m$ to establish testability.

For any choice of $m$, the quantity $2(\scdl_m(\cD) + 1/m)$ satisfies the actionability requirement of \Cref{question}. To see why this is the case, first we have $\scdl_m(\cD) \approx \cdl(\cD_m)$, and $\cdl(\cD_m)$ is a valid upper bound on the swap regret $\sr_Z(r_Z^*,\cD_m)$ of the best-response function $r_Z^*$ applied to the discretized distribution $\cD_m$.  Next, the response function we define will take a prediction $p$, discretize it in the same way to obtain $p'$, and then apply the best-response function to $p'$.  Therefore, the swap regret of this new reponse function on the original distribution $\cD$ is the same as the swap regret of the best-response function on $\cD_m$.  The factor $2$ is due to the approximation formula for $\cdl$ and the extra $1/m$ term is to correct for the rounding error that we removed from the definition of $\scdl_m$. 

To complete the definition of $\scdl$, it remains to choose the value of $m$ appropriately. While the choice of $m$ does not affect the actionability of $2(\scdl_m(\cD) + 1/m)$, it makes a crucial difference in terms of testability. Indeed, $\cdl$ can be viewed as a special case of $\scdl_m$ with $m = +\infty$, but $\cdl$ is not testable. It turns out that to obtain our $\varepsilon(T) = O(T^{-1/2}\log T)$-testability guarantee, it suffices to choose $m \approx 1/\varepsilon(T)$. This choice, however, has the disadvantage of depending on the sample size $T$. Also, as we will discuss shortly, it only provides testability for calibrated distributions $\cD$ and does not satisfy the testability in \eqref{eq:intro-testable-general} for general distributions. Instead, we choose $m$ in a self-balancing way so that $\scdl_m(\cD)$ matches the rounding error $1/m$. Concretely, we choose $m = 2^k$ as the power of $2$ that minimizes $\max\{\scdl_m(\cD),1/m\}$ and define $\scdl(\cD)$ as this minimum value. We show that $\scdl_{2^k}(\cD)$ is an increasing function of $k$ whereas $1/m$ is a decreasing function. Thus, by minimizing the maximum of the two quantities, our choice of $m$ ensures a balance between them, and the final error $\scdl(\cD)$ is guaranteed to lie in the interval $[1/m,2/m)$.  This idea of choosing $m$ to balance $1/m$ and the binned calibration error is inspired by the \emph{interval calibration error} from \cite{utc} and it turns out to be essential for establishing our testability guarantees. 

Formally, we define $\scdl$ as follows.
\begin{definition}[Soft-Binned CDL $\scdl(\cD)$]
\label{def:scdl}
    Let $\cD$ be a distribution of $(p,y)\in [0,1]\times \{0,1\}$.  For every positive integer $m$, we define $\cD_m$ as the distribution of $(p',y)$, where $p'$ is obtained by randomly rounding $p$ up or down to the closest multiple of $1/m$ while ensuring $\E[p'|p,y] = p$.
    For every $i = 0,\ldots,m$, define the weight $\pi_i$ and the true conditional probability $q_i$ as follows:
    \[
    \pi_i:= \Pr\nolimits_{(p',y)\sim \cD_m}[p' = i/m], \quad q_i:= \E_{(p',y)\sim \cD_m}[y|p' = i/m].
    \]
For every $i = 0,\ldots,m$, define the $m$-soft-binned CDL $\scdl_m(\cD)$ as the following quantity that approximates $\cdl(\cD_m)$ (see \Cref{lm:v-shape}), where $x_+$ denotes $\max\{x,0\}$:
\[
\scdl_m(\cD):= \max_{i = 0,\ldots,m}\left(\sum\nolimits_{j = 0,\ldots,i} \pi_j(q_{j} - (i+1)/m)_+ + \sum\nolimits_{j=i+1,\ldots,m}\pi_j(i/m - q_j)_+\right).
\]
Finally, we define the soft-binned CDL of $\cD$ by choosing $m$ as a power of two that balances $1/m$ and $\scdl_m(\cD)$. That is,
$
\scdl(\cD) := \inf\nolimits_{m \in \{ 2^1,2^2,\ldots\}}\max\{\scdl_m(\cD),1/m\}.
$
\end{definition}

\paragraph{Actionability of $\scdl$.} 
    As discussed, the actionability of $\scdl$ follows naturally from its definition.
    Suppose we are given a predictor where the joint distribution $\cD$ of $(p,y)$ satisfies $\scdl(\cD) \le \alpha$. Here is how we use it to achieve low swap regret on a decision task $Z = (A,U)$ with a bounded utility function $U:A\times \{0,1\}\to [0,1]$.  If $\alpha = 0$, then $\cD$ is guaranteed to be perfectly calibrated and we simply use the best-response function $r_{Z,0} := r_Z^*$ to achieve zero swap regret $\sr_Z(r_Z^*,\cD) = 0$. If $\alpha > 0$, we find $m\in \{2^1,2^2,\ldots,\}$ such that $\alpha\in [1/m,2/m)$ and randomly round the prediction $p$ up or down to the closest multiple $p'$ of $1/m$ while ensuring $\E[p'] = p$. We take the best action $a = r_Z^*(p')$ based on the rounded prediction $p'$. That is, $r_{Z,\alpha}(p) := r_Z^*(p')$. We prove the following swap regret guarantee in \Cref{thm:actionability}:
    \begin{equation}
    \label{eq:intro-actionable}
    \sr_Z(r_{Z,\alpha},\cD) \le 2\,\scdl(\cD) + 2/m \le 4\alpha.
    \end{equation}
    This implies that $\scdl$ satisfies the full actionability required in \Cref{question}. Note that we incur an additional factor of $4$, so we need to set $\CAL = 4\,\scdl$ in \Cref{question} to satisfy the actionability requirement as stated. We choose not to include this factor of $4$ in the definition of $\scdl$ for a clearer presentation.

    The response function $r_{Z,\alpha}$ that achieves the swap regret guarantee \eqref{eq:intro-actionable} is randomized due to randomness in the rounding procedure. 
    We formally define the swap regret of a randomized response function $r:[0,1]\to \Delta(A)$ in \Cref{def:swap}, where the regret against each swap function $\sigma:A\to A$ is calculated in expectation over actions sampled from $r(p)$.
    In \Cref{thm:actionability-hp}, we show that the actionablity guarantee of $\scdl$ holds  not just in expectation, but also with high probability. 
    
    We remark that we can replace the randomized rounding in the definition of $\scdl$ with a deterministic rounding procedure, where every $p$ is rounded to the largest multiple $p'$ of $1/m$ that does not exceed $p$. This ``hard-binned'' variant preserves all the testability and actionability properties of $\scdl$ following the same analysis in our work. However, this variant does not enjoy the continuity of $\scdl$.

    \paragraph{Testability of $\scdl$.} Proving the testability of $\scdl$ is the most technically challenging part of our work. To bound the estimation error $|\scdl(S) - \scdl(\cD)|$ between the full distribution $\cD$ and a random sample $S$, the first step is to bound the estimation error for the conditional expectation $\E[y|p' = i/m]$ for each bin $i/m$, where $p'$ is the rounded version of $p$. Roughly speaking, a standard concentration argument shows that on average, each bin incurs an estimation error in the order of $\sqrt{m/T}$. Using this analysis, to get our final testability guarantee around $T^{-1/2}$, we are only allowed to have roughly $m = O(1)$ bins, in which case the rounding error $1/m$ will dominate, resulting in a large testability error.

To improve this basic analysis, we use a key result from Hu and Wu \cite{cdl} who showed that for $\cdl$, the errors from all bins do not contribute equally to the final estimation error. When $\cD$ is calibrated, they show that, roughly, only a $\sqrt{m/T}$ fraction of the bin-wise estimation errors contribute to the final error. This analysis shows that the estimation error reduces to roughly $m/T$. When we choose $m \approx T^{1/2}$, this $m/T$ estimation error and the rounding error $1/m$ are both of order $T^{-1/2}$.

However, this argument is limited to calibrated distributions $\cD$, and substantial new ideas are needed to prove the testability result for a general distribution $\cD$. We show that as $\scdl(\cD)$ grows, the estimation error deteriorates smoothly by roughly an additive $O(\sqrt{m\cdot \scdl(\cD)/T})$ term. Importantly, since $\scdl(\cD) \le 1$, this bound is better than the naive bound of $O(\sqrt{m/T})$.  However, for a highly miscalibrated distribution $\cD$ with $\scdl(\cD) = \Omega(1)$, the two bounds are of the same order and we can no longer choose $m \approx T^{1/2}$ to get the desired testability bound.  Instead, we would like to choose $m$ as a function of $\scdl(\cD)$, so that when $\scdl(\cD)$ is large, we can compensate for higher estimation error by taking fewer bins.  
This is accomplished by our self-balancing choice of $m$, which ensures that $m\cdot \scdl(\cD) = O(1)$ for \emph{all} distributions $\cD$.
Plugging this into the error bound gives the desired $\approx \sqrt{1/T}$ error. We summarize this result in the following theorem:
\begin{theorem}[$\scdl$ is testable, see \Cref{sec:testability} for proof of a stronger high-probability result]
\label{thm:testable-main}
    Let $S$ be a sample of $T\ge 2$ data points $(p_1,y_1),\ldots,(p_T,y_T)$ drawn i.i.d.\ from a distribution $\cD$ on $[0,1]\times \{0,1\}$. Then, in expectation over $S$, we have the bound
$
    \E|\scdl(S) - \scdl(\cD)| \le O(T^{-1/2}\log ^{3/2} T).
$
Moreover, when $\scdl(\cD) = 0$, we have $\E[\scdl(S)] = O(T^{-1/2}\log T)$.
\end{theorem}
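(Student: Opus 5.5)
The plan is to reduce everything to a uniform-in-$m$ estimation bound for $\scdl_m$ over powers of two $m\le M$, with $M\approx\sqrt T$, and then use the self-balancing structure of $\scdl$ to move from $\scdl_m$ to $\scdl$.

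\textbf{Step 1: a bin-wise error bound for fixed $m$.} For a fixed power of two $m$, write $\scdl_m(\cD)=\max_i F_i(\cD)$. Here each $F_i$ is a sum over bins $j$ of $\pi_j(q_j-c_{ij})_+$, with thresholds $c_{ij}\in\{i/m,(i+1)/m\}$ and with a sign flip on one side of $i$. Define $\pi_j q_j=\Pr[p'=j/m,\,y=1]$ and its empirical analogue $\hat\pi_j\hat q_j$. Each summand $\pi_j(q_j-c)_+=(\pi_jq_j-c\pi_j)_+$ is $1$-Lipschitz in the pair $(\pi_jq_j,\pi_j)$. So the estimation error of bin $j$ is at most $|\hat\pi_j\hat q_j-\pi_jq_j|+|\hat\pi_j-\pi_j|$, and each of these has standard deviation $O(\sqrt{\pi_j/T})$.

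The naive union over bins only gives $\sqrt{m/T}$. The refinement, in the spirit of Hu and Wu \cite{cdl}, is that bin $j$ contributes to $F_i$ only when $(q_j-c_{ij})_+$ is nonzero, or when the empirical value crosses the threshold. Bins with $|q_j-c_{ij}|\gtrsim$ their fluctuation $\sqrt{1/(T\pi_j)}$ contribute linearly through their bias. Bins near the threshold contribute at most $\pi_j\cdot\sqrt{1/(T\pi_j)}\cdot(\text{indicator of a crossing})$.

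Summing, I aim for a bound of the form
\[
|\scdl_m(S)-\scdl_m(\cD)|\lesssim \frac{m\log T}{T}+\sqrt{\frac{m\,(\scdl_m(\cD)+1/m)\log T}{T}}
\]
with high probability.
\begin{itemize}
\item The first term handles bins whose true conditional mean sits within $1/m$ of the thresholds. The key point is that there are $O(1)$ thresholds per bin, and each bin has width $1/m$.
\item The second term comes from bins with positive excess. The total mass-weighted excess $\sum_j\pi_j\cdot(\text{excess})$ is controlled by $\scdl_m(\cD)+1/m$, using the removal of the $1/m$ bias and the V-shape structure of \Cref{lm:v-shape}. A Cauchy--Schwarz step then gives $\sum_j\sqrt{\pi_j\,\mathrm{excess}_j/T}\le\sqrt{m\sum_j\pi_j\mathrm{excess}_j/T}$.
\end{itemize}
A Bernstein plus union bound over the $m+1$ indices $i$ and over $O(\log T)$ values of $m$ costs the log factors.

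\textbf{Step 2: passing from $\scdl_m$ to $\scdl$.} I use the monotonicity of $\scdl_{2^k}$ in $k$ (stated in the overview). It implies that $\scdl(\cD)\in[1/m^*,2/m^*)$ for the balancing $m^*$. Only $m\le\min(m^*,M)$ or so matter, and for those $m\cdot\scdl_m(\cD)=O(1)$. The Step 1 bound then becomes $O(\sqrt{\log T/T}+m\log T/T)$.

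I will show $\scdl(S)\le\scdl(\cD)+\epsilon$ by evaluating $\max\{\scdl_m(S),1/m\}$ at $m=\min(m^*,\sqrt T)$. If the minimizer $m^*$ exceeds $\sqrt T$, the $1/m$ term costs $T^{-1/2}$. For the reverse direction, $\scdl(S)\ge\scdl(\cD)-\epsilon$, let $\hat m$ be the empirical minimizer. Either $1/\hat m$ already dominates, or $\scdl_{\hat m}(S)\ge\scdl_{\hat m}(\cD)-\epsilon$. In the latter case we need $\hat m\lesssim m^*\cdot O(1)$ for the Step 1 bound to apply. I would argue this by monotonicity: when $\scdl_m(\cD)\gg1/m$, the empirical value at twice $m^*$ still exceeds $1/m$.

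The expectation bound follows from the high-probability bound together with boundedness by $1$. When $\scdl(\cD)=0$, the distribution is calibrated, every excess vanishes, and only the $m\log T/T$ term appears with $m=\sqrt T$. This gives $O(T^{-1/2}\log T)$ without the extra $\sqrt{\log T}$.

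\textbf{Main obstacle.} The hardest step is Step 1's refined bound for general, miscalibrated $\cD$: showing that the error scales with $\sqrt{m\,\scdl_m(\cD)/T}$ rather than $\sqrt{m/T}$. This requires relating the sum over bins of $\pi_j\cdot$(distance of $q_j$ to threshold) to $\scdl_m(\cD)$. The difficulty is that the maximizing $i$ in $\scdl_m$ only captures one-sided excess, whereas estimation errors can arise on both sides. I would first attempt to bound the two-sided excess by twice the max using the V-shape lemma, taking $i$ at the two extremal indices.
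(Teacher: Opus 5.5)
Your high-level architecture matches the paper's. It has a per-$m$ concentration bound whose error scales like $\sqrt{m\,\scdl_m(\cD)/T}$, and the self-balancing $m^*$ with $m^*\scdl_{m^*}(\cD)<2$. The upper bound is evaluated at $\min\{m^*,m_{\max}\}$, and the lower bound uses \Cref{lm:monotone} on the empirical $S$ to reduce all $m>m^*$ to $m=2m^*$. However, the step you flag as the main obstacle is where essentially all the work lies, and the mechanism you sketch for it does not work.

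(i) Once bin $j$ is active for threshold $i$ (i.e.\ $q_j$ or $\hat q_j$ exceeds $(i+1)/m$), its term fluctuates by order $\sqrt{\pi_j/T}$ however small the excess is. The variance of $w_j(p)(y-c)$ does not shrink with the excess. So the quantity to control is $\sum_{j\in H}\sqrt{\pi_j}$ over the active set $H$, not $\sum_j\sqrt{\pi_j\,\mathrm{excess}_j}$; your Cauchy--Schwarz step is applied to the wrong quantity.

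(ii) Membership in $H$ is not governed by excess relative to the current threshold. A low-weight bin can cross by fluctuation from far away, since $|\hat q_j-q_j|\approx\alpha/\sqrt{\pi_j}$ with $\alpha\approx\sqrt{\log(1/\delta)/T}$ can far exceed $1/m$. So ``within $1/m$ of the threshold'' is the wrong criterion. Likewise, a bin with $j\ll i$ but $q_j$ just below $(i+1)/m$ is active while contributing nothing to $\scfdl_{m,i}(\cD)$.

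(iii) Your fallback does not help. If ``two-sided excess'' means $\sum_j\pi_j|q_j-j/m|$, this is essentially $\ECE(\cD_m)$ and can be of order $\sqrt{\scdl_m(\cD)}$ rather than $O(\scdl_m(\cD))$ (cf.\ \Cref{remark:tight}). The extremal indices carry no information, since $\scfdl_{m,m}(\cD)=0$ and $\scfdl_{m,0}(\cD)=\pi_0(q_0-1/m)_+$.

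The missing idea is \Cref{lm:testable-helper}: split bins into weight classes $\pi_j\in(2^{-k-1},2^{-k}]$ and charge far-away active bins to a \emph{shifted} threshold.
\begin{itemize}
\item In class $k$, active bins within index distance about $\lceil 2^{(k+1)/2}\alpha m\rceil$ of $i$ number $O(2^{k/2}\alpha m)$.
\item The remaining active bins in that class have $j\le i_k-2$ and $q_j>i_k/m$, where $i_k=i-\lceil 2^{(k+1)/2}\alpha m\rceil$.
\item If there are $s$ of them, at least $s/2$ lie at or below $i_k'=i_k-\lceil s/2\rceil-1$. Each exceeds $(i_k'+1)/m$ by at least $s/(2m)$.
\item Hence $\scdl_m(\cD)\ge\scfdl_{m,i_k'}(\cD)\ge 2^{-k-1}(s/2)^2/m$, i.e.\ $s=O(2^{k/2}\sqrt{m\,\scdl_m(\cD)})$.
\item Weighting by $\sqrt{\pi_j}\le 2^{-k/2}$ and summing over $O(\log m)$ classes gives $\sum_{j\in H}\sqrt{\pi_j}\le 3+O(\log m)(\alpha m+\sqrt{m\,\scdl_m(\cD)})$.
\end{itemize}
This bound carries extra $\log m$ factors relative to your target. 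So in the calibrated case you cannot just take $m=\sqrt T$; the paper instead takes $m\sqrt{\log m}\approx\sqrt{T/\log(1/\delta)}$.

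Finally, your Step 2 remark that only $m\lesssim m^*$ matter with $m\,\scdl_m(\cD)=O(1)$ does not cover the lower bound. There you must apply concentration at $2m^*$, where $m^*\scdl_{2m^*}(\cD)$ need not be $O(1)$. The paper handles this by observing that $x\mapsto x-C(\log m^*)\sqrt{m^*x\log(1/\delta)/T}$ is increasing on $[1/m^*,1]$ when $m^*\log m^*\lesssim\sqrt{T/\log(1/\delta)}$. This lets $\scdl_{2m^*}(\cD)$ be replaced by the smaller $\scdl(\cD)$.
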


\subsection{Related work}
\label{sec:related-work}

Recent research on calibration measures is largely motivated by 
Błasiok, Gopalan, Hu, and Nakkiran \cite{utc}, who studied the design of calibration measures in a systematic way. They introduce the distance to calibration as a central notion and identifies other calibration measures as \emph{consistent} if they are polynomially related to the distance to calibration. Our calibration measure $\scdl$ is quadratically related to the distance to calibration \eqref{eq:intro-relation}, so it is consistent according to the definition in \cite{utc}. Consistent calibration measures have been used to explain practical phenomena in deep learning (see e.g.\  \cite{when}).

Closely related to our goal of swap regret minimization for downstream decision makers, \emph{omniprediction} is the task of 
making predictions that allow a large family of downstream decision makers to achieve better or similar utility compared to a family of benchmark decision functions \cite{omni}. Although omniprediction does not require a swap regret guarantee, notions of calibration have been extensively used for constructing omnipredictors \cite{loss-oi,constrained-omni,performative,high-dim,equiv,omni-regression,oracle-omni,sim-mim,optimal-omni,lrs,hu2026simultaneous}.
Besides omniprediction, calibration and its variants have played important roles in the study of algorithmic fairness \cite{mc} and distributional robustness \cite{universal}.

Hu and Wu \cite{cdl} introduced the decision-theoretic calibration measure CDL motivated by the classic question of making calibrated sequential predictions \cite{asymptotic-calibration,QV,breaking-barrier,peng2025high,fishelson2026highdimensional} and the task of simultaneous online regret minimization for all downstream decision tasks \cite{u-cal,Roth-Shi}. Since \cite{cdl}, many recent works aim to find decision-theoretically meaningful calibration measures with additional properties. Besides the work on testability which we have discussed, Qiao and Zhao \cite{truthful-QZ} introduce a decision-theoretic calibration measure that is also approximately \emph{truthful}, where a truthful calibration measure incentivizes predictors to report true probabilities \cite{truthful-HQYZ,truthful}. 
The step calibration error used in \cite{truthful-QZ} can be viewed as equivalent to the cutoff calibration error from \cite{test-action}, although the two works are concurrent and independent.
The same calibration measure was also independently found to be useful for achieving omniprediction \cite{optimal-omni}. 
\subsection{Paper organization}
The rest of the paper is organized as follows. We introduce basic notation as well as definitions and properties of previous calibration measures in \Cref{sec:preli}. In \Cref{sec:basic} we prove basic properties of $\scdl$ that will be used in later sections, where we prove the actionability (\Cref{sec:actionability}), continuity (\Cref{sec:continuity}), consistency (\Cref{sec:consistency}) and testability (\Cref{sec:testability}) of $\scdl$. In \Cref{sec:lb} we prove the $\Omega(T^{-1/2})$ lower bound for \Cref{question}. Our experimental results are in \Cref{sec:experiments}. We show that previous calibration measures do not give optimal solutions to \Cref{question} in \Cref{sec:lb-smooth,sec:lb-cutoff}.

\section{Preliminaries}
\label{sec:preli}
For every $x\in \R$, we use $x_+$ to denote $\max\{x,0\}$. We use $\ind[E]$ to denote the 0-1 indicator for a statement $E$, where $\ind[E] = 1$ if $E$ is true, and $\ind[E] = 0$ if $E$ is false. For every $p\in [0,1]$, we use $\Ber(p)$ to denote the Bernoulli distribution with mean $p$. We use $\Delta(A)$ to denote the set of all probability distributions on an action set $A$.

Below we present the formal definitions of existing calibration measures that are most related to our work. We also include key properties of these measures that will be useful for proving properties of $\scdl$.
\begin{definition}[Expected Calibration Error \cite{foster1997calibrated}]
    Let $\cD$ be a distribution of $(p,y)\in [0,1]\times \{0,1\}$. The Expected Calibration Error of $\cD$ is defined as $\ECE(\cD) = \E_\cD|q - p|$, where $q$ is the conditional expectation $\E_\cD[y\mid p]$, which is a function of $p$.
\end{definition}
\begin{definition}[Calibration Decision Loss \cite{cdl}]
\label{def:cdl}
Let $\cD$ be a distribution of $(p,y)\in [0,1]\times \{0,1\}$. The Calibration Decision Loss of $\cD$ is defined as
    \[
    \cdl(\cD):= \sup\nolimits_Z \sr_Z(r_Z^*,\cD),
    \]
    where the supremum is over all decision tasks $Z = (A,U)$ with an arbitrary action space $A$ and a bounded utility function $U:A\times \{0,1\}\to [0,1]$.
\end{definition}
The following relationship between $\cdl$ and $\ECE$ is established in \cite{cdl}:
\begin{lemma}[\cite{cdl}]
\label{lm:cdl-ece}
    For every distribution $\cD$ on $[0,1]\times \{0,1\}$, it holds that
\[
\ECE(\cD)^2 \le \cdl(\cD) \le 2\, \ECE(\cD).
\]

\end{lemma}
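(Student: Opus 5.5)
We prove the two inequalities separately. The upper bound comes from a pointwise comparison of expected utilities under the predicted probability $p$ and the true conditional probability $q$. The lower bound comes from exhibiting one decision task (squared loss) whose swap regret is exactly the mean squared calibration gap.

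For every decision task $Z=(A,U)$ and every $r\in[0,1]$, write $u_r(a):=\E_{y\sim\Ber(r)}U(a,y) = rU(a,1)+(1-r)U(a,0)$. Throughout, $q=q(p)=\E_\cD[y\mid p]$.

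\textbf{Upper bound $\cdl(\cD)\le 2\,\ECE(\cD)$.} Fix a task $Z=(A,U)$ with $U\in[0,1]$ and a swap $\sigma:A\to A$. Let $a=r_Z^*(p)$. Conditioning on $p$, the regret contribution is $u_q(\sigma(a))-u_q(a)$, and we decompose it as
\[
u_q(\sigma(a))-u_q(a) = \bigl[u_q(\sigma(a))-u_p(\sigma(a))\bigr] + \bigl[u_p(\sigma(a))-u_p(a)\bigr] + \bigl[u_p(a)-u_q(a)\bigr].
\]
\begin{itemize}
\item The middle term is $\le 0$, because $a$ maximizes $u_p$.
\item For every action $b$, the outer terms satisfy $|u_q(b)-u_p(b)| = |q-p|\,|U(b,1)-U(b,0)|\le |q-p|$, since $U\in[0,1]$.
\end{itemize}
Hence the conditional regret is at most $2|q-p|$ pointwise in $p$. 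Taking expectation over $p$ bounds the regret of $\sigma$ by $2\E|q-p|=2\,\ECE(\cD)$. This bound holds for every $\sigma$ and every $Z$, so taking suprema gives $\cdl(\cD)\le 2\,\ECE(\cD)$.

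\textbf{Lower bound $\ECE(\cD)^2\le\cdl(\cD)$.} Use the task from \Cref{example:2}: $A=[0,1]$ and $U(a,y)=1-(a-y)^2\in[0,1]$. Its best response is $r_Z^*(p)=p$. Since the action equals the prediction, the map $\sigma(a):=q(a)$ is a legitimate swap function. For any action $b$,
\[
\E[(b-y)^2\mid p]=(b-q)^2+q(1-q).
\]
Applying this with $b=p$ and $b=q$, the regret of this $\sigma$ is
\[
\E[(p-y)^2-(q-y)^2]=\E[(p-q)^2].
\]
By Jensen's inequality, $\E[(p-q)^2]\ge(\E|p-q|)^2=\ECE(\cD)^2$. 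Since $\cdl(\cD)$ is at least the swap regret of this particular task and swap, we get $\cdl(\cD)\ge \ECE(\cD)^2$.

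The only point needing care is the lower bound: $\sigma$ must be a function of the action alone. That is exactly why we choose a task whose best response is injective (here, the identity), so that $q(p)$ can be read off the action.
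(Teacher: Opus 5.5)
Your proof is correct. The paper gives no proof of its own and simply cites Hu and Wu \cite{cdl}; your argument is the standard one behind that citation. The upper bound comes from the pointwise three-term split, using that $r_Z^*(p)$ maximizes $u_p$ and that $|u_q(b)-u_p(b)|\le |q-p|$. The lower bound comes from the squared-loss task of \Cref{example:2} with the swap $\sigma(a)=\E[y\mid p=a]$, followed by Jensen; the paper's \Cref{lm:cutoff-helper}, applied with $r$ the identity, confirms your computation that this task's swap regret is exactly $\E[(p-q)^2]$.
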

The following lemma gives an explicit formula for computing $\cdl$ up to a factor of $2$:
\begin{lemma}[\cite{cdl}]
\label{lm:v-shape}
Let $\cD$ be a distribution of $(p,y)\in [0,1]\times \{0,1\}$. Let $q = \E[y|p]\in [0,1]$ be a function of $p$ representing the conditional expectation of $y$ given $p$. We have
    \[
    \cdl(\cD) \le 2 \sup\nolimits_{\mu\in [0,1]} \E[(q - \mu)_+ \ind[p \le \mu] + (\mu - q)_+ \ind[p > \mu]].
    \]
We also have the reverse inequality up to a factor of $2$:
\[
\cdl(\cD) \ge \sup\nolimits_{\mu\in [0,1]} \E[(q - \mu)_+ \ind[p \le \mu] + (\mu - q)_+ \ind[p > \mu]].
\]
\end{lemma}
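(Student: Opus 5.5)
We prove the two inequalities separately. The upper bound reduces swap regret to an expected Bregman divergence of a convex function and then decomposes that function into hinge functions. The lower bound uses a two-action threshold task, enriched so that swaps can act separately on each prediction value.

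\textbf{Upper bound.} Fix a task $Z=(A,U)$ with $U\in[0,1]$. For each action write $u_a(p):=\E_{y\sim\Ber(p)}U(a,y)=U(a,0)+p\,(U(a,1)-U(a,0))$. This is affine in $p$ with slope in $[-1,1]$ and values in $[0,1]$.
\begin{enumerate}
\item \emph{Per-prediction bound on the swap.} For any swap $\sigma$, condition on $p$ and use $\E[y\mid p]=q$. The gain of $\sigma$ at $p$ is $u_{\sigma(r^*(p))}(q)-u_{r^*(p)}(q)\le V(q)-u_{r^*(p)}(q)$, where $V:=\sup_a u_a$. Hence
\[\sr_Z(r_Z^*,\cD)\le \E\bigl[V(q)-V(p)-g(p)(q-p)\bigr],\]
where $g(p)$ is the slope of $u_{r^*(p)}$. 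This $g(p)$ is a subgradient of $V$ at $p$, since $u_{r^*(p)}$ is an affine minorant of $V$ touching it at $p$.
\item \emph{Hinge decomposition.} $V$ is convex on $[0,1]$ with $g$ non-decreasing and $g\in[-1,1]$. So up to an affine function, $V(p)=\int(p-\mu)_+\,d\nu(\mu)$ for a nonnegative measure $\nu$ of total mass $g(1)-g(0)\le 2$. We take $\nu$ to be the Stieltjes measure of the chosen $g$, which encodes the tie-breaking of $r^*$. Then $g(p)=g(0)+\nu([0,p))$.
\item \emph{Evaluate each hinge.} The affine part contributes zero Bregman divergence. The hinge at $\mu$, with slope $\ind[p>\mu]$, contributes
\[(q-\mu)_+-(p-\mu)_+-\ind[p>\mu](q-p)=(q-\mu)_+\ind[p\le\mu]+(\mu-q)_+\ind[p>\mu].\]
\item \emph{Integrate.} By Fubini, $\E[\text{divergence}]\le \nu([0,1])\cdot\sup_\mu \E[\cdots]\le 2\sup_\mu\E[\cdots]$. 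Taking the supremum over $Z$ gives the first inequality.
\end{enumerate}
The main obstacle is the bookkeeping at kinks. The slope used by $r^*(p)$ when $p$ is an atom of $\nu$ must match the convention $\ind[p>\mu]$ (slope $0$ at $p=\mu$), or the tie must be handled by a limiting argument. Choosing $\nu$ from the actual subgradient selection $g$, as in step 2, is meant to make this consistent.

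\textbf{Lower bound.} Fix $\mu$. The basic threshold task has $U(1,y)=y$ and $U(0,y)=\mu$, so $u_1(p)-u_0(p)=p-\mu$, and best response plays $1$ exactly when $p>\mu$. A single swap on $\{0,1\}$ only gains $\bigl(\E[(q-\mu)\ind[p\le\mu]]\bigr)_+$, because the positive part sits outside the expectation. To put it inside, we tag each action with the prediction.
\begin{enumerate}
\item \emph{Tagged task.} Let $A=\{0,1\}\times[0,1]$ and, for small $\delta>0$,
\[U_\delta((b,s),y)=(1-\delta)U(b,y)+\delta\bigl(1-(s-y)^2\bigr)\in[0,1].\]
Best response is $r^*(p)=(\ind[p>\mu],p)$, because the quadratic part is uniquely maximised at $s=p$ and does not change which $b$ is chosen.
\item \emph{Per-prediction swap.} Let $\sigma$ send $(0,s)\mapsto(1,s)$ when $q(s)>\mu$, send $(1,s)\mapsto(0,s)$ when $q(s)<\mu$, and fix all other actions. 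Its gain is
\[(1-\delta)\,\E\bigl[(q-\mu)_+\ind[p\le\mu]+(\mu-q)_+\ind[p>\mu]\bigr]-O(\delta),\]
since the tag $s$ is unchanged and the quadratic term moves by at most $O(\delta)$.
\item \emph{Conclude.} Let $\delta\to0$ and then take the supremum over $\mu$.
\end{enumerate}
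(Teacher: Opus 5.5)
The paper gives no proof of this lemma; it quotes it from Hu--Wu \cite{cdl}. Your upper bound is the standard route: swap regret is at most the recalibration gain, which is a Bregman divergence of $V$, and that divergence is a mixture of V-shaped hinge pieces of total mass at most $2$. Your tagged task is a sound way to get the lower bound for the swap-regret definition of $\cdl$ used here, where a swap acts on actions rather than on prediction values. However, step 2 is false as written, and the resolution you favour does not work. The map $p\mapsto\nu([0,p))$ is left-continuous, and the Stieltjes measure of $g$ ignores the values of $g$ at its jumps, so the identity $g(p)=g(0)+\nu([0,p))$ forces $g(p)=g(p^-)$. Nothing in the definition of $r_Z^*$ guarantees this. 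At a kink of $V$ (for example two lines crossing at $p$), $r_Z^*(p)$ may pick the steeper action, and this matters as soon as $p$ is an atom of the prediction distribution. So choosing $\nu$ from $g$ does not reconcile the conventions. You need the limiting argument you mention only as an alternative.

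To carry it out, take $\nu$ with $\nu([0,p))=g(p^-)-g(0)$ for $p\in(0,1]$ and $\nu(\{1\})=g(1)-g(1^-)$. Then $\nu([0,1])=g(1)-g(0)\le 2$, and you can write $g(p)=g(0)+\nu([0,p))+\theta_p\,\nu(\{p\})$ with $\theta_p\in[0,1]$. Set $\phi_\mu:=(q-\mu)_+\ind[p\le\mu]+(\mu-q)_+\ind[p>\mu]$ and $\phi'_\mu:=(q-\mu)_+\ind[p<\mu]+(\mu-q)_+\ind[p\ge\mu]$. The hinge at $\mu$ then contributes, pointwise, $(1-\theta_\mu)\phi_\mu+\theta_\mu\phi'_\mu$, since the two functions agree off $\{p=\mu\}$. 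As $\mu'\uparrow\mu$, $\phi_{\mu'}\to\phi'_\mu$ pointwise, so dominated convergence gives $\E[\phi'_\mu]=\lim_{\mu'\uparrow\mu}\E[\phi_{\mu'}]\le\sup_{\mu'}\E[\phi_{\mu'}]$; at $\mu=0$ use $\phi'_0\equiv 0$. Fubini then yields the factor $2$.

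The lower bound has the mirror-image tie. At $p=\mu$ the actions $(0,\mu)$ and $(1,\mu)$ are both optimal, and if $r_Z^*$ picks $(1,\mu)$, your swap earns $(1-\delta)\E[\phi'_\mu]$ instead of $(1-\delta)\E[\phi_\mu]$. For $\mu<1$, avoid this by using thresholds $\mu'\downarrow\mu$ that are not atoms of $p$, where ties have probability zero; dominated convergence gives $\E[\phi_{\mu'}]\to\E[\phi_\mu]$. For $\mu=1$, $\phi_1\equiv 0$ and there is nothing to prove. Finally, there is no $-O(\delta)$ loss: $\sigma$ preserves the tag, so the quadratic part cancels and the gain is exactly $(1-\delta)\E[\phi_\mu]$.
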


\begin{definition}[Smooth Calibration Error \cite{smooth,utc}]
    Let $\cD$ be a distribution of $(p,y)\in [0,1]\times \{0,1\}$. The smooth calibration error $\smCE(\cD)$ is defined as
    \[
    \smCE(\cD):= \sup_{\textup{$1$-Lipschitz $w:[0,1]\to [-1,1]$}}\E_{(p,y)\sim\cD}[(y - p)w(p)],
    \]
    where the supremum is over all $1$-Lipschitz functions $w:[0,1]\to [-1,1]$.
\end{definition}

\begin{definition}[Lower Distance to Calibration \cite{utc}]
\label{def:distcal}
    Let $\cD$ be a distribution of $(p,y)\in [0,1]\times \{0,1\}$. The lower distance to calibration $\distcal(\cD)$ is defined as
    \[
    \distcal(\cD):= \inf_{\cJ}\E_{(p,p',y)\sim \cJ}[|p - p'|],
    \]
    where the infimum is over all joint distributions $\cJ$ of $(p,p',y)\in [0,1]\times [0,1]\times \{0,1\}$ such that the distribution of $(p,y)$ is $\cD$ and the distribution of $(p',y)$ is calibrated.
\end{definition}
The following theorems shows that $\smCE$ and $\distcal$ are constant-factor approximations of each other: 
\begin{theorem}[\cite{utc,importance}]
    For every distribution $\cD$ on $[0,1]\times \{0,1\}$,
    \[
    \frac 12\, \distcal(\cD) \le \smCE(\cD) \le 2\, \distcal(\cD).
    \]
\end{theorem}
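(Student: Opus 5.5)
The two inequalities need different arguments. The upper bound $\smCE(\cD)\le 2\,\distcal(\cD)$ follows from a direct coupling argument. The lower bound $\frac12\distcal(\cD)\le \smCE(\cD)$ needs a duality argument, and that is where the work lies.

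\textbf{Upper bound.} Fix any joint distribution $\cJ$ of $(p,p',y)$ whose $(p,y)$-marginal is $\cD$ and whose $(p',y)$-marginal is calibrated. Fix also any $1$-Lipschitz $w:[0,1]\to[-1,1]$. Decompose
\[
\E[(y-p)w(p)] = \E[(y-p')w(p')] + \E[(y-p')(w(p)-w(p'))] + \E[(p'-p)w(p)].
\]
The terms are handled as follows.
\begin{itemize}
\item The first term vanishes: conditioning on $p'$ gives $\E[y-p'\mid p']=0$ by calibration.
\item The second term is at most $\E|p-p'|$, since $|y-p'|\le 1$ and $w$ is $1$-Lipschitz.
\item The third term is at most $\E|p-p'|$, since $|w|\le 1$.
\end{itemize}
Taking the supremum over $w$ and then the infimum over $\cJ$ gives $\smCE(\cD)\le 2\,\distcal(\cD)$.

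\textbf{Lower bound via duality.} I would first reduce to the case where $p$ and $p'$ are supported on a fine finite grid $G_N=\{0,1/N,\ldots,1\}$. Rounding $p$ changes $\smCE$ and $\distcal$ by only $O(1/N)$, because both are Lipschitz under small perturbations of the predictions. On the grid, $\distcal$ becomes a finite linear program. Its variables are the masses $\cJ(p,p',y)$. Its constraints are the marginal constraint $\sum_{p'}\cJ(p,p',y)=\cD(p,y)$ and the calibration constraint $\sum_{p,y}\cJ(p,p',y)(y-p')=0$ for each $p'$. Its objective is $\sum \cJ(p,p',y)|p-p'|$.

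LP duality applies because the problem is finite and feasible; for instance, we may send everything to $p'=\E[y]$. The dual has two sets of variables:
\begin{itemize}
\item potentials $\phi(p,y)$ for the marginal constraints;
\item multipliers $h(p')$ for the calibration constraints.
\end{itemize}
Dual feasibility reads $\phi(p,y)\le |p-p'| + h(p')(y-p')$ for all $p,p',y$. Equivalently, $\phi(p,y)$ is the infimal convolution of $v_y(p') := h(p')(y-p')$ with the metric $|\cdot|$. The dual value is $\E_\cD[\phi(p,y)]$.

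The next step is to turn an optimal dual pair into a test function $w$. Writing $\phi(p,y)=\phi_0(p)+y\,(\phi_1(p)-\phi_0(p))$, I would set $w(p):=\phi_1(p)-\phi_0(p)$, possibly after first capping $h$ at $\pm1$. The goal is two facts:
\begin{enumerate}
\item $\E_\cD[\phi(p,y)]$ is bounded above by $2\,\E_\cD[(y-p)\tilde w(p)]$ for a suitable bounded Lipschitz $\tilde w$ built from $w$;
\item the infimal convolution makes $\phi_0$ and $\phi_1$ each $1$-Lipschitz. Then $w$ is $2$-Lipschitz, and $\tilde w = w/2$, with appropriate truncation to $[-1,1]$, is admissible in the definition of $\smCE$.
\end{enumerate}
Along the way I must check that $\phi(p,y)$ can be compared with $(y-p)$ times a function of $p$. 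I would do this using $\phi(p,y)\le h(p)(y-p)$, which is feasibility with $p'=p$, together with Lipschitzness.

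\textbf{Main obstacle.} The hard step is this extraction of an admissible $\smCE$ witness from the dual solution with only a factor-$2$ loss. Two things must be controlled simultaneously: the range of $w$, where the capping of $h$ must not decrease the dual value, and its Lipschitz constant. I would first argue that an optimal $h$ can be taken with $|h|\le 1$. If $|h(p')|>1$, then moving mass from $p'$ to a neighbouring grid point is cheaper than the multiplier suggests, so such an $h$ is never binding. I would then verify the two Lipschitz bounds directly from the infimal-convolution formula. Finally, letting $N\to\infty$ removes the discretization error.
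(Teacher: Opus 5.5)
The paper does not prove this statement; it imports it from \cite{utc,importance}. Judged on its own, your plan is sound. Your upper bound is complete as written: the three-term decomposition works, calibration of $(p',y)$ kills the first term, and the Lipschitz and boundedness bounds on the other two give $\smCE(\cD)\le 2\,\E_\cJ|p-p'|$ for every admissible $\cJ$.

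For the lower bound, your use of the dual constraint at $p'=p$ is exactly right. It gives $(1-p)\phi_0(p)+p\phi_1(p)\le -h(p)p(1-p)+h(p)p(1-p)=0$, hence $\E_\cD[\phi(p,y)]\le \E_\cD[(y-p)w(p)]$ with $w=\phi_1-\phi_0$. The infimal-convolution formula also makes $w$ $2$-Lipschitz, as you say.

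The step you flagged is true, but your justification for it is off. Moving mass to a \emph{neighbouring} grid point brings in a different multiplier $h$, so it cannot bound $|h|$. The right mechanism is the endpoint $p'=y\in\{0,1\}$. Its dual term is $|p-y|+h(y)(y-y)=|p-y|$, which does not involve $h$ at all.

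Concretely, let $\bar h$ be $h$ clipped to $[-1,1]$, and let $\bar\phi_y$ be the corresponding infimal convolution.
\begin{itemize}
\item For $y=1$: every $p'$ with $h(p')>1$ now contributes $|p-p'|+(1-p')\ge 1-p$. This is at least the unchanged term at $p'=1$, and every other term only increases.
\item For $y=0$: the same holds with $h(p')<-1$ and the term $p$ at $p'=0$.
\end{itemize}
Hence $\bar\phi\ge\phi$ pointwise, $(\bar\phi,\bar h)$ is dual feasible, and the dual value does not drop.

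Moreover, with $|\bar h|\le 1$ every term satisfies $|p-p'|+\bar h(p')(y-p')\ge |p-p'|-|y-p'|\ge -|p-y|$. Combined with the term at $p'=y$, this gives $|\bar\phi_y(p)|\le |p-y|$, so $|w|\le (1-p)+p=1$. Thus $w/2$ is $1$-Lipschitz with range in $[-1/2,1/2]$, and no truncation is needed. This matters: truncating $w$ after the fact can change $\E[(y-p)w(p)]$ in either direction, since $y-p$ changes sign, and could break the first of your two facts.

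Three small repairs on the discretization:
\begin{itemize}
\item Restricting $p'$ to $G_N$ need not reproduce $\distcal$ exactly; it gives only an upper bound. That is precisely the direction you need for $\distcal\le 2\,\smCE$.
\item For feasibility, use $p'=y$, which is calibrated and lies on the grid. Your witness $p'=\E[y]$ need not be a grid point.
\item Extend $w/2$ from $G_N$ to $[0,1]$ by linear interpolation; this preserves both the Lipschitz constant and the range.
\end{itemize}
With these fixes, letting $N\to\infty$ finishes the argument.
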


\begin{definition}[Cutoff calibration error \cite{test-action}]
    Let $\cD$ be a distribution of $(p,y)\in [0,1]\times \{0,1\}$. The cutoff calibration error $\cutoff(\cD)$ is defined as
    \[
    \cutoff(\cD):= \sup_{0 \le a \le b \le 1}|\E_{(p,y)\sim \cD}[(y - p)\ind[a\le p \le b]]|.
    \]
\end{definition}

\section{Basic Properties of SCDL}
\label{sec:basic}
In this section we prove some basic properties of $\scdl$. These properties will be useful for us to show its actionability, testability, consistency, and continuity in later sections.

The following lemma gives an equivalent definition of the bin-wise weight $\pi_i$ and true conditional probability $q_i$ used in our definition of $\scdl$ (\Cref{def:scdl}).

\begin{lemma}
\label{lm:equiv-def}
    Let $m$ be a positive integer. For every $i = 0,1,\ldots,m$ and $p\in [0,1]$, define $w_i(p):= (1 - |mp - i|)_+\in [0,1]$. For every distribution $\cD$ of $(p,y)\in [0,1]\times \{0,1\}$ and every $i = 0,\ldots,m$, the $\pi_i$ and $q_i$ from \Cref{def:scdl} satisfies
    \[
    \pi_i:= \E_{(p,y)\sim \cD}[w_i(p)], \quad q_i:= \frac 1{\pi_i}\E_{(p,y)\sim \cD}[w_i(p)y].
    \]
\end{lemma}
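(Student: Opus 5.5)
The plan is to compute the conditional law of the rounded prediction $p'$ given $(p,y)$ explicitly, and then read off $\pi_i$ and $q_i$ by the tower property. Fix $(p,y)$ and write $mp = k + \theta$ with $k = \lfloor mp\rfloor$ and $\theta\in[0,1)$; if $p=1$, take $k=m$ and $\theta=0$. The rounding in \Cref{def:scdl} moves $p$ to one of the two adjacent grid points $k/m$ or $(k+1)/m$, or deterministically to $k/m$ when $\theta=0$. The constraint $\E[p'\mid p,y]=p$ then forces
\[
\Pr[p'=(k+1)/m\mid p,y]=\theta, \qquad \Pr[p'=k/m\mid p,y]=1-\theta.
\]

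The first key step is to check that these probabilities coincide with the weights $w_i(p)=(1-|mp-i|)_+$. For $i=k$ we have $|mp-i|=\theta$, so $w_k(p)=1-\theta$. For $i=k+1$ we have $|mp-i|=1-\theta$, so $w_{k+1}(p)=\theta$. For every other $i$, $|mp-i|\ge 1$ and hence $w_i(p)=0$. Therefore $\Pr[p'=i/m\mid p,y]=w_i(p)$ for all $i=0,\dots,m$, including the boundary cases $\theta=0$ and $p=1$.

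The second step takes expectations over $(p,y)\sim\cD$:
\[
\pi_i=\Pr[p'=i/m]=\E[\Pr[p'=i/m\mid p,y]]=\E[w_i(p)],
\]
\[
\E[y\,\ind[p'=i/m]]=\E[y\,\Pr[p'=i/m\mid p,y]]=\E[w_i(p)\,y].
\]
Dividing the second identity by $\pi_i$ gives $q_i=\E[y\mid p'=i/m]=\frac{1}{\pi_i}\E[w_i(p)y]$, whenever $\pi_i>0$.

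No step here is a real obstacle. The only care needed is the bookkeeping: the edge cases $\theta=0$ and $p=1$ must be covered, and $q_i$ is only defined, or can be set arbitrarily, when $\pi_i=0$. In the latter case bin $i$ carries zero weight in $\scdl_m$, so the convention does not matter.
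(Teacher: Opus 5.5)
Your proposal is correct and follows the same route as the paper: you show $\Pr[p' = i/m \mid p, y] = w_i(p)$ and then take expectations via the tower property to get $\pi_i$ and $q_i$. You spell out more than the paper does, in particular the edge cases $\theta = 0$ and $p = 1$ and the fact that $q_i$ is only meaningful when $\pi_i > 0$. Conditioning on $(p,y)$ rather than on $p$ alone matches the constraint $\E[p' \mid p, y] = p$ in the definition and is exactly what the computation of $\E[y\,\ind[p' = i/m]]$ needs.
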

\begin{proof}
    Let $p'$ be the rounded prediction in \Cref{def:scdl}. We have $\Pr[p' = i/m|p] = w_i(p)$. Plugging it into the definition of $\pi_i$ and $q_i$ completes the proof.
\end{proof}

It will be convenient to define the following intermediate notion for a single bin $i$:
\begin{definition}[$\scfdl_{m,i}(\cD)$]
\label{def:scdl-appendix}
    Let $\cD$ be a distribution of $(p,y)\in [0,1]\times \{0,1\}$. Let $m$ be a positive integer. For every $i = 0,1,\ldots,m$, define $\pi_i$ and $q_i$ as in \Cref{def:scdl}.
For every $i = 0,\ldots,m$, define the \emph{soft-binned calibration fixed decision loss} as follows:
\[
\scfdl_{m,i}(\cD):= \sum_{j = 0}^i \pi_j(q_{j} - (i+1)/m)_+ + \sum_{j=i+1}^m\pi_j(i/m - q_j)_+.
\]
\end{definition}
Combining \Cref{def:scdl-appendix} and \Cref{def:scdl}, we have
\[
\scdl_m(\cD) = \max_{i = 0,\ldots,m}\scfdl_{m,i}(\cD).
\]

The following lemma shows that $\scdl_m(\cD)$ is a non-decreasing function of $m$ when $m$ is a power of $2$.
\begin{lemma}
\label{lm:monotone}
    For every positive integer $m$ and every distribution $\cD$ on $[0,1]\times \{0,1\}$, $\scdl_{2m}(\cD) \ge \scdl_m(\cD)$.
\end{lemma}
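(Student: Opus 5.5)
The plan is to show that for every coarse index $i\in\{0,\ldots,m\}$ there is a fine index $i'\in\{2i,2i+1\}$ (capped at $2m$) with $\scfdl_{2m,i'}(\cD)\ge \scfdl_{m,i}(\cD)$. Taking the maximum over $i$ then gives the lemma. Throughout I use the representation of \Cref{lm:equiv-def}, writing $w^{(m)}_j$ for the tent functions at granularity $m$.

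\textbf{Step 1: refinement identity.} Piecewise-linear interpolation on $[0,1]$ gives the identity
\[
w^{(m)}_j=w^{(2m)}_{2j}+\tfrac12 w^{(2m)}_{2j-1}+\tfrac12 w^{(2m)}_{2j+1},
\]
with out-of-range terms omitted at $j=0$ and $j=m$. Write $\pi_j,q_j$ for the coarse quantities and $\pi'_k,q'_k$ for the fine ones. Then
\[
\pi_j=\sum_k a_{jk}\pi'_k,\qquad \pi_jq_j=\sum_k a_{jk}\pi'_kq'_k,
\]
where $a_{jk}\in\{1,\tfrac12\}$ on $k\in\{2j-1,2j,2j+1\}$. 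Moreover $\sum_j a_{jk}=1$ for every fine $k$.

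\textbf{Step 2: sublinearity.} The map $(\pi,\pi q)\mapsto \pi(q-c)_+=(\pi q-c\pi)_+$ is positively homogeneous and convex. Hence
\[
\pi_j(q_j-c)_+\le \sum_k a_{jk}\pi'_k(q'_k-c)_+,
\]
and the same holds for $(c-q)_+$. Applying this to each coarse term of $\scfdl_{m,i}$ spreads it over fine bins, with threshold $(i+1)/m$ for $j\le i$ and $i/m$ for $j\ge i+1$. Every fine bin $k\le 2i$ receives only lower-side contributions, with total weight $1$. Every fine bin $k\ge 2i+2$ receives only upper-side contributions. The single shared bin $k=2i+1$ receives
\[
\tfrac12\pi'_k(q'_k-(i+1)/m)_+ + \tfrac12\pi'_k(i/m-q'_k)_+,
\]
and at most one of these two terms is nonzero.

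\textbf{Step 3: choose $i'$ by cases on $q'_{2i+1}$.} The thresholds in $\scfdl_{2m,i'}$ are $(i'+1)/(2m)$ on the lower side and $i'/(2m)$ on the upper side.

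If $i'=2i+1$, the lower threshold for $k\le 2i+1$ is $(i+1)/m$, the same as before. The upper threshold for $k\ge 2i+2$ is $(i+\tfrac12)/m\ge i/m$, so each upper term only grows.

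If $i'=2i$, the lower threshold for $k\le 2i$ is $(i+\tfrac12)/m\le (i+1)/m$, so each lower term only grows. The upper threshold for $k\ge 2i+1$ is $i/m$, unchanged.

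Now pick according to the shared bin. If $q'_{2i+1}\ge (i+1)/m$, choose $i'=2i+1$; its fine term $\pi'_{2i+1}(q'_{2i+1}-(i+1)/m)_+$ dominates that bin's contribution. Otherwise choose $i'=2i$; the term $\pi'_{2i+1}(i/m-q'_{2i+1})_+$ dominates the contribution, which is zero anyway when $q'_{2i+1}$ lies strictly between $i/m$ and $(i+1)/m$. For $i=m$, take $i'=2m$: all bins are on the lower side with threshold $(2m+1)/(2m)\le (m+1)/m$, so there is no shared bin.

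The main obstacle is exactly this shared fine bin $2i+1$, which straddles the coarse split. The case choice between $2i$ and $2i+1$ in Step 3 is what resolves it. The remaining work is bookkeeping at the boundary bins $j=0$ and $j=m$.
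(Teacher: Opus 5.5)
Your proposal is correct and follows the paper's proof. Both use the same tent-function refinement identity, the same Jensen/sublinearity step that spreads each coarse term over fine bins, and the same observation that only fine bin $2i+1$ straddles the split. The one difference is how you finish: you split into cases on $q'_{2i+1}$, whereas the paper bounds the spread-out sum directly by $\tfrac12\scfdl_{2m,2i}(\cD)+\tfrac12\scfdl_{2m,2i+1}(\cD)$, with each half of the shared bin absorbed by one of the two fine indices, and then by the maximum of the two.
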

\begin{proof}    
For the discretization in $2m$ bins, for every $i = 0, 1, \ldots, 2m$, define $\tilde w_i(p):= (1-|2mp-i|)_+$, $\tilde\pi_i := \E_{(p,y)\sim \cD}[\tilde w_i(p)]$ and $\tilde q_i := \frac{1}{\tilde \pi_i}\E_{(p,y)\sim \cD}[\tilde w_i(p) y]$. Then, we see that for any $p \in [0,1]$
\[
w_i(p) = \begin{cases}
\tilde w_0(p) + \frac{1}{2}\tilde w_1(p) & i = 0,\\
\tilde w_{2i}(p) + \frac{1}{2}\tilde w_{2i-1}(p) + \frac{1}{2}\tilde w_{2i+1}(p) & 1 \le i \le m-1,\\
\tilde w_{2m}(p) + \frac{1}{2}\tilde w_{2m-1}(p) & i = m.
\end{cases}
\]
Therefore, by taking the expectation of $w_i(p)$ we have that 
\[
\pi_i = \begin{cases}
\tilde \pi_0 + \frac{1}{2}\tilde \pi_1 & i = 0,\\
\tilde \pi_{2i} + \frac{1}{2}\tilde \pi_{2i-1} + \frac{1}{2}\tilde \pi_{2i+1} & 1 \le i \le m-1,\\
\tilde \pi_{2m} + \frac{1}{2}\tilde \pi_{2m-1} & i = m.
\end{cases}
\] 
Similarly, taking the expectation of $w_i(p)y$, we obtain
\[
q_i = \begin{cases}
\frac{1}{\pi_i}(\tilde \pi_0 \tilde q_0 + \frac{1}{2}\tilde \pi_1 \tilde q_1)& i = 0,\\
\frac{1}{\pi_i}(\tilde \pi_{2i}\tilde q_{2i} + \frac{1}{2}\tilde \pi_{2i-1}\tilde q_{2i-1} + \frac{1}{2}\tilde \pi_{2i+1}\tilde q_{2i+1} )& 1 \le i \le m-1,\\
 \frac{1}{\pi_i}(\tilde\pi_{2m} \tilde q_{2m}+ \frac{1}{2}\tilde \pi_{2m-1}\tilde q_{2m-1}) & i = m.
\end{cases}
\] 

For a fixed $i \in \{0,\ldots, m\}$ functions $(x-(i+1)/m)_+$ and $(i/m-x)_+$ are convex in $x$. Therefore, by Jensen's inequality for a fixed $i \in \{0,\ldots,m\}$ we have that
\begin{align*}
    &\scfdl_{m,i}(\cD) \\
    = {} &\sum_{j=0}^i \pi_j\left (q_j-\frac{i+1}{m}\right)_+ + \sum_{j=i+1}^m \pi_j\left(\frac{i}{m}-q_j\right)_+\\
    \leq {} & \tilde \pi_0 \left(\tilde q_0-\frac{i+1}{m}\right)_+ + \frac{1}{2}\tilde \pi_1\left(\tilde q_1-\frac{i+1}{m}\right)_+ \\
    & + \sum_{j=1}^i \Bigg( \frac{1}{2}\tilde \pi_{2j-1}\left(\tilde q_{2j-1}-\frac{i+1}{m}\right)_+ + \tilde \pi_{2j}\left(\tilde q_{2j} -\frac{i+1}{m}\right)_++\frac{1}{2}\tilde \pi_{2j+1}\left(\tilde q_{2j+1}-\frac{i+1}{m}\right)_+\Bigg)\\
    & + \sum_{j=i+1}^{m-1}\Bigg(\frac{1}{2}\tilde \pi_{2j-1}\left(\frac{i}{m}-\tilde q_{2j-1}\right)_++\tilde \pi_{2j} \left(\frac{i}{m}-\tilde q_{2j}\right)_+ + \frac{1}{2}\tilde \pi_{2j+1}\left(\frac{i}{m}-\tilde q_{2j+1}\right)_+\Bigg) \\
    & + \frac{1}{2}\tilde \pi_{2m-1}\left(\frac{i}{m}-\tilde q_{2m-1}\right)_+ + \tilde \pi_{2m}\left(\frac{i}{m} - \tilde q_{2m}\right)_+\\
     = {} & \sum_{j=0}^{2i} \tilde \pi_j\left(\tilde q_j-\frac{i+1}{m}\right)_+ + \frac{1}{2} \tilde \pi_{2i+1}\left(\tilde q_{2i+1} - \frac{i+1}{m}\right)_+ \\
     & + \frac{1}{2} \tilde \pi_{2i+1}\left(\frac{i}{m}-\tilde q_{2i+1}\right)_+ + \sum_{j=2i+2}^{2m} \tilde \pi_j\left(\frac{i}{m}-\tilde q_j\right)_+\\
     = {} & \sum_{j=0}^{2i} \tilde \pi_j\left(\tilde q_j-\frac{2i+2}{2m}\right)_+ + \frac{1}{2} \tilde \pi_{2i+1}\left(\tilde q_{2i+1} - \frac{2i+2}{2m}\right)_+ \\
     & + \frac{1}{2} \tilde \pi_{2i+1}\left(\frac{2i}{2m}-\tilde q_{2i+1}\right)_+ + \sum_{j=2i+2}^{2m} \tilde \pi_j\left(\frac{2i}{2m}-\tilde q_j\right)_+ \\
     \leq {} & \frac{1}{2}\scfdl_{2m,2i}(\cD)+ \frac{1}{2}\scfdl_{2m,2i+1}(\cD)\\
     \leq {} &\max \{\scfdl_{2m,2i}(\cD), \scfdl_{2m,2i+1}(\cD)\}.
\end{align*}

Hence, we conclude that 
\begin{align*}
    \scdl_m(\cD) =\max_{i=0,\ldots,m}\scfdl_{m,i}(\cD) \leq \max_{i=0,\ldots,2m}\scfdl_{2m,i}(\cD) = \scdl_{2m}(\cD).
\end{align*}
\end{proof}
In our definition of $\scdl$, we choose $m$ by minimizing $\max\{\scdl_m(\cD),1/m\}$. We give a characterization of the minimizing $m$ in \Cref{def:m*} and \Cref{lm:m*}.

\begin{definition}[Optimal Number of Bins]
\label{def:m*}
    Let $\cD$ be a distribution of $(p,y)\in [0,1]\times \{0,1\}$. We define $m^*(\cD)$ as the smallest number $m$ among $2^1, 2^2, \ldots,$ such that $\scdl_{2m}(\cD) \ge 1/m$. When such $m$ does not exist, we define $m^*(\cD) = +\infty$.
\end{definition}
\begin{lemma}
\label{lm:m*}
    Let $\cD$ be a distribution of $(p,y)\in [0,1]\times \{0,1\}$. If $m^*:= m^*(\cD) = +\infty$, then $\scdl(\cD) = 0$. If $m^* < +\infty$, we have
    \begin{enumerate}
        \item $\scdl(\cD) = \max\{\scdl_{m^*}(\cD), 1/m^*\}\in [1/m^*, 2/m^*)$;
        \item $\scdl_{2m^*}(\cD) \ge \scdl(\cD)$.
    \end{enumerate}
\end{lemma}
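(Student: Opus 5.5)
The argument rests on the monotonicity from \Cref{lm:monotone}: along $m = 2^k$, the sequence $\scdl_m(\cD)$ is non-decreasing while $1/m$ is decreasing. Write $f(m) := \max\{\scdl_m(\cD), 1/m\}$ for $m \in \{2^1, 2^2, \ldots\}$, so that $\scdl(\cD) = \inf_m f(m)$.

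\emph{Case $m^* = +\infty$.} By \Cref{def:m*}, every power of two $m \ge 2$ satisfies $\scdl_{2m}(\cD) < 1/m$. By \Cref{lm:monotone}, $\scdl_m(\cD) \le \scdl_{2m}(\cD) < 1/m$. Hence $f(m) = 1/m$ for all such $m$, and taking the infimum as $m \to \infty$ gives $\scdl(\cD) = 0$.

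\emph{Case $m^* < \infty$.} We handle $m$ on either side of $m^*$ separately.

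First take $m \ge 2m^*$, a power of two. Monotonicity gives $\scdl_m(\cD) \ge \scdl_{2m^*}(\cD) \ge 1/m^*$. Therefore $f(m) \ge 1/m^*$.

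Next take $m < m^*$. Here $m \le m^*/2$, and by the minimality of $m^*$ we have $\scdl_{2m}(\cD) < 1/m$. Also $1/m \ge 2/m^*$, so $f(m) \ge 1/m \ge 2/m^*$.

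Now evaluate $f(m^*)$. If $m^* \ge 4$, apply minimality to $m^*/2$ (which is $\ge 2$) to get $\scdl_{m^*}(\cD) < 2/m^*$. Combined with $1/m^* < 2/m^*$, this gives $f(m^*) < 2/m^*$.

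If $m^* = 2$, no smaller candidate exists. We instead use $\scdl_2(\cD) \le 1 = 2/m^*$, which holds because each summand satisfies $\pi_j(\cdot)_+ \le \pi_j$ and $\sum_j \pi_j = 1$. This yields only $f(2) \le 2/m^*$. To get strict inequality, note that a bracket equals $1$ only if all mass sits in bins with $q_j = 1$ at $j \le i$ with threshold $0$, which is impossible since the threshold is $(i+1)/m > 0$. This boundary check is the fiddly part of the argument.

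So in all subcases $f(m^*) < 2/m^*$, and trivially $f(m^*) \ge 1/m^*$.

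Putting this together: every $m < m^*$ has $f(m) \ge 2/m^* > f(m^*)$, and every $m > m^*$ has $f(m) \ge 1/m^*$. Hence the infimum is $\min\{f(m^*), \inf_{m > m^*} f(m)\}$.

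For $m > m^*$, which means $m \ge 2m^*$, we have $f(m) \ge \scdl_m(\cD) \ge \scdl_{2m^*}(\cD)$. Also $f(m^*) \le \max\{\scdl_{2m^*}(\cD), 1/m^*\} = \scdl_{2m^*}(\cD)$, using monotonicity and $\scdl_{2m^*}(\cD) \ge 1/m^*$. So $f(m^*) \le f(m)$ for every $m > m^*$.

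It follows that the infimum is attained at $m^*$. This gives item 1, $\scdl(\cD) = f(m^*) \in [1/m^*, 2/m^*)$. It also gives item 2, $\scdl(\cD) = f(m^*) \le \scdl_{2m^*}(\cD)$.
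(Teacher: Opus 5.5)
Your proof is correct and follows the paper's argument essentially step for step. The steps match: monotonicity from \Cref{lm:monotone}, minimality of $m^*$ applied at $m^*/2$, the bound $1/m \ge 2/m^*$ for $m<m^*$, and $\scdl_{2m^*}(\cD)$ as the comparison point for $m>m^*$. The only difference is the case $m^*=2$, where the paper simply notes $\scdl_2(\cD)\le 1/2$ (for $m=2$ every bracket in the maximum is at most $1/2$), so $\max\{\scdl_2(\cD),1/2\}=1/2$ holds directly and your equality-case analysis is unnecessary. That analysis is also incomplete as written: it only treats the first sum, and still needs the one-line remark that second-sum terms are at most $i/m<1$, since $j>i$ forces $i<m$.
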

\begin{proof}
    If $m^* = +\infty$, by the monotonicity in \Cref{lm:monotone}, we have that for all $m$ in $2^1, 2^2, \ldots$ $\scdl_{m}(\cD) \leq \scdl_{2m}(\cD) <1/m$. Hence, $\scdl(\cD) = \inf_{m= 2^1, 2^2, \ldots}\max\{\scdl_m(\cD),1/m\} = \inf_{m=2^1, 2^2, \ldots}1/m = 0$.
    
    Now, we analyze the case for $m^* < +\infty$. First, we show that $\max\{\scdl_{m^*}(\cD), 1/m^*\} \in [1/m^*, 2/m^*)$. If $m^* \geq 4$, since $m^*$ is the smallest number among $2^1, 2^2, \ldots$ such that $\scdl_{2m^*}(\cD)\geq 1/m^*$, for $\frac{m^*}{2}$ we have that for $m^*/2$
    \[
    \scdl_{m^*}(\cD) < \frac{2}{m^*}.
    \]
    Therefore, if $\scdl_{m^*}(\cD) \leq 1/m^*$, then $\max\{\scdl_{m^*}(\cD),1/m^*\} = 1/m^*$. Otherwise, $\max\{\allowbreak \scdl_{m^*}(\cD),1/m^*\} = \scdl_{m^*}(\cD) < \frac{2}{m^*}$. For $m^* = 2$, we notice that $\scdl_2(\cD) \le 1/2$. Thus, $\max\{\scdl_2(\cD),1/2\} = 1/2 \in [1/m^*,2/m^*)$. 

    Next, we show that $\scdl(\cD) = \max\{\scdl_{m^*}(\cD), 1/m^*\}$. For $m$ that is dyadic and $m > m^*$, by \Cref{lm:monotone}, we have that $\scdl_{m}(\cD) \geq \scdl_{2m^*}(\cD) \geq 1/m^*$ and $\scdl_{m}(\cD) \geq \scdl_{m^*}(\cD)$. Therefore, $\max\{\scdl_m(\cD),1/m\} \geq \max\{\scdl_{m^*}(\cD), 1/m^*\}$. For $m$ that is dyadic and $m < m^*$, it holds that $m \leq m^*/2$. Thus, $\max\{\scdl_m(\cD), 1/m\} \geq 1/m \geq 2/m^* > \max\{\scdl_{m^*}(\cD) , 1/m^*\}$.
    
    By \Cref{lm:monotone}, $\scdl_{2m^*}(\cD)\geq \scdl_{m^*}(\cD)$ and by the definition of $m^*$, we have that $\scdl_{2m^*}(\cD) \geq 1/m^*$. Thus, $\scdl_{2m^*}(\cD) \geq \scdl(\cD)$.
\end{proof}

The following lemma shows a basic relationship between $\scdl$ and $\cdl$.

\begin{lemma}
\label{lm:cdl}
Let $\cD$ be a distribution on $[0,1]\times \{0,1\}$.
    For every positive integer $m$, it holds that $\scdl_m(\cD) \le \cdl(\cD)$. Consequently, $\scdl(\cD) \le \cdl(\cD)$.
\end{lemma}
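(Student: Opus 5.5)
The plan is to bound each single-bin quantity $\scfdl_{m,i}(\cD)$ by the lower bound on $\cdl$ in \Cref{lm:v-shape}, namely $\cdl(\cD)\ge \sup_{\mu\in[0,1]}\E[(q-\mu)_+\ind[p\le\mu]+(\mu-q)_+\ind[p>\mu]]$ with $q=q(p)=\E[y|p]$. The proof has three steps: undo the binning with Jensen's inequality, replace the supremum over $\mu$ by an average over $\mu$, and match the two pointwise. Taking the maximum over $i$ then gives $\scdl_m(\cD)\le\cdl(\cD)$. Since $\scdl(\cD)\le \max\{\scdl_m(\cD),1/m\}$ for every $m$, I also need $1/m\le\cdl(\cD)$ for some dyadic $m$ whenever $\cdl(\cD)>0$. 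That holds by choosing $m$ large, so $\scdl(\cD)\le\inf_m\max\{\cdl(\cD),1/m\}=\cdl(\cD)$. If $\cdl(\cD)=0$, the infimum of $\max\{0,1/m\}$ is $0$.

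\textbf{Step 1 (undo the binning).} Fix $i$. By \Cref{lm:equiv-def}, $q_j$ is the $w_j$-weighted average of $q(p)$, since $\E[w_j(p)y]=\E[w_j(p)q(p)]$. The maps $x\mapsto (x-(i+1)/m)_+$ and $x\mapsto(i/m-x)_+$ are convex, so Jensen's inequality gives
\[
\scfdl_{m,i}(\cD)\le \E\Big[\Big(\sum_{j\le i}w_j(p)\Big)\big(q-\tfrac{i+1}{m}\big)_+ + \Big(\sum_{j> i}w_j(p)\Big)\big(\tfrac im-q\big)_+\Big].
\]
The hats $w_j$ form a partition of unity on $[0,1]$. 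Hence $\sum_{j\le i}w_j(p)$ equals $1$ for $p\le i/m$, equals $0$ for $p\ge (i+1)/m$, and equals $w_i(p)=(i+1)-mp$ on the overlap $(i/m,(i+1)/m)$. There $\sum_{j>i}w_j(p)=w_{i+1}(p)=mp-i$.

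\textbf{Step 2 (average over $\mu$).} A supremum is at least an average, so I lower bound the right side of \Cref{lm:v-shape} by its expectation over $\mu$ drawn uniformly from $[i/m,(i+1)/m]$. For $p$ in the overlap, $\Pr_\mu[p\le\mu]=(i+1)-mp=w_i(p)$ and $\Pr_\mu[p>\mu]=w_{i+1}(p)$. So the randomized threshold reproduces exactly the soft split of the rounding.

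\textbf{Step 3 (pointwise comparison).} Every $\mu$ in the range satisfies $\mu\le(i+1)/m$ and $\mu\ge i/m$. Therefore $(q-\mu)_+\ge(q-\frac{i+1}{m})_+$ and $(\mu-q)_+\ge(\frac im-q)_+$. Consider three regions of $p$:
\begin{itemize}
\item For $p\le i/m$, the indicator $\ind[p\le\mu]$ is always $1$, so the averaged integrand dominates $(q-\frac{i+1}m)_+$.
\item For $p\ge(i+1)/m$, the indicator $\ind[p>\mu]$ is $1$ almost surely, so it dominates $(\frac im-q)_+$.
\item On the overlap, it dominates $w_i(p)(q-\frac{i+1}m)_+ + w_{i+1}(p)(\frac im-q)_+$.
\end{itemize}
In all three cases this matches the Step 1 integrand pointwise. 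Taking expectations gives $\scfdl_{m,i}(\cD)\le\cdl(\cD)$.

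The main obstacle is the overlap region, where a single fixed $\mu$ cannot match the $p$-dependent weights $w_i,w_{i+1}$. Averaging $\mu$ uniformly over the bin gap is the device I expect to resolve this, because $\Pr_\mu[p\le\mu]$ is exactly the linear hat weight. The boundary bins ($i=m$, and $p=i/m$ exactly) need only routine checks: measure-zero ties, and the convention that empty sums vanish.
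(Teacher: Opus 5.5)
Your proof is correct and follows the paper's argument for the per-bin bound. Jensen's inequality moves you from the bin averages $q_j$ to $q=\E[y\mid p]$ weighted by the rounding probabilities, and drawing $\mu$ uniformly from $[i/m,(i+1)/m]$ makes $\Pr_\mu[p\le\mu]$ equal $\sum_{j\le i}w_j(p)$, so a pointwise comparison with the lower bound in \Cref{lm:v-shape} finishes it. The one difference is the ``consequently'' part: you bound $\inf_m\max\{\scdl_m(\cD),1/m\}\le\inf_m\max\{\cdl(\cD),1/m\}=\cdl(\cD)$ directly, whereas the paper uses $\scdl(\cD)\le\scdl_{2m^*}(\cD)$ from \Cref{lm:m*}, so your route avoids the monotonicity of \Cref{lm:monotone} and is slightly more self-contained.
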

\begin{proof}
Let $\pi_i$ and $q_i$ be defined as in \Cref{def:scdl} for $i = 0,\ldots,m$.
To show $\scdl_m(\cD) \le \cdl(\cD)$, it suffices to show that for every $i = 0,\ldots,m$,
\begin{equation}
\label{eq:scdl-cdl-goal}
\sum_{j = 0}^i \pi_j(q_{j} - (i+1)/m)_+ + \sum_{j=i+1}^m\pi_j(i/m - q_j)_+ \le \cdl(\cD).
\end{equation}
Consider the joint distribution of $(p,y,q,p')\in [0,1]\times \{0,1\}\times [0,1]\times \{0,1/m,\ldots,1\}$, where $(p,y)\sim \cD$, $p'$ is the rounded version of $p$ as in \Cref{def:scdl}, and $q:= \E[y|p]$. We have 
\[
q_j = \E[y|p' = j/m] = \E\Big[\E[y|p' = j/m, p]\Big|p' = j/m\Big] = \E[q|p' = j/m],
\]
where the first equation is the law of total probability, and the second equation holds because $y$ is independent of $p'$ conditioned on $p$.

Since $x_+$ is a convex function of $x$, by Jensen's inequality,
\begin{align*}
(q_j - (i + 1)/m)_+ & \le \E[(q - (i+1)/m)_+|p' = j/m],\\
(i/m - q_j)_+ & \le \E[(i/m - q)_+|p' = j/m].
\end{align*}
Therefore,
\begin{align}
& \sum_{j = 0}^i \pi_j(q_{j} - (i+1)/m)_+ + \sum_{j=i+1}^m\pi_j(i/m - q_j)_+ \notag\\
\le {} & \E[(q - (i + 1)/m)_+\ind[p' \le i/m]] + \E[(i/m - q)_+\ind[p' \ge (i + 1)/m]].\label{eq:scdl-cdl-0}
\end{align}
For every $p\in [0,1]$, define 
\[
\beta(p):= \Pr[p' \le i/m|p] = \begin{cases}
    1, & \text{if $p \le (i-1)/m$},\\
    i + 1 - pm, & \text{if $p\in (i/m, (i+1)/m)$},\\
    0, & \text{if $p \ge (i + 1)/m$.}
\end{cases}
\]
We have $\Pr[p' \ge (i + 1)/m | p] = 1 - \beta(p)$. By \eqref{eq:scdl-cdl-0},
\begin{align}
& \sum_{j = 0}^i \pi_j(q_{j} - (i+1)/m)_+ + \sum_{j=i+1}^m\pi_j(i/m - q_j)_+ \notag \\
\le {} & \E[(q - (i + 1)/m)_+\beta(p)] + \E[(i/m - q)_+(1 - \beta(p))].\label{eq:scdl-cdl-1}
\end{align}
Let $\mu$ be drawn from the uniform distribution on $[i/m,(i + 1)/m]$, independent of $(p,q,p')$. For every $p\in [0,1]$, we have
\begin{equation}
\label{eq:unif-mu}
\Pr_\mu[p \le \mu] = \beta(p), \quad \Pr_\mu[p > \mu] = 1 - \beta(p).
\end{equation}
By \Cref{lm:v-shape},
\begin{align}
    & \cdl(\cD) \notag \\
    \ge {} & \E_\mu \E[(q - \mu)_+\ind[p \le \mu] + (\mu - q)_+\ind[p > \mu]]\notag \\
    \ge {} & \E_\mu \E[(q - (i + 1)/m)_+\ind[p \le \mu] + (i/m - q)_+\ind[p > \mu]]\notag \\
    = {} & \E[(q - (i + 1)/m)_+\beta(p) + (i/m - q)_+(1 - \beta(p))],\label{eq:scdl-cdl-2}
\end{align}
where we used \eqref{eq:unif-mu} to obtain the last equality. Combining \eqref{eq:scdl-cdl-1} and \eqref{eq:scdl-cdl-2} proves \eqref{eq:scdl-cdl-goal}.

    Now, we show that $\scdl(\cD) \leq \cdl(\cD)$. By \Cref{lm:m*}, we have that if $m^* = +\infty$, then $\scdl(\cD) = 0 \leq \cdl(\cD)$. Otherwise, we have that
    \[
    \scdl(\cD) \leq \scdl_{2m^*}(\cD) \leq \cdl(\cD).\qedhere
    \]
\end{proof}
\section{Actionability of SCDL} \label{sec:actionability}
We formally state and the prove the actionability of $\scdl$ in this section.

For every positive integer $m$, let $G_m$ denote the grid $\{0,1/m,\ldots,1\}$. 
When defining $\scdl$ in \Cref{def:scdl}, we rounded each prediction $p\in [0,1]$ to a grid point $p'\in G_m$. We formally define this rounding function as follows:

\begin{definition}[Randomized rounding function $\tau_m$]
\label{def:tau}
For every prediction $p\in [0,1]$, we let $\tau_m(p)$ be the distribution of the rounded prediction $p'\in G_m$ obtained by randomly rounding $p$ up or down to the closest multiple of $1/m$ while ensuring $\E[p'|p] = p$.
\end{definition}

\begin{definition}[Rounded distribution $\cD_m$]
\label{def:dm}
Given $(p,y)$ drawn from a distribution $\cD$ on $[0,1]\times \{0,1\}$, we draw $p'\sim \tau_m(p)$ and use $\cD_m$ to denote the distribution of $(p',y)$.
\end{definition}
The following lemma follows directly by combining \Cref{def:scdl} and \Cref{def:dm}.
\begin{lemma}
\label{lm:dm}
Let $\cD$ be a distribution on $[0,1]\times \{0,1\}$ and let $\cD_m$ be defined as in \Cref{def:dm}. We have
\[
\scdl_m(\cD_m) = \scdl_m(\cD).
\]
\end{lemma}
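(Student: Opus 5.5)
The identity follows once we show that the bin weights $\pi_i$ and conditional means $q_i$ computed from $\cD_m$ coincide with those computed from $\cD$. $\scdl_m$ depends on the underlying distribution only through the $2(m+1)$ numbers $(\pi_i,q_i)_{i=0}^m$ of \Cref{def:scdl}.

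By \Cref{lm:equiv-def}, applied to $\cD_m$, its weights are
\[
\pi_i(\cD_m)=\E_{(p',y)\sim\cD_m}[w_i(p')], \qquad \pi_i(\cD_m)q_i(\cD_m)=\E_{(p',y)\sim\cD_m}[w_i(p')y].
\]
Since $p'\in G_m$, we have $w_i(p')=\ind[p'=i/m]$. The key observation is that rounding a grid point is trivial: for $p'=j/m$, the distribution $\tau_m(p')$ is the point mass at $j/m$. Hence the $m$-rounding of $\cD_m$ is $\cD_m$ itself, i.e.\ $(\cD_m)_m=\cD_m$. Therefore
\[
\pi_i(\cD_m)=\Pr_{\cD_m}[p'=i/m]=\pi_i(\cD), \qquad q_i(\cD_m)=\E_{\cD_m}[y\mid p'=i/m]=q_i(\cD),
\]
where the right-hand sides are exactly the definitions in \Cref{def:scdl} for $\cD$. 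This holds whenever $\pi_i>0$. If $\pi_i=0$, the corresponding terms vanish on both sides under any convention for $q_i$.

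Plugging these equalities into the formula for $\scdl_m$ termwise gives $\scdl_m(\cD_m)=\scdl_m(\cD)$.

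There is no real obstacle here. The only point to check is that $\tau_m$ fixes grid points, including the endpoints $0$ and $1$. At a grid point the constraint $\E[p'\mid p]=p$ together with rounding to the adjacent multiples of $1/m$ forces the point mass. Equivalently, $w_j(j/m)=1$ and $w_i(j/m)=0$ for $i\neq j$.
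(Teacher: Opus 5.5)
Your proof is correct and matches the paper's argument: the paper only remarks that the lemma follows by combining \Cref{def:scdl} and \Cref{def:dm}, i.e.\ $\scdl_m$ sees a distribution only through its rounded version, and $\tau_m$ fixes grid points, so $(\cD_m)_m = \cD_m$ and all $\pi_i, q_i$ coincide. Your check that $w_i(j/m) = \ind[i=j]$ and your treatment of the $\pi_i = 0$ case simply make that one-line justification explicit.
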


\begin{definition}[Randomized response function $r\circ \tau_m$]
    For every response function $r:[0,1]\to A$, we use $r\circ \tau_m:[0,1]\to \Delta(A)$ to denote the randomized response function where  $r\circ\tau_m(p)$ is the distribution of $r(p')$ for $p'\sim \tau_m(p)$.
\end{definition}

\begin{definition}[Swap regret of randomized response]
\label{def:swap}
Let $Z = (A,U)$ be a decision task with action space $A$ and utility function $U$.
    Given a distribution $\cD$ on $[0,1]\times \{0,1\}$ and a randomized response function $r:[0,1]\to \Delta(A)$ that maps a prediction $p\in [0,1]$ to a distribution over actions $a\in A$, we define the swap regret of $r$ on distribution $\cD$ as
    \[
    \sr_Z(r,\cD) := \sup_{\sigma:A\to A}\E_{(p,y)\sim \cD, a\sim r(p)} [u(\sigma(a),y) - u(a,y)].
    \]
\end{definition}
We are now ready to formally state and prove the actionability of $\scdl$:
\begin{theorem}[$\scdl$ is actionable]
\label{thm:actionability}
Let $\cD$ be a distribution of $(p,y)\in [0,1]\times \{0,1\}$. Assume $\scdl(\cD) < 2/m$ for some $m\in M:= \{2^1,2^2,\ldots\}$. Then we have
\[
\sup_Z\sr_Z(r^*_Z\circ \tau_m,\cD) = \cdl(\cD_m) \le 2\,\scdl_m(\cD) + 2/m \le 2\,\scdl(\cD) + 2/m,
\]
where the supremum is over all decision tasks $Z = (A,U)$ with a bounded utility function $U:A\times \{0,1\}\to [0,1]$.
\end{theorem}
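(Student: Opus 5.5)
The proof has three links: an equality and two inequalities.

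\emph{The equality.} The plan is to observe that for any task $Z$ and any swap $\sigma$, the regret of $r^*_Z\circ\tau_m$ on $\cD$ equals the regret of $r^*_Z$ on $\cD_m$. Drawing $(p,y)\sim\cD$ and then $p'\sim\tau_m(p)$ gives exactly $(p',y)\sim\cD_m$ (\Cref{def:dm}), and the action is $r^*_Z(p')$. So the two expectations in \Cref{def:swap} and \eqref{eq:intro-swap} coincide term by term. Taking the supremum over $\sigma$ and then over $Z$ gives $\sup_Z \sr_Z(r^*_Z\circ\tau_m,\cD)=\cdl(\cD_m)$ by \Cref{def:cdl}.

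\emph{The last inequality.} The assumption $\scdl(\cD)<2/m$ should force $m\le m^*(\cD)$. If $m^*=\infty$, then $\scdl_{2m'}<1/m'$ for every dyadic $m'$, so $\scdl_m\le\scdl_{2m}<1/m\le 2/m$. In general, $\scdl_m(\cD)\le\max\{\scdl_m,1/m\}$, and we need this to be at most $\max\{\scdl_{m^*},1/m^*\}=\scdl(\cD)$ when $m\le m^*$. This follows from \Cref{lm:m*} together with the monotonicity in \Cref{lm:monotone}: for $m\le m^*$, $\scdl_m\le\scdl_{m^*}$. To rule out $m>m^*$, note that \Cref{lm:m*} gives $\scdl(\cD)\ge 1/m^*\ge 2/m$ in that case, contradicting the hypothesis. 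Hence $\scdl_m(\cD)\le\scdl(\cD)$. When $m^*=\infty$ we have $\scdl=0$, and I need $\scdl_m=0$ as well. Indeed $\scdl_m\le\scdl_{2^k m}<2^{-k}\cdot 2/m$ for all $k$, so $\scdl_m=0$.

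\emph{The middle inequality (main step).} We need $\cdl(\cD_m)\le 2\scdl_m(\cD)+2/m$. Apply the upper bound of \Cref{lm:v-shape} to $\cD_m$, whose predictions lie on the grid $G_m$ with conditional means $q_j$ and weights $\pi_j$:
\[\cdl(\cD_m)\le 2\sup_{\mu\in[0,1]}\Big(\sum_{j:\,j/m\le\mu}\pi_j(q_j-\mu)_+ +\sum_{j:\,j/m>\mu}\pi_j(\mu-q_j)_+\Big).\]
Fix $\mu$ and let $i=\lfloor m\mu\rfloor$, so $i/m\le\mu<(i+1)/m$ (the case $\mu=1$ is handled by taking $i=m$). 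The first sum runs over $j\le i$ and the second over $j\ge i+1$, matching $\scfdl_{m,i}$. Compare termwise. For the first sum, $(q_j-\mu)_+\le (q_j-(i+1)/m)_+ + 1/m$, since $\mu\ge (i+1)/m-1/m$. For the second, $(\mu-q_j)_+\le (i/m-q_j)_+ + 1/m$. Summing and using $\sum_j\pi_j=1$, the bracket is at most $\scfdl_{m,i}(\cD)+1/m\le \scdl_m(\cD)+1/m$. Doubling gives $2\scdl_m(\cD)+2/m$.

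This step is the only one with content. The care needed is in checking that the side split $j/m\le\mu$ versus $j/m>\mu$ aligns exactly with $j\le i$ versus $j\ge i+1$, which holds because the predictions are grid points.
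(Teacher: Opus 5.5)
Your proposal is correct and follows the paper's proof essentially step for step. The equality is the definition of $\cdl$ applied to $\cD_m$, the middle inequality is exactly \Cref{lm:cdlm} (with \Cref{lm:dm}) via the same termwise comparison after \Cref{lm:v-shape}, and the last inequality is $m\le m^*(\cD)$ plus monotonicity; you also handle the case $m^*(\cD)=+\infty$ explicitly, which the paper leaves implicit. One wording fix: only $\scdl_m(\cD)$ itself, not $\max\{\scdl_m(\cD),1/m\}$, is bounded by $\scdl(\cD)$ here (the latter exceeds $\scdl(\cD)$ whenever $m<m^*$), but the chain $\scdl_m\le\scdl_{m^*}\le\scdl$ that you actually invoke is the right one.
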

We prove the theorem using the following lemma. 
\begin{lemma}
\label{lm:cdlm}
Let $\cD$ be a distribution on $G_m\times \{0,1\}$. We have $\cdl(\cD) \le 2\, \scdl_m(\cD) + 2/m$.
\end{lemma}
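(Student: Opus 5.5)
Since $\cD$ is supported on $G_m\times\{0,1\}$, every prediction $p$ is a grid point, so the rounding in \Cref{def:scdl} is trivial: $p'=p$ almost surely. Hence $\pi_i=\Pr[p=i/m]$ and $q_i=\E[y\mid p=i/m]$, and the conditional expectation $q=\E[y\mid p]$ equals $q_i$ on the event $p=i/m$. The plan is to start from the upper bound in \Cref{lm:v-shape},
\[
\cdl(\cD)\le 2\sup_{\mu\in[0,1]}\E\big[(q-\mu)_+\ind[p\le\mu]+(\mu-q)_+\ind[p>\mu]\big],
\]
and show that for every $\mu$ the inner expectation is at most $\scfdl_{m,i}(\cD)+1/m$ for a suitable $i$. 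This gives $\cdl(\cD)\le 2\,\scdl_m(\cD)+2/m$.

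Fix $\mu\in[0,1]$. If $\mu<1$, let $i$ be the index with $\mu\in[i/m,(i+1)/m)$. If $\mu=1$, take $i=m$ (the second sum is then empty). Because predictions lie on the grid, $p\le\mu$ holds exactly when $p=j/m$ for some $j\le i$, and $p>\mu$ holds exactly when $j\ge i+1$. So the expectation equals
\[
\sum_{j=0}^{i}\pi_j(q_j-\mu)_+ +\sum_{j=i+1}^{m}\pi_j(\mu-q_j)_+.
\]
Since $\mu\ge i/m$, we have $(q_j-\mu)_+\le (q_j-i/m)_+\le (q_j-(i+1)/m)_+ + 1/m$. Since $\mu<(i+1)/m$, we have $(\mu-q_j)_+\le (i/m-q_j)_+ + 1/m$. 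Both follow from $x_+$ being $1$-Lipschitz and nondecreasing.

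Summing these with weights $\pi_j$, and using $\sum_j\pi_j=1$, bounds the expression by $\scfdl_{m,i}(\cD)+1/m\le\scdl_m(\cD)+1/m$. Taking the supremum over $\mu$ and multiplying by $2$ finishes the proof.

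The only delicate point is the boundary bookkeeping. One has to check that points with $p=\mu$ exactly fall on the $p\le\mu$ side, which matches the first sum running up to $j=i$. One also has to check the edge case $\mu=1$ (or $i=m$), where the term $(i+1)/m$ exceeds $1$. This is harmless, since then $(q_j-(m+1)/m)_+=0\ge(q_j-1)_+-1/m$ still holds.
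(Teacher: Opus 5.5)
Your proof is correct and follows essentially the same route as the paper. Both arguments apply the upper bound of \Cref{lm:v-shape}, locate $\mu$ in the grid cell $[i/m,(i+1)/m)$, use that predictions lie on $G_m$ to rewrite the expectation as a sum over bins, and lose at most $1/m$ per unit of probability mass, giving $\scfdl_{m,i}(\cD)+1/m$. The only difference is at $\mu=1$: you take $i=m$, where both $\scfdl_{m,m}(\cD)$ and the expectation vanish, whereas the paper keeps $\mu=1$ in the last cell with $i=m-1$. Your bookkeeping there is slightly cleaner.
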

\begin{proof}
        By \Cref{lm:v-shape}, 
    \begin{align*}
        \cdl(\cD) &\le 2\, \sup\nolimits_{\mu\in [0,1]} \E[(q - \mu)_+ \ind[p \le \mu] + (\mu - q)_+ \ind[p > \mu]].
    \end{align*}

We partition the space $[0,1]$ into $m$ intervals $B_0,\ldots,B_{m-1}$ as follows. We define $B_i:= [i/m,(i + 1)/m)$ for $i = 0,\ldots,m -2$ and $B_{m - 1} = [(m-1)/m,1]$.
It remains to show that for every $i \in \{0,\ldots,m-1\}$ and every $\mu\in B_i$, it holds that
\begin{equation}
\label{eq:cdlm-1}
\E[(q - \mu)_+ \ind[p \le \mu] + (\mu - q)_+ \ind[p \ge \mu]] \le \sum_{j = 0}^i \pi_j(q_{j} - (i+1)/m)_+ + \sum_{j=i+1}^m\pi_j(i/m - q_j)_+ + \frac 1m.
\end{equation}
By our assumption, the predictions $p$ lie on the grid $G_m = \{0,1/m,\ldots,1\}$, so we have
\[
\E[(q - \mu)_+\ind[p\le \mu]] = \sum_{j = 1}^i\pi_j(q_j - \mu)_+ \le \sum_{j = 1}^i\pi_j\left((q_j - (i + 1)/m)_+ + \frac 1m\right).
\]
The inequality above holds also in the special case where $i = m - 1$  and $\mu = 1\in B_i$, because in this case the left-hand side is zero: $(q - \mu)_+ = (q - 1)_+ = 0$.
Similarly,
\[
\E[(\mu - q)_+ \ind[p > \mu]] = \sum_{j = {i + 1}}^m \pi_j(\mu - q_j)_+ \le  \sum_{j = {i + 1}}^m \pi_j\left((i/m - q_j)_+ + \frac 1m\right).
\]
Combining these two inequalities with the fact that $\sum _{j = 0}^m\pi_j = 1$ proves \eqref{eq:cdlm-1}.
\end{proof}
\begin{proof}[Proof of \Cref{thm:actionability}]
    By the definition of $\cdl$ (\Cref{def:cdl}), 
    \begin{equation}
    \label{eq:action-1}
    \sup_Z\sr_Z(r^*_Z\circ \tau_m,\cD) = \cdl(\cD_m).
    \end{equation}
By \Cref{lm:cdlm} and \Cref{lm:dm}, we have
\begin{equation}
\label{eq:action-2}
\cdl(\cD_m) \le 2\, \scdl_m(\cD) + \frac 2m.
\end{equation}
By \Cref{lm:m*}, we have $m^*:= m^*(\cD) \ge m$, and
\begin{equation}
\label{eq:action-3}
\scdl_m(\cD) \le \scdl_{m^*}(\cD) \le \scdl(\cD).
\end{equation}
Combining \eqref{eq:action-1}, \eqref{eq:action-2}, and \eqref{eq:action-3} completes the proof.
\end{proof}

\begin{theorem}[$\scdl$ is actionable with high probability]
\label{thm:actionability-hp}
Let $(p_1,y_1),\ldots,(p_T,y_T)\in [0,1]\times \{0,1\}$ be an arbitrary sequence of prediction-outcome pairs and let $\cD$ be the uniform distribution over these pairs. Let $m\in \{2^1,2^2,\ldots\}$ such that $\scdl(\cD)< 2/m$. For $\delta\in (0,1/(2m))$, assume
\[
\frac 1m = \Omega\left(\sqrt{\frac{\log(1/\delta)}{T}}\right).
\]
For every $t = 1,\ldots,T$, we independently round $p_i$ to $p_i'\in G_m$ using the randomized rounding function $\tau_m$ in \Cref{def:tau} and let $\cD'$ be the uniform distribution over $(p_1',y_1),\ldots,(p_T',y_T)$. Then with probability at least $1-\delta$,
\[
\sup_Z\sr_Z(r_Z^*,\cD') \le 2\, \scdl(\cD) + \frac 2m + O\left(\sqrt{\frac{\log(1/\delta)}{T}}\cdot \log m\right),
\]
where the supremum is over all decision task $Z = (A,U)$ with a bounded utility function $U:A\times \{0,1\}\to [0,1]$.
\end{theorem}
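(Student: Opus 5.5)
The plan is to reduce the statement to \Cref{lm:cdlm} applied to the realized empirical distribution $\cD'$, which is supported on $G_m\times\{0,1\}$. By \Cref{def:cdl}, $\sup_Z\sr_Z(r_Z^*,\cD') = \cdl(\cD')$, and \Cref{lm:cdlm} gives $\cdl(\cD') \le 2\,\scdl_m(\cD') + 2/m$. The expected version of the rounding is exactly $\cD_m$, and $\scdl_m(\cD_m) = \scdl_m(\cD) \le \scdl(\cD)$ by \Cref{lm:dm} and the argument in \eqref{eq:action-3}, which uses $m \le m^*(\cD)$ from \Cref{lm:m*}. So it suffices to show that with probability $1-\delta$,
\[
\scdl_m(\cD') \le \scdl_m(\cD) + O\left(\sqrt{\log(1/\delta)/T}\cdot \log m\right).
\]

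To show this, write each $\scfdl_{m,i}$ in terms of the unnormalized quantities $\pi_j$ and $\pi_jq_j = \frac1T\sum_t y_t\Pr[p_t'=j/m]$. Under $\cD'$ the empirical versions are $\hat\pi_j = \frac1T\sum_t \ind[p_t'=j/m]$ and $\hat\pi_j\hat q_j = \frac1T\sum_t y_t\ind[p_t'=j/m]$. A term $\pi_j(q_j-c)_+ = (\pi_jq_j - c\pi_j)_+$ is a 1-Lipschitz function of the linear statistic $\pi_jq_j - c\pi_j = \frac1T\sum_t (y_t-c)\Pr[p_t'=j/m]$. Each $p_t$ affects at most two bins, namely its two neighboring grid points. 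A naive union bound over bins gives error $\sum_j$ of per-bin deviations, roughly $m\sqrt{\log(m/\delta)/T}$, which is too large. Instead I will use a dyadic (chaining) decomposition.

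Each point $t$ has randomness only in choosing between bins $k_t$ and $k_t+1$, so $\hat\pi_j-\pi_j$ is a sum of independent centered bounded variables attached to the intervals $B_{k}$. The first key step is to reduce $(x)_+$ errors: $\sum_j |(a_j)_+ - (b_j)_+|$ is not summable-small, but the difference of sums of positive parts can be bounded using the fact that the sign pattern of $q_j - c$ changes in a controlled way. Instead of this, I will control only signed partial sums. Group bins into the dyadic intervals of $[0,1]$ at scales $2^{-\ell}$, $\ell\le \log m$. For each dyadic interval $I$ and each threshold $c$ in $G_m$, the deviation of $\sum_{j\in I}(y-c)\ind[p'=j/m]$ from its mean is concentrated at $\sqrt{\log(m/\delta)/T}$ times the square root of the mass of $I$. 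The requirement $\delta<1/(2m)$ makes $\log(m/\delta)=O(\log(1/\delta))$.

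For the positive parts, the main obstacle is that the empirical sets $\{j\le i: q_j>(i+1)/m\}$ are arbitrary, not intervals. I would circumvent this by comparing against $\scdl_m(\cD)$ only from above: for each $i$, $\scfdl_{m,i}(\cD') = \max_{S}\sum_{j\in S}(\ldots)$ over subsets $S$, and I would approximate the maximizing subset using its intersections with the dyadic intervals. If this fails, the fallback is to use the self-balancing property. Since $\scdl(\cD) < 2/m$, bins where $q_j$ exceeds the threshold carry small total "excess". Then only $O(1/\text{gap})$-type contributions matter, and a Bernstein bound on the total variation $\sum_j|\hat\pi_j\hat q_j-\pi_jq_j|$ restricted to the mass of miscalibrated bins yields $\sqrt{m\cdot\scdl\cdot \log(1/\delta)/T}=O(\sqrt{\log(1/\delta)/T})$, up to the $\log m$ factor from the dyadic levels.

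The assumption $1/m=\Omega(\sqrt{\log(1/\delta)/T})$ is used to absorb lower-order Bernstein terms of size $\log(1/\delta)/T$ into the main term. Summing over the $\log m$ levels and taking a union bound over $i$ completes the proof. I expect the handling of positive parts over non-interval bin sets to be the crux.
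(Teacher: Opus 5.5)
Your reduction is exactly the paper's: $\sup_Z\sr_Z(r_Z^*,\cD')=\cdl(\cD')\le 2\,\scdl_m(\cD')+2/m$ by \Cref{lm:cdlm}, and $\scdl_m(\cD)\le\scdl(\cD)$ since $m\le m^*(\cD)$. So everything rests on the one-sided bound $\scdl_m(\cD')\le\scdl_m(\cD)+O(\sqrt{\log(1/\delta)/T}\,\log m)$. The paper records this as \Cref{lm:hp-helper}, proved by the argument of \Cref{lm:testable}. That bound is the real content of the theorem, and your proposal does not establish it.

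Your primary route fails. Let $a_j=\pi_jq_j-\frac{i+1}{m}\pi_j$ and let $\hat a_j$ be its $\cD'$ analogue. Comparing against $\cD$ over arbitrary subsets $S$ only gives $\scfdl_{m,i}(\cD')-\scfdl_{m,i}(\cD)\le\sum_{j\le i}(\hat a_j-a_j)_+ +(\text{symmetric terms})$. This can be of order $\sum_j\sqrt{\pi_j/T}\approx\sqrt{m/T}$, far above the target when $m$ is as large as allowed. Signed concentration over dyadic intervals of $[0,1]$ cannot improve it: the maximizing set is selected by the noise itself and is generally a scattered, non-interval set, so there is no cancellation to exploit. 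Your fallback has the right instinct---use the smallness of $\scdl_m(\cD)$ to limit which bins can matter---but the bound $\sqrt{m\cdot\scdl\cdot\log(1/\delta)/T}$ is asserted rather than derived, and deriving it is the crux.

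What is missing is a counting lemma. On the Bernstein event for all $m+1$ bins (this is where $\delta<1/(2m)$ enters), take $\alpha=O(\sqrt{\log(1/\delta)/T})$. Every bin with $\pi_j\gtrsim\log(1/\delta)/T$ then has $|\hat\pi_j-\pi_j|,|\hat\pi_j\hat q_j-\pi_jq_j|=O(\alpha\sqrt{\pi_j})$ and $|\hat q_j-q_j|\le\alpha/\sqrt{\pi_j}$. The lighter bins contribute $O(m\log(1/\delta)/T)$ in total. A heavy bin $j\le i$ can contribute only if $\max\{q_j,\hat q_j\}>(i+1)/m$, hence only if $q_j>(i+1)/m-\alpha/\sqrt{\pi_j}$, and it then contributes $O(\alpha\sqrt{\pi_j})$; the case $j>i$ is symmetric. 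So you need \Cref{lm:testable-helper}, applied with $i+1$ in place of $i$:
\[
\sum_{j<i}\ind\bigl[q_j>i/m-\alpha/\sqrt{\pi_j}\bigr]\sqrt{\pi_j}\le 3+O(\log m)\bigl(\alpha m+\sqrt{m\,\scdl_m(\cD)}\bigr).
\]
Its proof buckets bins by mass, $\pi_j\in(2^{-k-1},2^{-k}]$ for $k\le\lceil\log_2 m^2\rceil$, not by position; bins with $\pi_j\le 1/m^2$ contribute $O(1)$ in total.

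At level $k$, put $i_k=i-\lceil 2^{(k+1)/2}\alpha m\rceil$, so every counted bin has $q_j>i_k/m$. The counted bins with $j\ge i_k-1$ number $O(1+2^{k/2}\alpha m)$. Suppose $N$ of them have $j\le i_k-2$, and set $i'=i_k-\lceil N/2\rceil-1$. At least $N/2$ of these satisfy $j\le i'$, and each adds at least $2^{-k-1}(N/2)/m$ to $\scfdl_{m,i'}(\cD)$, forcing $N=O(2^{k/2}\sqrt{m\,\scdl_m(\cD)})$. Weighting by $\sqrt{\pi_j}\le 2^{-k/2}$ and summing over the $O(\log m)$ mass levels gives the estimate. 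This, not a spatial dyadic decomposition, is where the $\log m$ comes from.

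Finally, $m\,\scdl_m(\cD)\le m\,\scdl(\cD)<2$, and the hypothesis $1/m=\Omega(\sqrt{\log(1/\delta)/T})$ gives $\alpha m=O(1)$. Note that this hypothesis is needed chiefly to control the number of near-threshold bins (the resulting $m\log m\log(1/\delta)/T$ term), not merely to absorb Bernstein's lower-order term.
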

\begin{proof}
    By \Cref{def:cdl}, we have
    \begin{equation}
    \label{eq:hp-1}
    \sup_Z\sr_Z(r_Z^*,\cD')  = \cdl(\cD').
    \end{equation}
    By \Cref{lm:cdlm}, we have
\begin{equation}
\label{eq:hp-2}
\cdl(\cD') \le 2\,\scdl_m(\cD') + \frac 2m.
\end{equation}
By \Cref{lm:hp-helper}, with probability at least $1-\delta$,
\begin{equation}
\label{eq:hp-3}
\scdl_m(\cD') \le \scdl_m(\cD) + O\left(\sqrt\frac{\log(1/\delta)}{T} \cdot \log m\right).
\end{equation}
    By \Cref{lm:m*}, we have $m^*:= m^*(\cD) \ge m$, and
\begin{equation}
\label{eq:hp-4}
\scdl_m(\cD) \le \scdl_{m^*}(\cD) \le \scdl(\cD).
\end{equation}
Combining \eqref{eq:hp-1}, \eqref{eq:hp-2}, \eqref{eq:hp-3} and \eqref{eq:hp-4} completes the proof.
\end{proof}

\section{Continuity of SCDL}
\label{sec:continuity}
We prove the following theorem showing that $\scdl$ is a continuous function of the predictions.
\begin{theorem}[Continuity]
\label{thm:continuity}
    Let $\cJ$ be a joint distribution of $(p_1,p_2,y)\in [0,1]\times [0,1]\times \{0,1\}$. Let $\cD_1$ be the distribution of $(p_1,y)$ and let $\cD_2$ be the distribution of $(p_2,y)$. We have
    \begin{align*}
    |\scdl(\cD_1)^2 - \scdl(\cD_2)^2| & \le O(\E_\cJ|p_1 - p_2|), \quad\text{and}\\
    |\scdl(\cD_1) - \scdl(\cD_2)|  & \le O\left(\sqrt{\E_\cJ|p_1 - p_2|}\right).
    \end{align*}
\end{theorem}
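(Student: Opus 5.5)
Let $\varepsilon := \E_\cJ|p_1 - p_2|$. The second inequality follows from the first, since for nonnegative $a,b$ we have $|a-b|\le\sqrt{|a^2-b^2|}$. So it suffices to prove $\scdl(\cD_1)^2 \le \scdl(\cD_2)^2 + O(\varepsilon)$, and then apply symmetry.

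The first step is a fixed-$m$ Lipschitz bound: for every positive integer $m$,
\[
|\scdl_m(\cD_1) - \scdl_m(\cD_2)| \le O(m\varepsilon).
\]
By \Cref{lm:equiv-def}, $\pi_j q_j = \E[w_j(p)y]$ and $\pi_j = \E[w_j(p)]$, where each $w_j$ is $m$-Lipschitz. Only two of the $w_j$ are nonzero at any point, so $\sum_j |w_j(p_1)-w_j(p_2)| \le O(m|p_1-p_2|)$. Each summand of $\scfdl_{m,i}$ can be written as $(\pi_jq_j - c\pi_j)_+$ with $c\in[0,1]$, which is $1$-Lipschitz in the pair $(\pi_jq_j,\pi_j)$ up to a constant. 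Summing over $j$ changes each $\scfdl_{m,i}$ by at most $O(\sum_j \E|w_j(p_1)-w_j(p_2)|) = O(m\varepsilon)$, and taking the max over $i$ preserves this bound.

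The second step exploits the self-balancing choice of $m$ to get the square-root behavior. Write $s_k := \scdl(\cD_k)$. If $s_1 \le O(\sqrt{\varepsilon})$, the squared bound is trivial, so assume $s_1 \ge C\sqrt\varepsilon$ for a large constant $C$. Set $m_1 := m^*(\cD_1)$, which is finite, and use \Cref{lm:m*}: $s_1\in[1/m_1,2/m_1)$ and $\scdl_{2m_1}(\cD_1)\ge 1/m_1$, so $\scdl_{m}(\cD_1) \ge s_1/2$ for dyadic $m\ge 2m_1$. Suppose toward contradiction that $s_2$ is much smaller than $s_1$. Take the dyadic $m$ with $1/m \in[s_2,2s_2)$, or rather the witness $m_2$ from the definition of $\scdl(\cD_2)$, which has $\scdl_{m_2}(\cD_2)\le s_2$ and $1/m_2\le s_2$. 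Then $m_2 \ge 1/s_2 \ge 2m_1$, so $\scdl_{m_2}(\cD_1)\ge \scdl_{2m_1}(\cD_1)\ge s_1/2$ by monotonicity (\Cref{lm:monotone}). Meanwhile the first step bounds the same quantity by $s_2 + O(m_2\varepsilon)$.

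The obstacle is that $m_2$ may be as large as $1/s_2$, so $m_2\varepsilon$ is not controlled by $s_1$. To fix this, I would not apply the first step at $m_2$. Instead I apply it at $M := 2m_1$ and use monotonicity on the $\cD_2$ side: since $m_2 \ge M$, we have $\scdl_M(\cD_2)\le \scdl_{m_2}(\cD_2)\le s_2$. Combining,
\[
s_1/2 \le \scdl_M(\cD_1) \le \scdl_M(\cD_2) + O(M\varepsilon) \le s_2 + O(\varepsilon/s_1),
\]
using $M = 2m_1 \le 4/s_1$. If instead $m_2<2m_1$, then $s_2\ge 1/m_2 > 1/(2m_1)\ge s_1/4$ directly. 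Either way $s_2 \ge s_1/4 - O(\varepsilon/s_1)$, which is a constant-factor comparison.

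A constant factor is not the additive $O(\varepsilon)$ bound on squares that we want, so I would sharpen the argument by comparing each $s$ with its balanced value. Since $\scdl$ is pinned to the dyadic scale $[1/m^*,2/m^*)$, the real issue is whether $m^*$ changes. A change of $m^*$ by a factor of $2$ shifts $s$ by up to a constant factor of $s$, which would give $|s_1^2-s_2^2|=\Theta(s_1^2)$ and exceed $O(\varepsilon)$. To close this gap I would exploit that within a fixed $m^*$, $s=\max\{\scdl_{m^*},1/m^*\}$ moves by only $O(m^*\varepsilon)$. I expect the main obstacle to be showing that this step together with the boundary transitions gives $|s_1^2-s_2^2|\le O(s\cdot m\varepsilon)=O(\varepsilon)$. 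If that fails, I would run the argument at the scale $m\approx 1/\sqrt{\varepsilon}$, where the contradiction argument above only needs to be correct up to the additive $\sqrt{\varepsilon}$.
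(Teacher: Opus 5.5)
Your fixed-$m$ Lipschitz bound $|\scdl_m(\cD_1)-\scdl_m(\cD_2)|\le Cm\varepsilon$ and your reduction of the second inequality to the first are correct and match the paper. The gap is that you never prove the additive bound. You stop at $s_2\ge s_1/4-O(\varepsilon/s_1)$, which yields neither $s_1^2\le s_2^2+O(\varepsilon)$ nor $s_1-s_2\le O(\sqrt{\varepsilon})$, and your last paragraph is speculation.

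The constant-factor loss is not intrinsic. Your worry that a change of $m^*$ forces a $\Theta(s)$ jump is unfounded: $\scdl$ is an infimum over $m$ of $\max\{\scdl_m,1/m\}$, each term moves by only $O(m\varepsilon)$ (large $m$ being controlled by monotonicity), so the minimizer can switch without $s$ jumping. The loss comes from two lossy steps in your own argument:
\begin{itemize}
\item You used $\scdl_{2m_1}(\cD_1)\ge 1/m_1>s_1/2$, whereas part 2 of \Cref{lm:m*} gives the sharper $\scdl_{2m_1}(\cD_1)\ge s_1$.
\item For $m_2<2m_1$ you used the crude bound $s_2\ge 1/m_2$, instead of applying your Lipschitz bound at $m_1$ itself, where $s_1=\max\{\scdl_{m_1}(\cD_1),1/m_1\}$.
\end{itemize}

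With these two repairs your route closes. For every dyadic $m$, the term $\max\{\scdl_m(\cD_2),1/m\}$ is bounded as follows:
\begin{itemize}
\item if $m\le m_1/2$, it is at least $2/m_1>s_1$;
\item if $m=m_1$, it is at least $s_1-Cm_1\varepsilon$;
\item if $m\ge 2m_1$, it is at least $\scdl_{2m_1}(\cD_2)\ge s_1-2Cm_1\varepsilon$, by monotonicity and then Lipschitz at $2m_1$.
\end{itemize}
Hence $s_2\ge s_1-2Cm_1\varepsilon>s_1-4C\varepsilon/s_1$, using $m_1<2/s_1$. So whenever $s_2\le s_1$, we get $s_1^2-s_2^2\le 2s_1(s_1-s_2)<8C\varepsilon$. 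Your fallback at scale $1/\sqrt{\varepsilon}$ reuses the same contradiction argument, and so inherits the same factor-$2$ loss.

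For comparison, the paper argues in the opposite direction: it upper-bounds the larger value through the infimum. Assuming $s_1\le s_2$, it notes that $\scdl_m(\cD_1)\le s_1$ for every dyadic $m\le m^*(\cD_1)$, so $\scdl_m(\cD_2)\le s_1+Cm\varepsilon$. It then takes $m$ to be the smallest power of $2$ with $s_1+Cm\varepsilon\ge 1/m$; this automatically satisfies $m\le m^*(\cD_1)$. This choice gives $s_2\le\max\{\scdl_m(\cD_2),1/m\}\le s_1+Cm\varepsilon$, while minimality of $m$ gives $s_1+Cm\varepsilon=O(1/m)$. Hence $s_2^2-s_1^2\le Cm\varepsilon\cdot O(1/m)=O(\varepsilon)$.

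This is exactly the fix for the obstacle you hit with the witness $m_2$: rather than the witness scale, use a smaller scale chosen to balance $Cm\varepsilon$ against $1/m$. Both routes use the same ingredients: the fixed-$m$ Lipschitz bound, monotonicity in $m$, and \Cref{lm:m*}. The paper's needs a single well-chosen scale, while the repaired version of yours compares term by term at the two scales $m_1$ and $2m_1$.
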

We need the following helper lemma:
\begin{lemma}
\label{lm:continuity}
    Let $\cJ$ be a joint distribution of $(p_1,p_2,y)\in [0,1]\times [0,1]\times \{0,1\}$. Let $\cD_1$ be the distribution of $(p_1,y)$ and let $\cD_2$ be the distribution of $(p_2,y)$. We have
    \[
    |\scdl_m(\cD_1) - \scdl_m(\cD_2)| \le O( m\cdot  \E_\cJ|p_1 - p_2|) \quad \text{for every $m = 0,1,\ldots$}.
    \]
\end{lemma}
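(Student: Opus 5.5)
We reduce everything to the unnormalized bin masses. For each distribution $\cD_k$ ($k=1,2$) set $\pi^{(k)}_j := \E[w_j(p_k)]$ and $\nu^{(k)}_j := \E[w_j(p_k)y] = \pi^{(k)}_j q^{(k)}_j$, using \Cref{lm:equiv-def}.

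\textbf{Step 1: write each summand as a Lipschitz function of the bin masses.} For a fixed threshold $c\in[0,1]$ we have the homogeneous forms
\[
\pi_j(q_j - c)_+ = (\nu_j - c\pi_j)_+,\qquad \pi_j(c - q_j)_+ = (c\pi_j - \nu_j)_+ .
\]
This also covers the convention for empty bins, since both sides vanish when $\pi_j=0$. Each right-hand side is $1$-Lipschitz in $\nu_j$ and $c$-Lipschitz (hence $1$-Lipschitz) in $\pi_j$. Therefore, for every fixed $i$,
\[
|\scfdl_{m,i}(\cD_1) - \scfdl_{m,i}(\cD_2)| \le \sum_{j=0}^m \Bigl(|\nu^{(1)}_j - \nu^{(2)}_j| + |\pi^{(1)}_j - \pi^{(2)}_j|\Bigr).
\]
A maximum of functions changes by at most the largest individual change, so the same bound holds for $|\scdl_m(\cD_1) - \scdl_m(\cD_2)|$.

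\textbf{Step 2: bound the mass differences by coupling.} Use the joint distribution $\cJ$ and the fact that $y\in\{0,1\}$:
\[
|\nu^{(1)}_j - \nu^{(2)}_j| \le \E_\cJ|w_j(p_1) - w_j(p_2)|,\qquad |\pi^{(1)}_j - \pi^{(2)}_j| \le \E_\cJ|w_j(p_1)-w_j(p_2)| .
\]
It remains to show, pointwise in $(p_1,p_2)$,
\[
\sum_{j=0}^m |w_j(p_1) - w_j(p_2)| \le 2m|p_1-p_2| .
\]
This is the one step requiring care, because summing the individual Lipschitz bounds $m|p_1-p_2|$ over $m+1$ bins would lose a factor of $m$. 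We avoid this by noting that $w_j$ is the hat function $h(mp-j)$ with $h(x)=(1-|x|)_+$. Integrate the derivative along the segment from $p_1$ to $p_2$:
\[
\sum_j |w_j(p_1)-w_j(p_2)| \le \int_{p_1}^{p_2}\sum_j |w_j'(p)|\,dp .
\]
At every non-breakpoint $p$, exactly two hats are active, each with slope $\pm m$ (at the endpoints only one hat is active). Hence $\sum_j |w_j'(p)|\le 2m$, which gives the claim. Equivalently, $(w_j(p))_j$ is the law of the rounded point $\tau_m(p)$, and the total variation between $\tau_m(p_1)$ and $\tau_m(p_2)$ is at most $m|p_1-p_2|$.

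\textbf{Step 3: combine.} Steps 1 and 2 give
\[
|\scdl_m(\cD_1)-\scdl_m(\cD_2)| \le 4m\,\E_\cJ|p_1-p_2| .
\]
For $m=0$ we read the statement trivially or vacuously, since only $m\ge1$ is meaningful. The main obstacle is Step 2: we must avoid the factor-$(m+1)$ loss by using the partition-of-unity structure of the hats rather than bounding each bin separately.
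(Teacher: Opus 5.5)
Your proof is correct and follows essentially the same route as the paper. Both rewrite each summand in the homogeneous form $(\nu_j - c\pi_j)_+$ or $(c\pi_j - \nu_j)_+$, apply the triangle inequality, bound the bin-mass differences through the coupling $\cJ$, and control $\sum_j |w_j(p_1)-w_j(p_2)|$ pointwise; the only difference is in that last step, where the paper notes that at most four hats are nonzero at $p_1$ or $p_2$ and each is $m$-Lipschitz (giving $4m$), while you integrate derivatives to get the sharper $2m$.

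One small slip, shared with the paper: for $i=m$ the threshold $c=(m+1)/m$ exceeds $1$, so the Lipschitz constant in $\pi_j$ is at most $2$ rather than $1$. Alternatively, that sum vanishes identically since $\nu_j\le\pi_j$. Either way this only changes the absolute constant.
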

\begin{proof}
    It suffices to show that for every $i = 0,\ldots,m$,
        \[
    |\scfdl_{m,i}(\cD_1) - \scfdl_{m,i}(\cD_2)| \le O( m\cdot  \E_\cJ|p_1 - p_2|).
    \]
    We first show that for any two predictions $p,p'\in [0,1]$, it holds that
    \[
    \sum_{i = 0}^m |w_i (p) - w_i(p')| \le O(m|p - p'|).
    \]
    This is because for all but at most $4$ choices of $i$ we have $w_i(p) = w_i(p') = 0$, where the $4$ choices are the two neighboring bins to $p$ and the two neighboring bins to $p'$. For these $4$ choices of $i$, we have $|w_i(p) - w_i(p')| \le O(m|p - p'|)$.
    
    Now we have
    \begin{align*}
        \sum_{i = 0}^m |\pi_i - \pi_i'| & = \sum_{i = 0}^m |\E_\cJ (w_i(p) - w_i(p'))|\\
        & \le \E_\cJ\sum_{i = 0}^m|w_i(p) - w_i(p')|\\
        & \le O(m\cdot \E_\cJ|p - p'|).
    \end{align*}
    Similarly,
        \begin{align*}
        \sum_{i = 0}^m |\pi_iq_i - \pi_i'q_i'| & = \sum_{i = 0}^m |\E_\cJ (w_i(p)y - w_i(p')y)|\\
        & \le \E_\cJ\sum_{i = 0}^m|w_i(p) - w_i(p')|y\\
        & \le O(m\cdot \E_\cJ|p - p'|).
    \end{align*}
    By the definition of $\scfdl$ and the triangle inequality,
    \begin{align*}
        & |\scfdl_{m,i}(\cD_1) - \scfdl_{m,i}(\cD_2)|\\
        \le {} & \sum_{j\le i}|(\pi_j q_j - \pi_j(i + 1)/m)_+ - (\pi_j' q_j' - \pi_j'(i + 1)/m  )_+|\\
        & + \sum_{j > i}|(\pi_ji/m - \pi_j q_j)_+ + (\pi_j'i/m - \pi_j'q_j')_+|\\
        \le {} & \sum_{j = 0}^m |\pi_jq_j - \pi_j'q_j'| + \sum_{j = 0}^m |\pi_j - \pi_j'|\\
        \le {} & O(m\cdot \E_\cJ|p - p'|).\qedhere
    \end{align*}
\end{proof}
\begin{proof}[Proof of \Cref{thm:continuity}]
Assume without loss of generality that $\E_\cJ|p_1 - p_2| > 0$ and $\scdl(\cD_1) \le \scdl(\cD_2)$. Define $m^*:= m^*(\cD_1)$. By \Cref{lm:continuity}, for an absolute constant $C > 0$ and every $m = 0,1,\ldots$ satisfying $m \le m^*$,
\begin{equation}
\label{eq:continuity-1}
\scdl_m(\cD_2) \le \scdl(\cD_1) + C m \cdot \E_\cJ|p_1 - p_2|.
\end{equation}
Choose $m$ to be smallest number among $2^1, 2^2,\ldots$ satisfying
\begin{equation}
\label{eq:m-smallest}
\scdl(\cD_1) + Cm \cdot \E_\cJ|p_1 - p_2| \ge \frac 1m.
\end{equation}
We have $m \le m^*$ because $\scdl(\cD_1) \ge 1/m^*$ (\Cref{lm:m*}).
By \eqref{eq:continuity-1},
\[
\scdl(\cD_2) \le \max\{\scdl_m(\cD_2), 1/m\} \le \scdl(\cD_1) + Cm \cdot \E_\cJ|p_1 - p_2|.
\]
Therefore,
\begin{align*}
    \scdl(\cD_2)^2 - \scdl(\cD_1)^2 & = (\scdl(\cD_2) - \scdl(\cD_1))(\scdl(\cD_1) + \scdl(\cD_2))\\
    & \le Cm\cdot \E_\cJ|p_1 - p_2|\ (2\cdot \scdl(\cD_1) + Cm\cdot \E_\cJ|p_1 - p_2|)\\
    & \le Cm\cdot \E_\cJ|p_1 - p_2| \cdot O(1/m) \tag{because $m$ is the smallest power of $2$ satisfying \eqref{eq:m-smallest}}\\
    & \le O(\E_\cJ|p_1 - p_2|).
\end{align*}
Finally,
\begin{align*}
(\scdl(\cD_2) - \scdl(\cD_1))^2 & \le (\scdl(\cD_2) - \scdl(\cD_1))(\scdl(\cD_2) + \scdl(\cD_1))\\ & = (\scdl(\cD_2)^2 - \scdl(\cD_1)^2)\\
& \le O(\E_\cJ|p_1 - p_2|).\qedhere
\end{align*}
\end{proof}

\section{Quadratic Relationship between SCDL and the Distance to Calibration}
\label{sec:consistency}
We prove the following relationship between $\scdl$ and the lower distance to calibration $\distcal$ (\Cref{def:distcal}).
\begin{theorem}[Relationship to $\distcal$]For every distribution $\cD$ on $[0,1]\times \{0,1\}$,
\label{thm:smce}
    \[
    \Omega(\distcal(\cD)^2) \le \scdl(\cD) \le O(\sqrt{\distcal(\cD)}).
    \]
\end{theorem}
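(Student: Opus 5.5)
The plan is to prove the two inequalities separately, reducing each to results already established: the continuity theorem (\Cref{thm:continuity}) for the upper bound, and the rounding/actionability machinery of \Cref{sec:actionability} together with \Cref{lm:cdl-ece} for the lower bound.

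\emph{Upper bound $\scdl(\cD)\le O(\sqrt{\distcal(\cD)})$.}
\begin{enumerate}
\item Fix $\varepsilon>0$. Take a coupling $\cJ$ of $(p,p',y)$ such that $(p,y)\sim\cD$, the distribution $\cD'$ of $(p',y)$ is calibrated, and $\E_\cJ|p-p'|\le \distcal(\cD)+\varepsilon$.
\item Since $\cD'$ is calibrated, $\cdl(\cD')=0$. By \Cref{lm:cdl}, $\scdl(\cD')\le\cdl(\cD')=0$.
\item Apply \Cref{thm:continuity} to $\cJ$. This gives $\scdl(\cD)\le \scdl(\cD')+O(\sqrt{\E_\cJ|p-p'|})=O(\sqrt{\distcal(\cD)+\varepsilon})$.
\item Let $\varepsilon\to 0$.
\end{enumerate}

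\emph{Lower bound $\distcal(\cD)\le O(\sqrt{\scdl(\cD)})$.} Let $m^*=m^*(\cD)$ as in \Cref{def:m*}, and assume first that $m^*<\infty$.

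\begin{enumerate}
\item \emph{A triangle inequality through $\cD_{m^*}$.} Couple $\cD$ with $\cD_{m^*}$ via the randomized rounding $\tau_{m^*}$, keeping the same $y$. Under this coupling $\E|p-p'|\le 1/m^*$. Next couple $\cD_{m^*}$ with a calibrated distribution whose expected movement is $\distcal(\cD_{m^*})$. Glue the two couplings along $(p',y)$: condition on $(p',y)$ and sample the two remaining coordinates independently. This yields a valid joint distribution whose $(p,y)$-marginal is $\cD$ and whose last marginal is calibrated. Hence
\[
\distcal(\cD)\le 1/m^*+\distcal(\cD_{m^*}).
\]
\item \emph{From $\distcal$ to $\ECE$.} Moving each prediction $p'$ to $q=\E[y\mid p']$ produces a calibrated distribution. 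Therefore $\distcal(\cD_{m^*})\le \ECE(\cD_{m^*})$.
\item \emph{From $\ECE$ to $\cdl$.} By \Cref{lm:cdl-ece}, $\ECE(\cD_{m^*})\le\sqrt{\cdl(\cD_{m^*})}$.
\item \emph{From $\cdl$ to $\scdl$.} By \Cref{lm:cdlm} and \Cref{lm:dm},
\[
\cdl(\cD_{m^*})\le 2\,\scdl_{m^*}(\cD)+2/m^*.
\]
By \Cref{lm:m*}, both $\scdl_{m^*}(\cD)$ and $1/m^*$ are at most $\scdl(\cD)$. So $\cdl(\cD_{m^*})\le 4\,\scdl(\cD)$.
\item \emph{Combine.} The chain gives
\[
\distcal(\cD)\le \scdl(\cD)+2\sqrt{\scdl(\cD)}=O\big(\sqrt{\scdl(\cD)}\big),
\]
using $\scdl(\cD)\le 1$.
\end{enumerate}

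Now suppose $m^*=\infty$, so $\scdl(\cD)=0$. The proof of \Cref{lm:m*} shows that $\scdl_m(\cD)<1/m$ for every dyadic $m$. Running the same chain with an arbitrary dyadic $m$ gives
\[
\distcal(\cD)\le 1/m+\sqrt{4/m}.
\]
Letting $m\to\infty$ gives $\distcal(\cD)=0$, which is the required bound in this case.

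The only step needing care is the gluing argument behind $\distcal(\cD)\le 1/m^*+\distcal(\cD_{m^*})$. One must check that the glued distribution really has marginals $\cD$ and a calibrated distribution, including the shared label $y$. The gluing is done by conditioning on the common pair $(p',y)$, so I expect this step to be routine. The remaining steps are direct applications of the lemmas above.
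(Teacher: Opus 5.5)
Your proof is correct and follows the paper's argument. The upper bound uses a coupling to a calibrated distribution together with \Cref{thm:continuity}; the lower bound uses the chain $\distcal(\cD)\le \distcal(\cD_m)+1/m\le \ECE(\cD_m)+1/m\le \sqrt{\cdl(\cD_m)}+1/m$, closed off by \Cref{lm:cdlm} and \Cref{lm:dm}. The differences are cosmetic: the paper runs the chain for every dyadic $m$ and takes the infimum, which avoids your separate $m^*=\infty$ case. You instead instantiate it at $m=m^*$ and spell out the gluing and $\distcal\le\ECE$ steps that the paper leaves implicit.
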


\begin{proof}
For every positive integer $m$, we have
\begin{align*}
    \distcal(\cD) & \le \distcal(\cD_m) + 1/m\\ & \le \ECE(\cD_m) + 1/m \\
    & \le O\left(\sqrt{\cdl(\cD_m)}\right) + 1/m \tag{by \Cref{lm:cdl-ece}}\\
    & \le O\left(\sqrt{\max\{\scdl_m(\cD_m),1/m\}}\right) \tag{by \Cref{lm:cdlm}}\\
    & = O\left(\sqrt{\max\{\scdl_m(\cD),1/m\}}\right). \tag{by \Cref{lm:dm}}
\end{align*}
    This implies $\distcal(\cD)\le O(\sqrt{\scdl(\cD)})$. 
    
    Consider an arbitrary joint distribution $\cJ$ of $(p_1,p_2,y)\in [0,1]\times [0,1]\times \{0,1\}$, where the distribution of $(p_1,y)$ is $\cD$ and the distribution $\cD'$ of $(p_2,y)$ is calibrated. We have $\scdl(\cD') = 0$, so by \Cref{thm:continuity},
    \[
    \scdl(\cD) = O(\sqrt{\E_\cJ|p_1 - p_2|}).
    \]
    This implies $\scdl(\cD) = O(\sqrt{\distcal(\cD)})$.
\end{proof}
\begin{remark}
\label{remark:tight}
    The upper and lower bounds on $\scdl$ in \Cref{thm:smce} are tight up to constant factors. For every $\varepsilon\in (0,1/4)$, Hu and Wu \cite{cdl} showed a distribution $\cD$ where $\distcal(\cD) = \varepsilon$ and $\cdl(\cD) = O(\varepsilon^2)$, which implies $\scdl(\cD) = O(\varepsilon^2)$ by \Cref{lm:cdl}. This shows that the lower bound of $\scdl$ in \Cref{thm:continuity} is tight. For every $\varepsilon\in (0,1/4)$, the distribution $\cD_1$ in the proof of \Cref{thm:quadratic-smooth} has $\distcal(\cD_1) = O(\varepsilon)$ and $\scdl(\cD_1) = \Omega(\sqrt \varepsilon)$, proving the tightness of the upper bound.
\end{remark}
\section{Testability of SCDL} \label{sec:testability}
We formally state and prove the testability of $\scdl$.
\begin{theorem}[$\scdl$ is testable]
\label{thm:testable}
    Let $S$ be the uniform distribution over $T\ge 2$ data points $(p_1,y_1),\ldots,(p_T,y_T)$ drawn i.i.d.\ from a distribution $\cD$ on $[0,1]\times \{0,1\}$. For every $\delta\in (e^{-T/2},1/T)$, with probability at least $1-\delta$ we have
    \begin{equation}
    \label{eq:testable-1}
    |\scdl(S) - \scdl(\cD)| \le O\left(\sqrt{\frac{\log(1/\delta)}{T}} \log \frac{T}{\ln(1/\delta)}\right).
    \end{equation}
Moreover, when $\scdl(\cD) = 0$ the following improved bound holds: for every $\delta\in (e^{-T/2},1/T)$, with probability at least $1-\delta$ we have
\begin{equation}
\label{eq:testable-2}
\scdl(S) \le O\left(\sqrt{\frac{\log (1/\delta)}{T}\log\frac{T}{\ln(1/\delta)}}\right).
\end{equation}
\end{theorem}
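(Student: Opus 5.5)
The proof reduces to a per-$m$ concentration bound for $\scdl_m$ and then transfers it to $\scdl$ through the self-balancing choice of $m$ (\Cref{lm:m*}). Fix $m$ and write $\hat\pi_j,\hat q_j$ for the empirical quantities from \Cref{lm:equiv-def}, namely $\hat\pi_j=\frac1T\sum_t w_j(p_t)$ and $\hat\pi_j\hat q_j=\frac1T\sum_t w_j(p_t)y_t$. For each threshold $i$ I rewrite
\[
\scfdl_{m,i}=\sum_{j\le i}\bigl(\pi_jq_j-\pi_j\tfrac{i+1}m\bigr)_+ + \sum_{j>i}\bigl(\pi_j\tfrac im-\pi_jq_j\bigr)_+ .
\]
The key per-$m$ estimate I aim for is that, with probability $1-\delta$ and simultaneously for all $i\le m$,
\[
|\scfdl_{m,i}(S)-\scfdl_{m,i}(\cD)|\le O\Bigl(\sqrt{\tfrac{L\,(\scdl_{m}(\cD)+1/m)}{T}\,m}\;+\;\tfrac{mL}{T}\Bigr)\quad\text{roughly, with }L=\log(m/\delta).
\]
Here the $1/m$ term absorbs rounding slack. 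This is the analogue of the Hu--Wu argument mentioned in \Cref{sec:technical}.

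To obtain this estimate, I bound each bin's deviation. The quantity $\hat\pi_j\hat q_j-\pi_jq_j$ and $\hat\pi_j-\pi_j$ are averages of bounded variables with variance at most $\pi_j$, so Bernstein gives deviations of order $\sqrt{\pi_jL/T}+L/T$. The function $x\mapsto(x)_+$ is $1$-Lipschitz, so a bin contributes only when the true signed value $\pi_j(q_j-\tfrac{i+1}m)$ (or its mirror) is positive, or is within the deviation of zero. I split bins into two kinds.

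\emph{Near-zero bins.} For these, $|q_j-\text{threshold}|\lesssim\sqrt{L/(\pi_jT)}$. Their contribution is bounded by $\sum_j(\sqrt{\pi_jL/T}+L/T)\cdot\ind[\ldots]$. Summed naively over all bins this is $O(\sqrt{mL/T}+mL/T)$ via Cauchy--Schwarz.

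\emph{Active bins.} For these, $\pi_j(q_j-\tfrac{i+1}m)_+$ is large. Their total $\pi$-mass times the excess is controlled by $\scfdl_{m,i}(\cD)\le \scdl_m(\cD)$. The refinement I would use: for a bin below threshold $i$ to be active, its excess $e_j=q_j-(i+1)/m>0$ satisfies $\sum_j\pi_je_j\le\scdl_m$. Cauchy--Schwarz on $\sum_j\sqrt{\pi_j}\sqrt{L/T}$ restricted to bins with $e_j\gtrsim 1/m$ then gives $\sqrt{(L/T)\cdot\#\text{bins}}\le\sqrt{(L/T)\,m\,\scdl_m}$. Bins with excess below $1/m$ can be compared against the threshold shifted by one grid step, using $i\pm1$ and the slack built into the $(i+1)/m$ versus $i/m$ gap. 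I expect this bookkeeping, which shows that only $O(\sqrt{mT\cdot\ldots})$ effective bins contribute, to be the main obstacle, exactly as in \cite{cdl}. I would first try to reproduce their calibrated case and then perturb it. A union bound over $i\le m$ and $m\in\{2,4,\ldots,T\}$ costs only $\log$ factors.

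To transfer the bound to $\scdl$, let $\varepsilon\approx\sqrt{L/T}\log(T/L)$ and $m^*=m^*(\cD)$. For the upper bound on $\scdl(S)$, pick $m\le m^*$ with $1/m\approx\max\{\scdl(\cD),\varepsilon\}$. Then $m\,\scdl_m(\cD)=O(1)$, so $\scdl_m(S)\le\scdl_m(\cD)+O(\sqrt{mL\cdot(1/m)/T}+mL/T)=\scdl(\cD)+O(\varepsilon)$, because $mL/T\le\varepsilon$ by the choice $1/m\ge\varepsilon$. This yields $\scdl(S)\le\max\{\scdl_m(S),1/m\}\le\scdl(\cD)+O(\varepsilon)$. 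For the lower bound on $\scdl(S)$, use \Cref{lm:m*}: $\scdl_{2m^*}(\cD)\ge\scdl(\cD)$ with $m^*\scdl(\cD)=O(1)$. For every dyadic $m'<m^*$, the estimate at $m=2m^*$ and monotonicity (\Cref{lm:monotone}) applied to $S$ give $\scdl_{m'}(S)\ge\ldots$. More directly, for each dyadic $m$, $\max\{\scdl_m(S),1/m\}\ge\scdl(\cD)-O(\varepsilon)$. When $1/m\ge\scdl(\cD)$ this is trivial. Otherwise $m\le 2m^*$ roughly, so $m\,\scdl_m(\cD)$ is bounded, and by monotonicity $\scdl_m(\cD)\ge\scdl_{m^*}(\cD)$ relates to $\scdl(\cD)$, after which the deviation bound applies. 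The extra $\log$ in \eqref{eq:testable-1} comes from the union over $O(\log(T/L))$ dyadic scales combined with summing bin errors. For \eqref{eq:testable-2}, $\scdl(\cD)=0$ kills the $\sqrt{m\,\scdl_m}$ term, leaving only $mL/T$ plus the near-zero-bin contribution. Choosing $1/m\approx\sqrt{L\log(T/L)/T}$ balances these terms, giving the square-root-of-log improvement.
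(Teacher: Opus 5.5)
You have the right skeleton (per-bin Bernstein, only bins with an active positive part matter, then transfer through $m^*$ and $2m^*$). But the step you defer as ``the main obstacle'' carries the whole theorem, and the two mechanisms you do sketch for it give only the naive rate.

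\textbf{The per-$m$ estimate.} For active bins you use only the constraint at the same threshold, $\sum_{j\in B}\pi_je_j\le\scdl_m(\cD)$. Cauchy--Schwarz then gives $\sum_{j\in B}\sqrt{\pi_j}\le\sqrt{|B|}\sqrt{\sum_{j\in B}\pi_j}\le\sqrt{|B|\,m\,\scdl_m(\cD)}$, not $\sqrt{m\,\scdl_m(\cD)}$. This is $\Theta(\sqrt m)$ when $|B|\approx m$ and $m\,\scdl_m(\cD)=O(1)$. The single-threshold constraint cannot do better. Take $i=m/2$ and let every bin $j<i$ have $\pi_j=2/m$ and $q_j=(i+2)/m$. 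Then $\scfdl_{m,i}(\cD)=1/m$, yet $\sum_{j<i}\sqrt{\pi_j}=\sqrt{m/2}$; this configuration is excluded only because $\scfdl_{m,m/4}(\cD)=\Omega(1)$. Your bound $O(\sqrt{mL/T})$ for near-zero bins is likewise too large. Already for calibrated $\cD$, balancing it against $1/m$ gives $m\approx(T/L)^{1/3}$ and the binned-ECE rate. Your derivation of \eqref{eq:testable-2} tacitly assumes these bins cost only $O(\sqrt{L/T}+mL/T)$ up to logs.

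\textbf{The missing idea.} What is missing is the paper's \Cref{lm:testable-helper}, which uses that $\scdl_m$ is a maximum over \emph{all} thresholds.
\begin{enumerate}
\item Group bins into mass levels $\pi_j\in(2^{-k-1},2^{-k}]$. A bin below $i$ that is active in $\cD$ or $S$ then has $q_j>i/m-\alpha/\sqrt{\pi_j}\ge i_k/m$, where $i_k=i-\lceil 2^{(k+1)/2}\alpha m\rceil$ and $\alpha\asymp\sqrt{L/T}$.
\item Bins with index in $[i_k-1,i-3]$ are counted by index range; there are $O(2^{k/2}\alpha m)$ of them.
\item For the $K$ level-$k$ bins with index $\le i_k-2$ and $q_j>i_k/m$, look at the lower threshold $i'=i_k-\lceil K/2\rceil-1$. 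At least $K/2$ of them have index $\le i'$ and excess $q_j-(i'+1)/m\ge K/(2m)$. Hence $\scdl_m(\cD)\ge 2^{-k-1}(K/2)^2/m$, i.e.\ $K=O(2^{k/2}\sqrt{m\,\scdl_m(\cD)})$. This count is quadratic, not linear.
\item Weighting by $\sqrt{\pi_j}\le 2^{-k/2}$ and summing over $O(\log m)$ levels gives $3+O(\log m)(\alpha m+\sqrt{m\,\scdl_m(\cD)})$.
\end{enumerate}
These $O(\log m)$ levels are the source of the $\log\frac{T}{\ln(1/\delta)}$ factor. It does not come from a union bound over scales: only the deterministic scales $m^*$, $2m^*$ (or $m_{\max}$, the largest dyadic $m$ with $m\log m\lesssim\sqrt{T/L}$) are ever used.

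\textbf{The transfer step.}
\begin{itemize}
\item \emph{Direction of $m$.} If $1/m<\scdl(\cD)<2/m^*$, then $m\ge m^*$, not $m\le 2m^*$. The per-$m$ bound is useless for large $m$, since it grows like $mL/T$.
\item \emph{Monotonicity on $S$.} You applied it to $m'<m^*$, where it only gives upper bounds. What is needed is $\scdl_m(S)\ge\scdl_{2m^*}(S)$ for $m\ge 2m^*$. This yields $\scdl(S)\ge\min\{2/m^*,\max\{\scdl_{m^*}(S),1/m^*\},\scdl_{2m^*}(S)\}$.
\item \emph{Boundedness at $2m^*$.} Your assumption that $m\,\scdl_m(\cD)$ is bounded fails there: $x:=\scdl_{2m^*}(\cD)$ can be $\Theta(1)$ while $\scdl(\cD)\approx 1/m^*$, so the deviation is of order $\log m^*\sqrt{m^*xL/T}$. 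The paper notes that $x-C\log m^*\sqrt{m^*xL/T}$ is increasing on $x\ge 1/m^*$ once $m^*\log m^*\le\sqrt{T/(C_0L)}$, and then substitutes $\scdl(\cD)\le x$.
\item \emph{Upper bound.} Take exactly $m=\min\{m^*,m_{\max}\}$. A dyadic $m$ with $1/m$ only within a factor $2$ of $\scdl(\cD)$ can leave $\max\{\scdl_m(S),1/m\}$ near $2\,\scdl(\cD)$. By contrast, $m=m^*$ gives $\max\{\scdl_{m^*}(\cD),1/m^*\}=\scdl(\cD)$ exactly.
\end{itemize}
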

Our proof of \Cref{thm:testable} relies on the following lemma:
\begin{lemma}
\label{lm:testable}
    Let $S$ be the uniform distribution over $T$ data points $(p_1,y_1),\ldots,(p_T,y_T)$ drawn i.i.d.\ from a distribution $\cD$ on $[0,1]\times \{0,1\}$. For every positive integer $m$ and every $i = 0,\ldots,m$, for every $\delta\in (0,1/(2m))$, with probability at least $1-\delta$ we have
    \begin{align*}
    & |\scfdl_{m,i}(S) - \scfdl_{m,i}(\cD)|\\
    \le {} & O\left( \sqrt{\frac{\log(1/\delta)}{T}} + (\log m)\sqrt{\frac {m\cdot\scdl_m(\cD)\log(1/\delta)}{T}} + \frac{m(\log m)\log(1/\delta)}{T}\right).
    \end{align*}
\end{lemma}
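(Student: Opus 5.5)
We rewrite each summand in terms of the unnormalized quantities $a_j := \pi_j$ and $b_j := \pi_j q_j$, which by \Cref{lm:equiv-def} are expectations of $w_j(p)$ and $w_j(p)y$. Then $\pi_j(q_j - (i+1)/m)_+ = (b_j - a_j(i+1)/m)_+$ and $\pi_j(i/m - q_j)_+ = (a_j i/m - b_j)_+$. Define the empirical analogues $\hat a_j,\hat b_j$ for $S$. Since $x\mapsto x_+$ is $1$-Lipschitz, the estimation error of $\scfdl_{m,i}$ is at most a sum over bins of $|(\hat b_j - \hat a_j c_j) - (b_j - a_j c_j)|$, where $c_j\in\{(i+1)/m, i/m\}$. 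This difference is the centered empirical mean of $Z_j := w_j(p)(y - c_j)$. The naive sum of these per-bin errors is of order $\sqrt{m/T}$, which is too large. We must therefore exploit that most bins contribute nothing.

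\textbf{Step 1: active and inactive bins.} For $j\le i$, bin $j$ contributes to the distributional term only if $q_j > (i+1)/m$. For $j>i$, it contributes only if $q_j<i/m$. Since both terms lie inside a single $\scfdl_{m,i}(\cD)\le \scdl_m(\cD)$, the total "excess" satisfies $\sum_{j\le i}\pi_j(q_j-(i+1)/m)_+ \le \scdl_m(\cD)$, and similarly on the other side. We group bins dyadically by their distance from $i$: for $k=0,1,\ldots,\log m$, let $J_k$ be the set of $j$ with $|j-i|\in[2^k,2^{k+1})$. This produces the $\log m$ factor.

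For bins in $J_k$ on the left with $q_j \le (i+1)/m$, the population term is zero. The empirical term $(\hat b_j - \hat a_j(i+1)/m)_+$ is nonzero only if the noise overcomes the negative margin. For bins with $q_j$ far below the threshold, i.e.\ $q_j\le (i+1)/m - 2^k/(2m)$ or similar, this requires a deviation of size $\pi_j 2^k/m$. Bins whose $q_j$ lies near the threshold are, since their predictions lie near $j/m$, themselves miscalibrated by roughly $2^k/m$. That miscalibration is charged to $\scdl_m(\cD)$ by applying the same quantity $\scfdl_{m,i'}$ at a shifted index $i'$ (e.g.\ $i' = j$ or the midpoint), so the total mass of such near-threshold bins in $J_k$ is $\lesssim m\,\scdl_m(\cD)/2^k$. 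Similarly, for active bins, $\sum_{j\in J_k\text{ active}}\pi_j \lesssim m\,\scdl_m(\cD)/2^k$, because each has excess at least about $2^k/m$ relative to threshold $i'$.

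\textbf{Step 2: concentration per group.} For each $k$, we apply Bernstein's inequality to the sum $\sum_{j\in A_k} (Z_j - \E Z_j)$ over the "relevant" bins $A_k$ (active plus near-threshold). The variables $|Z_j|\le w_j(p)\cdot O(2^k/m)$, because $y - c_j$ is weighted against threshold distance. Actually $|y-c_j|\le 1$, but $\sum_j w_j\le 2$ bounds the range. The variance is $\lesssim \sum_{j\in A_k}\pi_j \lesssim m\scdl_m(\cD)/2^k+\ldots$. Bernstein yields $\sqrt{m\scdl_m(\cD)\log(1/\delta)/T} + \log(1/\delta)/T$.

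For irrelevant bins we use a union bound over bins, which explains $\delta<1/(2m)$. With a multiplicative Chernoff bound on each $\hat b_j-\hat a_jc_j$, the empirical term stays zero unless $\pi_j\lesssim m\log(1/\delta)/(2^{2k}T)$ or so. Summing the leftover over $\le 2^k$ bins in $J_k$ gives $O(m\log(1/\delta)/T)$ per group. Summing over $\log m$ groups produces the second and third terms. The $\sqrt{\log(1/\delta)/T}$ term handles the global normalization, for example fluctuations of bins $j\in\{i,i+1\}$ and of $\sum_j \hat a_j$.

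\textbf{Main obstacle.} The delicate part is Step 1: rigorously charging the total weight of active and near-threshold bins in each dyadic shell to $m\,\scdl_m(\cD)/2^k$. This requires choosing the right comparison index $i'$ for each shell and handling both sides of $i$ simultaneously. I would first try $i' = i\pm 2^{k-1}$, using monotonicity of the thresholds in $i$. I would also have to ensure the failure probabilities of all $O(m)$ union-bound events combine into $\delta$ after rescaling.
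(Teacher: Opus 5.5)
Your decomposition differs from the paper's and can be completed. However, Step 2 as written does not bound the quantity you reduced to.

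\textbf{The gap in Step 2.} After the Lipschitz step you need $\sum_j |\Delta_j|$, where $\Delta_j := (\hat b_j - \hat a_j c_j) - (b_j - a_j c_j)$ is the centered empirical mean of $Z_j$. Bernstein applied to the aggregate $\sum_{j\in A_k} Z_j$ only controls the signed sum $|\sum_{j\in A_k}\Delta_j|$. That sum can cancel to near zero while $\sum_{j\in A_k}|\Delta_j|$ is large.

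Nor can you work with the signed sum directly. Since $(u)_+ - (v)_+ = \theta(u-v)$ for some $\theta\in[0,1]$ that depends on the signs, the true error is $\sum_j \pm\theta_j\Delta_j$ with data-dependent $\theta_j$. The near-threshold bins are exactly where these coefficients are not fixed in advance.

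\textbf{The fix.} It uses only ingredients you already have. Apply Bernstein to each $Z_j$ separately: $|Z_j|\le 1$ and the variance is at most $\E[w_j(p)^2]\le\pi_j$. Take a union bound over the $m+1$ bins; $\delta<1/(2m)$ gives $\log((m+1)/\delta)=O(\log(1/\delta))$. Then simultaneously for all $j$,
$|\Delta_j| = O(\sqrt{\pi_j\log(1/\delta)/T}+\log(1/\delta)/T)$.
On each shell apply Cauchy--Schwarz:
$\sum_{j\in A_k}\sqrt{\pi_j}\le\sqrt{|A_k|\sum_{j\in A_k}\pi_j}\le\sqrt{2^{k+1}\cdot O(m\,\scdl_m(\cD)/2^k)}=O(\sqrt{m\,\scdl_m(\cD)})$.
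The shell size $2^k$ cancels the $2^{-k}$ in your mass bound. This gives $O(\sqrt{m\,\scdl_m(\cD)\log(1/\delta)/T}+2^k\log(1/\delta)/T)$ per shell, and the target bound after summing over $O(\log m)$ shells.

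\textbf{Smaller corrections.} First, the comparison index. With near-threshold margin $2^{k-1}/m$, your suggested $i'=i-2^{k-1}$ only gives $q_j-(i'+1)/m>0$, with no quantitative margin.
\begin{itemize}
\item On the left use $i'=i-2^k$. Every left bin of $J_k$ then has $j\le i'$ and $q_j-(i'+1)/m>2^{k-1}/m$.
\item On the right use $i'=i+2^k-1$, which gives margin at least $(2^{k-1}-1)/m$.
\item Treat the $O(1)$ bins with $k\le 1$ directly.
\end{itemize}

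Second, the far-below threshold. Bernstein forces $\pi_j=O(m^2\log(1/\delta)/(4^kT))$ or $\pi_j=O(m\log(1/\delta)/(2^kT))$, not $O(m\log(1/\delta)/(4^kT))$. With the correct threshold each such bin contributes $O(m\log(1/\delta)/(2^kT)+\log(1/\delta)/T)$. That is what actually yields your $O(m\log(1/\delta)/T)$ per shell.

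\textbf{Comparison with the paper.} The paper groups bins by weight class $\pi_j\in(2^{-k-1},2^{-k}]$ rather than by distance from $i$. It removes light bins with $\pi_j<C\log(1/\delta)/T$ and uses the ratio estimate $|\hat q_j-q_j|\le\alpha/\sqrt{\pi_j}$. Within a weight class the noise tolerance is uniform. Contributing bins within $O(2^{k/2}\alpha m)$ indices of $i$ are simply counted. The rest are bounded in \emph{number} by $O(2^{k/2}\sqrt{m\,\scdl_m(\cD)})$ via a quadratic counting argument, which shifts the comparison index by half their count. That count is then multiplied by $\sqrt{\pi_j}\le 2^{-k/2}$.

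Your version replaces that counting lemma with a linear mass bound per distance shell plus Cauchy--Schwarz. It also replaces the light/heavy split and the $\hat q_j$ estimates with a margin-versus-noise comparison on the linear statistic $w_j(p)(y-c_j)$. It is arguably more transparent. Both arguments get the $\log m$ factor from the number of dyadic classes and reach the same bound.
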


We prove \Cref{lm:testable} using \Cref{lm:testable-helper} below.
\begin{lemma} 
\label{lm:testable-helper}
Let $\cD$ be a distribution on $[0,1]\times \{0,1\}$. 
For every positive integer $m$, every $i = 0,\ldots,m$, define $\pi_i$ and $q_i$ as in \Cref{lm:equiv-def}. For every $\alpha > 0$, we have
    \begin{equation}
    \label{eq:testable-helper}
    \sum_{j < i} \ind[q_j > i/m -\alpha / \sqrt{\pi_j}] \sqrt {\pi_j} \le 3 +  O(\log m)(\alpha m + \sqrt{m\cdot \scdl_m(\cD)}).
    \end{equation}
\end{lemma}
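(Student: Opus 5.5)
The plan is to group the indices $j<i$ dyadically by their distance $d := i-j$ to the threshold bin $i$. Within each distance scale, the indicator event will be split into two cases. In the first case the slack $\alpha/\sqrt{\pi_j}$ is large, which forces $\pi_j$ to be small. In the second case the slack is small, which forces $q_j$ to sit well above the bin location $j/m$; such bins then contribute to $\scfdl_{m,i'}(\cD)$ for a suitable $i'$, and hence to $\scdl_m(\cD)$. There are $O(\log m)$ scales, and each scale will contribute $O(\alpha m + \sqrt{m\cdot\scdl_m(\cD)})$. The indices with $d\in\{1,2,3\}$ are handled trivially using $\sqrt{\pi_j}\le 1$, which accounts for the additive $3$.

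Concretely, fix $k\ge 2$ and let $J_k := \{j<i : 2^k \le i-j < 2^{k+1}\}$, a set of at most $2^k$ indices; together with $d\le 3$ these sets cover all $j<i$. For $j\in J_k$ satisfying the indicator, we split into two subcases.
\begin{enumerate}
\item $\alpha/\sqrt{\pi_j} \ge d/(4m)$. Then $\sqrt{\pi_j}\le 4\alpha m/d \le 4\alpha m/2^k$. Summing over at most $2^k$ such indices in $J_k$ gives a contribution of at most $4\alpha m$.
\item $\alpha/\sqrt{\pi_j} < d/(4m)$. Then $q_j > i/m - d/(4m) > (i-2^{k-1})/m$, using $d<2^{k+1}$. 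Set $i' := i-2^k$; note $j\le i'$ for every $j\in J_k$. Then
\[
q_j - (i'+1)/m > (2^{k-1}-1)/m \ge 2^{k-2}/m \quad\text{for } k\ge 2.
\]
Every such $j$ appears in the first sum defining $\scfdl_{m,i'}(\cD)$ (\Cref{def:scdl-appendix}), and all terms there are nonnegative. Hence $\sum \pi_j \cdot 2^{k-2}/m \le \scfdl_{m,i'}(\cD)\le \scdl_m(\cD)$, i.e., $\sum \pi_j \le 4m\,\scdl_m(\cD)/2^k$ over these $j$.
\end{enumerate}
In the second subcase, Cauchy--Schwarz over at most $2^k$ indices bounds $\sum\sqrt{\pi_j}$ by
\[
\sqrt{2^k\cdot 4m\,\scdl_m(\cD)/2^k} = 2\sqrt{m\,\scdl_m(\cD)}.
\]

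Summing the two subcases over the $O(\log m)$ values of $k$ with $2^k\le m$, and adding the at most $3$ indices with $d\le 3$ (each contributing $\sqrt{\pi_j}\le 1$), gives \eqref{eq:testable-helper}.

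The step needing the most care is the choice of the auxiliary threshold $i'$ in the second subcase. It must satisfy $i'\ge j$ for all $j\in J_k$ simultaneously, while still leaving a margin of order $2^k/m$ between $q_j$ and $(i'+1)/m$. Taking $i' = i-2^k$ together with the splitting constant $d/(4m)$ achieves both, provided $k\ge 2$; this is exactly why the small distances $d\le 3$ are split off into the constant term. One should also check that $i'\ge 0$, which holds since $j\ge 0$ and $j\le i'$.
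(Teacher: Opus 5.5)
Your proof is correct, but it organizes the sum along a different axis than the paper.

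The paper buckets indices by mass, $S_k=\{j: 2^{-k-1}<\pi_j\le 2^{-k}\}$. Within a bucket, both the weight $\sqrt{\pi_j}\approx 2^{-k/2}$ and the slack $\alpha/\sqrt{\pi_j}\lesssim 2^{(k+1)/2}\alpha$ are essentially uniform, so the problem becomes counting indices. The paper then:
\begin{enumerate}
\item shifts the threshold to $i_k=i-\lceil 2^{(k+1)/2}\alpha m\rceil$;
\item counts the $O(2^{k/2}\alpha m)$ indices in the window $[i_k-1,i-3]$ directly;
\item bounds the number $s$ of remaining indices with $q_j>i_k/m$ by placing an auxiliary threshold $i_k'$ about $s/2$ bins below $i_k$, which forces $\scdl_m(\cD)\gtrsim 2^{-k}s^2/m$.
\end{enumerate}
Since the mass buckets are not a priori bounded in number, the paper must also discard bins with $\pi_j\le 1/m^2$ separately (along with $j=i-1,i-2$) to get only $O(\log m)$ levels.

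You instead bucket by the distance $d=i-j$, which makes the margin uniform rather than the mass. Your split on whether $\alpha/\sqrt{\pi_j}\ge d/(4m)$ turns the mass-dependent slack into one of two things:
\begin{enumerate}
\item a direct bound $\sqrt{\pi_j}\le 4\alpha m/2^k$, or
\item a margin of order $2^k/m$ above the single fixed threshold $i'=i-2^k$.
\end{enumerate}
In the second case you bound total mass rather than count, and finish with Cauchy--Schwarz.

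What your route buys is a somewhat shorter argument. The number of scales is automatically at most $\log_2 m$, tiny-mass bins need no special handling, and no auxiliary threshold depending on the size of the bad set is needed. You also get explicit constants, $3+\log_2 m\,\bigl(4\alpha m+2\sqrt{m\,\scdl_m(\cD)}\bigr)$. What the paper's mass-bucketing buys is that the weights $\sqrt{\pi_j}$ are constant within a bucket, so its argument stays purely combinatorial and never needs Cauchy--Schwarz.
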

\begin{proof}
When $m = 1$, the index $j$ can only take two possible values: $0$ and $1$. In this case, the left-hand side of \eqref{eq:testable-helper} is at most $2$, and the lemma holds trivially. We assume $m \ge 2$ from now on.

For $k = 0, 1, 2, 3,\ldots$, define $S_k:= \{j\in \{0,\ldots,m\}:2^{-k - 1}< \pi_j\le 2^{-k}\}$. Define
\begin{align*}
    i_k & := i - \lceil 2^{(k + 1)/2}\alpha m \rceil \le i - 1,\\
    S_k' & := \{j\in S_k: i_k  - 1\le j \le i - 3\},\\
    S_k'' & := \{j\in S_k: j \le i_k - 2, q_j > i_k / m\}.
\end{align*}

We prove the following two inequalities:
\begin{align}
    |S_k'| & \le O(2^{k/2}\alpha m), \label{eq:Sk-1}\\
    |S_k''| & \le O(2^{k/2}\sqrt{m\cdot \scdl_m(\cD)}).\label{eq:Sk-2}
\end{align}
By the definition of $S_k'$ and $i_k$, we have $|S_k'| \le i - i_k - 1 = \lceil 2^{(k + 1)/2}\alpha m\rceil - 1\le O(2^{k/2}\alpha m)$. This proves \eqref{eq:Sk-1}. Define $i_k' := i_k - \lceil |S_k''|  / 2\rceil - 1$. There are at least $|S_k''|/2$ different $j\in S_k''$ satisfying $j \le i_k'$. Therefore,
\[
\scdl_m(\cD) \ge \scfdl_{m,i_k'}(\cD) \ge 2^{-k - 1}(|S_k''|/2)(i_k/m - (i_k' + 1)/m)\ge 2^{-k - 1} (|S_k''|/2)^2 / m.
\]
Rearranging the inequality above proves \eqref{eq:Sk-2}. The lemma is proved by the following calculation:
\begin{align*}
    & \sum_{j < i} \ind[q_j > i/m -\alpha / \sqrt{\pi_j}]\sqrt{\pi_j}\\
    \le {} & \sqrt{\pi_{i - 1}} + \sqrt{\pi_{i - 2}} \\
    & \ \ + \sum_{j \le i, \pi_j \le 1/m^2}\sqrt{\pi_j} + \sum_{k = 0}^{\lceil \log_2(m^2) \rceil}\sum_{j \le i - 3}\ind[2^{-k - 1} < \pi_j \le 2^{-k}, q_j > i/m - \alpha/\sqrt{\pi_j}] \sqrt{\pi_j}\\
    \le {} & 1 + 1 + 1 + \sum_{k = 0}^{\lceil \log_2(m^2) \rceil}\sum_{j \le i - 3}\ind[2^{-k - 1} < \pi_j \le 2^{-k}, q_j > i_k/m] \sqrt{\pi_j}\\
    \le {} & 3 + \sum_{k = 0}^{\lceil \log_2(m^2) \rceil}\sum_{j \le i - 3}\ind[2^{-k - 1} < \pi_j \le 2^{-k}, q_j > i_k/m] \cdot 2^{-k/2} \tag{because $\pi_j \le 2^{-k}$}\\
    \le {} & 3 + \sum_{k = 0}^{\lceil \log_2(m^2)\rceil}2^{-k/2}|S_k'\cup S_k''|\\
    \le {} & 3 + O(\log m)(\alpha m + \sqrt{m\cdot \scdl_m(\cD)}).
\end{align*}

\end{proof}
\begin{proof}[Proof of \Cref{lm:testable}]
For every $j = 0,\ldots,m$, define $\pi_j$ and $q_j$ as in \Cref{lm:equiv-def}.
Define 
\begin{align*}
\hat \pi_j & := \E_{(p,y)\sim S}[w_i(p)] = \frac 1T\sum_{t = 1}^Tw_i(p_t)\\
\hat q_j & := \frac{1}{\hat \pi_j} \E_{(p,y)\sim S}[w_i(p)y] = \frac{1}{T\hat \pi_j}\sum_{t = 1}^Tw_i(p_t)y_t.
\end{align*}

By Bernstein's inequality and the union bound, with probability at least $1 - \delta$, for every $j = 0,\ldots,m$,
\begin{align}
|\hat \pi_j - \pi_j| & \le O\left(\sqrt {\frac{\pi_j \log(1/\delta)}{T}} + \frac{\log(1/\delta)}{T}\right),\label{eq:bernstein-1}\\
|\hat \pi_j \hat q_j - \pi_j q_j| & \le O\left(\sqrt {\frac{\pi_j \log(1/\delta)}{T}} + \frac{\log(1/\delta)}{T}\right). \label{eq:bernstein-2}
\end{align}
We assume that \eqref{eq:bernstein-1} and \eqref{eq:bernstein-2} hold.
For some absolute constant $C> 0$, as long as $\pi_j \ge \frac{C\log(1/\delta)}{T}$, we have
\begin{align}
|\hat \pi_j - \pi_j| & \le O\left(\sqrt {\frac{\pi_j \log(1/\delta)}{T}} \right),\label{eq:bernstein-3}\\
|\hat \pi_j \hat q_j - \pi_j q_j| & \le O\left(\sqrt {\frac{\pi_j \log(1/\delta)}{T}} \right), \quad \text{and}\label{eq:bernstein-4}\\
|\hat q_j - q_j| & \le \frac 1{\pi_j}(|\hat \pi_j \hat q_j - \pi_j q_j| + |\hat \pi_j - \pi_j|\hat q_j) \le O\left(\sqrt{\frac{\log(1/\delta)}{T\pi_j}}\right).\label{eq:bernstein-5}
\end{align}
The last inequality implies $|\hat q_j - q_j| \le \alpha/\sqrt {\pi_j}$ for some $\alpha = O\left(\sqrt{\frac{\log(1/\delta)}T}\right)$.
    
    Define $H_-$ as the set of indices $j\in \{0,\ldots,i\}$ satisfying $\max\{q_j,\hat q_j\} > (i + 1)/m$. Define $H_+$ as the set of indices $j\in \{i + 1,\ldots,m\}$ satisfying $\min\{q_j,\hat q_j\} < i/m$.
    By the triangle inequality,
    \begin{align}
        & |\scfdl_{m,i}(S) - \scfdl_{m,i}(\cD)|\notag \\
        \le {} & \sum_{j\le i} |\hat \pi_j(\hat q_j - (i + 1)/m)_+ - \pi_j(q_j - (i + 1)/m)_+| + \sum_{j> i}|\hat \pi_j(i/m - \hat q_j)_+ - \pi_j(i/m - q_j)_+|\notag \\
        \le {} & \sum_{j\in H_-}|\hat \pi_j(\hat q_j - (i+1)/m) - \pi_j(q_j - (i + 1)/m)| + \sum_{j \in H_+}|\hat \pi_j (i/m - \hat q_j) - \pi_j(i/m - q_j)| \notag \\
        \le {} & \sum_{j\in H_-}|\hat \pi_j\hat q_j - \pi_j q_j| + \sum_{j\in H_-}|\pi_j - \hat \pi_j| + \sum_{j\in H_+}\pi_j |q_j - \hat q_j| + \sum_{j\in H_+}|\pi_j - \hat \pi_j|.\label{eq:scdlm}
    \end{align}
Define $L$ as the set of indices $j = 0,\ldots,m$ satisfying $\pi_j < \frac{C\log(1/\delta)}{T}$. By \eqref{eq:bernstein-3}, we have $\hat \pi_j = O(\frac{\log(1/\delta)}{T})$. Therefore,
\begin{equation}
\label{eq:scdlm-part-1}
\sum_{j\in L}|\hat \pi_j \hat q_j - \pi_j q_j| + \sum_{j\in L}|\pi_j - \hat \pi_j| \le 2|L| \cdot O\left(\frac{\log(1/\delta)}{T}\right) = O\left(\frac{m\log(1/\delta)}{T}\right).
\end{equation}
Moreover, by \eqref{eq:bernstein-4} and \eqref{eq:bernstein-5},
\begin{align}
    & \sum_{j\in H_-\setminus L}|\hat \pi_j \hat q_j - \pi_j q_j| + \sum_{j\in H_- \setminus L} |\hat \pi_j - \pi_j|\notag \\
    \le {} & O\left(\sqrt\frac{\log(1/\delta)}{T}\right)\left(\sum_{j\in H_-\setminus L}\sqrt {\pi_j}\right)\notag \\
    \le {} & O\left(\sqrt\frac{\log(1/\delta)}{T}\right)\cdot (3 +  O(\log m)(\alpha m + \sqrt{m\cdot \scdl_m(\cD)}))\tag{by \Cref{lm:testable-helper}} \\
    = {} & O\left( \sqrt{\frac{\log(1/\delta)}{T}} + (\log m)\sqrt{\frac {m\cdot\scdl_m(\cD)\log(1/\delta)}{T}} + \frac{m(\log m)\log(1/\delta)}{T}\right).\label{eq:scdlm-part-2}
\end{align}
Similarly, we have
\begin{align}
& \sum_{j\in H_+\setminus L}|\hat \pi_j \hat q_j - \pi_j q_j| + \sum_{j\in H_+ \setminus L} |\hat \pi_j - \pi_j|\notag\\
= {} & O\left( \sqrt{\frac{\log(1/\delta)}{T}} + (\log m)\sqrt{\frac {m\cdot\scdl_m(\cD)\log(1/\delta)}{T}} + \frac{m(\log m)\log(1/\delta)}{T}\right).\label{eq:scdlm-part-3}
\end{align}
Combining \eqref{eq:scdlm-part-1}, \eqref{eq:scdlm-part-2}, and \eqref{eq:scdlm-part-3} and plugging the result into \eqref{eq:scdlm} completes the proof.
\end{proof}

The same proof of \Cref{lm:testable} establishes the following lemma, which we used in \Cref{sec:actionability} to prove \Cref{thm:actionability-hp}. 
\begin{lemma}
\label{lm:hp-helper}
        Let $(p_1,y_1),\ldots,(p_T,y_T)\in [0,1]\times \{0,1\}$ be an arbitrary sequence of prediction-outcome pairs and let $\cD$ be the uniform distribution over these pairs.  Let $m$ be a positive integer. For every $t = 1,\ldots,T$, we independently round $p_i$ to $p_i'\in G_m$ using the randomized rounding function $\tau_m$ in \Cref{def:tau} and let $\cD'$ be the uniform distribution over $(p_1',y_1),\ldots,(p_T',y_T)$. Then for every $\delta\in (0,1/(2m))$, with probability at least $1-\delta$ we have
    \begin{align*}
    & |\scdl_{m}(\cD') - \scdl_{m}(\cD)|\\
    \le {} & O\left( \sqrt{\frac{\log(1/\delta)}{T}} + (\log m)\sqrt{\frac {m\cdot\scdl_m(\cD)\log(1/\delta)}{T}} + \frac{m(\log m)\log(1/\delta)}{T}\right).
    \end{align*}
\end{lemma}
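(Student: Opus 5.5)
The plan is to rerun the proof of \Cref{lm:testable} essentially verbatim. Everything in that argument is deterministic once two concentration inequalities are in hand, and only the source of randomness changes. Here the fixed pairs $(p_t,y_t)$ are given and the randomness comes only from the independent roundings $p_t'\sim\tau_m(p_t)$. By \Cref{lm:equiv-def}, the bin quantities of $\cD$ are
\[
\pi_j=\frac1T\sum_t w_j(p_t),\qquad \pi_jq_j=\frac1T\sum_t w_j(p_t)y_t .
\]
The bin quantities of $\cD'$ are
\[
\hat\pi_j=\frac1T\sum_t \ind[p_t'=j/m],\qquad \hat\pi_j\hat q_j=\frac1T\sum_t\ind[p_t'=j/m]y_t .
\]
Since $\Pr[p_t'=j/m]=w_j(p_t)$ and the $p_t'$ are independent, each of these is an average of $T$ independent $[0,1]$-valued variables. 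The first has mean $\pi_j$ and the second has mean $\pi_jq_j$. The summed variance is at most $\sum_t w_j(p_t)=T\pi_j$, because the variance of $\ind[p_t'=j/m]y_t$ is at most $w_j(p_t)$.

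So Bernstein's inequality, with a union bound over the $2(m+1)$ events, gives \eqref{eq:bernstein-1} and \eqref{eq:bernstein-2} simultaneously with probability at least $1-\delta$. The restriction $\delta<1/(2m)$ is what allows the union-bound factor $m$ to be absorbed into $\log(1/\delta)$. From here on I follow the deterministic part of the proof of \Cref{lm:testable} step by step:
\begin{itemize}
\item derive \eqref{eq:bernstein-3}--\eqref{eq:bernstein-5} for the bins with $\pi_j\ge C\log(1/\delta)/T$, which gives $|\hat q_j-q_j|\le\alpha/\sqrt{\pi_j}$ with $\alpha=O(\sqrt{\log(1/\delta)/T})$;
\item define $H_\pm$ and $L$ as before and bound each $|\scfdl_{m,i}(\cD')-\scfdl_{m,i}(\cD)|$ via \eqref{eq:scdlm};
\item bound the low-weight bins as in \eqref{eq:scdlm-part-1};
\item bound the remaining bins with \Cref{lm:testable-helper}, applied to the distribution $\cD$ itself. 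The helper lemma is a purely deterministic statement about the $\pi_j,q_j$ of any distribution, so this gives the $\sqrt{m\,\scdl_m(\cD)}$ term.
\end{itemize}

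One point needs checking. In the proof of \Cref{lm:testable}, the sets $H_\pm$ use both $q_j$ and $\hat q_j$, but \Cref{lm:testable-helper} is stated only for $q_j>i/m-\alpha/\sqrt{\pi_j}$. That inequality is exactly what $j\in H_-\setminus L$ implies, since $\hat q_j>(i+1)/m$ together with $|\hat q_j-q_j|\le\alpha/\sqrt{\pi_j}$ gives it. The indices $j=i$ and $j=i-1$ are covered by the additive constant. The case of $H_+$ is symmetric: reflect $p\mapsto 1-p$, $y\mapsto 1-y$.

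Since the resulting bound is uniform over $i$, and the event is the same for every $i$ (the Bernstein bounds do not depend on $i$), I then take the maximum over $i=0,\dots,m$. This uses $|\max_i a_i-\max_i b_i|\le\max_i|a_i-b_i|$ and converts the per-$i$ bound on $\scfdl_{m,i}$ into the claimed bound on $\scdl_m$.

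I expect the only real obstacle to be confirming the variance bound in Bernstein for the rounded indicators, which uses $\mathrm{Var}\le w_j(p_t)$ rather than a bound depending on $\hat\pi_j$. Once that holds, the rest is unchanged from the proof of \Cref{lm:testable}.
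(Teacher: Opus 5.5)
Your proposal is correct and takes the same approach as the paper, which proves this lemma only by remarking that the proof of \Cref{lm:testable} carries over. You identify the one thing that changes: the randomness now comes from the independent roundings with $\Pr[p_t'=j/m]=w_j(p_t)$, and Bernstein for independent, non-identically distributed variables with variance proxy $T\pi_j$ still gives \eqref{eq:bernstein-1}--\eqref{eq:bernstein-2}. You also note that this concentration event does not depend on $i$, which is exactly what lets you pass from the per-$i$ bounds on $\scfdl_{m,i}$ to the stated bound on $\scdl_m$.
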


\begin{proof}[Proof of \Cref{thm:testable}]
It holds trivially that $|\scdl(S) - \scdl(\cD)| \le 1$. We can thus assume without loss of generality that $T/\log(1/\delta) \ge 100 C_0$ for an absolute constant $C_0 > 0$ which we will determine later. Let $m_{\max}$ be the largest number in $M:= \{2^1,2^2,\ldots,\}$ such that $m_{\max}\log m_{\max} \le \sqrt{\frac{T}{C_0\log(1/\delta)}}$.

Define $m^*:= m^*(\cD)\in M\cup \{+\infty\}$. We divide the analysis into two cases: $m^* < m_{\max}$ and $m^* \ge m_{\max}$.

When $m^* < m_{\max}$, by \Cref{lm:testable} and the union bound, with probability at least $1-\delta$ the following two conditions hold simultaneously:
\begin{align}
    & |\scdl_{m^*}(S) - \scdl_{m^*}(\cD)| \notag \\
    \le {} & O\left( \sqrt{\frac{\log(1/\delta)}{T}} + (\log m^*)\sqrt{\frac {m^*\cdot\scdl_{m^*}(\cD)\log(1/\delta)}{T}} + \frac{m^*(\log m^*)\log(1/\delta)}{T}\right)\notag \\
    \le {} & O\left(  (\log m^*)\sqrt{\frac {\log(1/\delta)}{T}} + \frac{m^*(\log m^*)\log(1/\delta)}{T}\right), \label{eq:m*}
\end{align}
and
\begin{align}
    & |\scdl_{2m^*}(S) - \scdl_{2m^*}(\cD)| \notag \\ \le {} & O\left( \sqrt{\frac{\log(1/\delta)}{T}} + (\log m^*)\sqrt{\frac {m^*\cdot\scdl_{2m^*}(\cD)\log(1/\delta)}{T}} + \frac{m^*(\log m^*)\log(1/\delta)}{T}\right) \notag \\
    \le {} & O\left( (\log m^*)\sqrt{\frac {m^*\cdot\scdl_{2m^*}(\cD)\log(1/\delta)}{T}} + \frac{m^*(\log m^*)\log(1/\delta)}{T}\right), \label{eq:2m*}
\end{align}
where we used $m^*\cdot \scdl_{m^*}(\cD) < 2$ and $m^*\cdot \scdl_{2m^*}(\cD) \ge 1$ implied by \Cref{lm:m*} to simplify the expressions. We assume \eqref{eq:m*} and \eqref{eq:2m*} hold from now on.

By \eqref{eq:2m*}, there exists an absolute constant $C > 0$ such that
\begin{align}
& \scdl_{2m^*}(S) \notag \\
\ge {} & \scdl_{2m^*}(\cD) - C\left( (\log m^*)\sqrt{\frac {m^*\cdot\scdl_{2m^*}(\cD)\log(1/\delta)}{T}} + \frac{m^*(\log m^*)\log(1/\delta)}{T}\right).\label{eq:2m*-1}
\end{align}
By \Cref{lm:m*}, $\scdl_{2m^*}(\cD) \ge \scdl(\cD) \ge 1/m^*$. By our choice of $m_{\max}$, we have
\begin{equation}
\label{eq:m*-mmax}
m^*\log m^* \le m_{\max} \log m_{\max} \le \sqrt{\frac{T}{C_0\log(1/\delta)}}.
\end{equation}
Choosing $C_0$ to be a sufficiently large constant, we can ensure that the right-hand side of \eqref{eq:2m*-1} is an increasing function of $\scdl_{2m^*}(\cD)\in [1/m^*,1]$. Therefore, \eqref{eq:2m*-1} implies
\begin{align}
& \scdl_{2m^*}(S) \notag \\
\ge {} & \scdl(\cD) - C\left( (\log m^*)\sqrt{\frac {m^*\cdot\scdl(\cD)\log(1/\delta)}{T}} + \frac{m^*(\log m^*)\log(1/\delta)}{T}\right)\notag \\
\ge {} & \scdl(\cD) - O\left((\log m^*)\sqrt{\frac {\log(1/\delta)}{T}} + \frac{m^*(\log m^*)\ln(1/\delta)}{T}\right),\label{eq:2m*-2}
\end{align}
where the last inequality in \eqref{eq:2m*-2} holds because $m^*\cdot \scdl(\cD) < 2$.

Let us establish an upper bound on $\scdl(S)$:
\begin{align*}
\scdl(S) & \le \max\{\scdl_{m^*}(S), 1/m^*\}\\
& \le \max\{\scdl_{m^*}(\cD), 1/m^*\} + |\scdl_{m^*}(S) - \scdl_{m^*}(\cD)|\\
& = \scdl(\cD) + |\scdl_{m^*}(S) - \scdl_{m^*}(\cD)|.
\end{align*}
Plugging \eqref{eq:m*} into the inequality above, we get
\begin{equation}
\label{eq:scdl-upper}
\scdl(S) \le \scdl(\cD) + O\left( (\log m^*)\sqrt{\frac {\log(1/\delta)}{T}} + \frac{m^*(\log m^*)\log(1/\delta)}{T}\right).
\end{equation}
We now turn to proving a lower bound for $\scdl(S)$:
\begin{align}
\scdl(S) & = \min \Big\{\inf_{m < m^*}\max\{\scdl_{m}(S),1/m\},\notag \\
& \quad \quad\quad\quad\max\{\scdl_{m^*}(S), 1/m^*\},\notag \\
& \quad \quad\quad \quad\inf_{m > m^*}\max\{\scdl_m(S),1/m\}\Big\} \notag \\
& \ge \min\{2/m^*, \max\{\scdl_{m^*}(S),1/m^*\}, \scdl_{2m^*}(S)\}.\label{eq:scdl-lower-0}
\end{align}
The first two terms in the minimum above can be lower-bounded as follows using \Cref{lm:m*}:
\begin{align*}
    2/m^* & > \scdl(\cD),\\
    \max\{\scdl_{m^*}(S),1/m^*\} & \ge \max\{\scdl_{m^*}(\cD),1/m^*\} - |\scdl_{m^*}(S) - \scdl_{m^*}(\cD)|\\
    & = \scdl(\cD) - |\scdl_{m^*}(S) - \scdl_{m^*}(\cD)|.
\end{align*}
The third term $\scdl_{2m^*}(S)$ can be lower-bounded by \eqref{eq:2m*-2}. Plugging these into \eqref{eq:scdl-lower-0} and using \eqref{eq:m*}, we get
\begin{equation}
\label{eq:scdl-lower}
\scdl(S) \ge \scdl(\cD) - O\left( (\log m^*)\sqrt{\frac {\log(1/\delta)}{T}} + \frac{m^*(\log m^*)\log(1/\delta)}{T}\right).
\end{equation}
Combining \eqref{eq:scdl-upper} and \eqref{eq:scdl-lower}, we get
\[
|\scdl(S) - \scdl(\cD)| = O\left((\log m^*)\sqrt{\frac {\log(1/\delta)}{T}} + \frac{m^*(\log m^*)\log(1/\delta)}{T}\right).
\]
Combining this with \eqref{eq:m*-mmax} proves \eqref{eq:testable-1}.

When $m^* \ge m_{\max}$, \Cref{lm:m*} implies 
\begin{equation}
\label{eq:m-max-0}
\scdl_{m_{\max}}(\cD) \le \scdl_{m^*}(\cD) \le \scdl(\cD) < 2/m_{\max}. 
\end{equation}
By \Cref{lm:testable} and the union bound, with probability at least $1-\delta$ it holds that
\begin{equation}
\label{eq:m-max}
|\scdl_{m_{\max}}(S) - \scdl_{m_{\max}}(\cD)| = O \left((\log {m_{\max}})\sqrt{\frac {\log(1/\delta)}{T}} + \frac{m_{\max}(\log m_{\max})\log(1/\delta)}{T}\right).
\end{equation}
By \eqref{eq:m-max-0},
\begin{align}
\scdl(S) \le {} & \max\{\scdl_{m_{\max}}(S),1/m_{\max}\}\notag \\
\le {} & \max\{\scdl_{m_{\max}}(\cD),1/m_{\max}\} + |\scdl_{m_{\max}}(S) - \scdl_{m_{\max}}(\cD)|\notag \\
\le {} & 2/m_{\max} + |\scdl_{m_{\max}}(S) - \scdl_{m_{\max}}(\cD)|.\label{eq:m-max-2}
\end{align}
Plugging \eqref{eq:m-max} into \eqref{eq:m-max-2} and using $\scdl(\cD) < 2/m_{\max}$, we have
\[
|\scdl(S) - \scdl(\cD)| \le O\left(\frac{1}{m_{\max}} + (\log {m_{\max}})\sqrt{\frac {\log(1/\delta)}{T}} + \frac{m_{\max}(\log m_{\max})\log(1/\delta)}{T} \right).
\]
This proves \eqref{eq:testable-1} by our choice of $m_{\max}$.

When $\scdl(\cD) = 0$, we choose $m_{\max}$ as the largest number in $M$ such that
\[
m_{\max} \sqrt{\log m_{\max}} \le \sqrt{\frac{T}{ C_0 \log(1/\delta)}}.
\]
By \Cref{lm:testable} and the union bound, with probability at least $1-\delta$ we have
\[
\scdl_{m_{\max}}(S) = O\left(\sqrt{\frac{\log(1/\delta)}{T}} + \frac{m_{\max}(\log m_{\max})\log(1/\delta)}{T}\right).
\]
Therefore,
\begin{align*}
    \scdl(S) & \le \max\{\scdl_{m_{\max}}(S), 1/m_{\max}\}\\
    & = O\left(\frac{1}{m_{\max}} + \sqrt{\frac{\log(1/\delta)}{T}} + \frac{m_{\max}(\log m_{\max})\log(1/\delta)}{T}\right).
\end{align*}
This proves \eqref{eq:testable-2} by our choice of $m_{\max}$.
\end{proof}

\section{Lower bound}
\label{sec:lb}
In this section, we prove an $\Omega(T^{-1/2})$ lower bound for the estimation error $\varepsilon(T)$ in \Cref{question}, showing that our upper bound is tight up to a logarithmic factor.
\begin{theorem}
\label{thm:lb}
Any calibration measure $\CAL_T$ solving \Cref{question} requires $\varepsilon(T) \ge \Omega(T^{-1/2})$.
\end{theorem}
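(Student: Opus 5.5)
We plan to find a single calibrated distribution $\cD_0$ for which $\E[\CAL_T(S_T)] = \Omega(T^{-1/2})$. The point is that the empirical distribution $S_T$ is itself a distribution on $[0,1]\times\{0,1\}$, so the actionability requirement of \Cref{question} applies to $S_T$. Let $\cD_0$ put all predictions at $p = 1/2$, with $y\sim\Ber(1/2)$; this is calibrated. Then $S_T$ also has $p\equiv 1/2$, and its outcomes have empirical mean $\hat q := \frac1T\sum_t y_t$, so $S_T$ is typically miscalibrated by about $T^{-1/2}$. We will show this miscalibration forces $\CAL_T(S_T)$ to be of order $T^{-1/2}$ with constant probability, whatever response function the measure uses.

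Fix the decision task $Z=(A,U)$ with $A=\{0,1\}$, $U(1,y) = y$ and $U(0,y) = 1/2$; the utilities lie in $[0,1]$. Fix any $\alpha\ge 0$ and let $r_{Z,\alpha}$ be the response function promised by actionability. It does not depend on the distribution, so $r_{Z,\alpha}(1/2)$ is a fixed distribution over $A$; let $\rho$ denote its probability of action $1$. On any distribution with $p\equiv 1/2$ and outcome mean $q$, consider the two swaps $\sigma\equiv 0$ and $\sigma\equiv 1$. Their regrets are $\rho(1/2-q)$ and $(1-\rho)(q-1/2)$ respectively. Hence
\[
\sr_Z(r_{Z,\alpha},\cdot) \ge \max\{\rho(1/2-q)_+,\ (1-\rho)(q-1/2)_+\}.
\]

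Next we use anti-concentration of the binomial: there are absolute constants $c,\kappa>0$ such that, for all $T\ge 1$,
\[
\Pr[\hat q \ge 1/2 + c/\sqrt T] \ge \kappa \quad\text{and}\quad \Pr[\hat q \le 1/2 - c/\sqrt T]\ge\kappa .
\]
This follows from the CLT for large $T$, and by direct checking with $c$ small enough for small $T$. Set $\alpha := c/(4\sqrt T)$.

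If $\rho\ge 1/2$, then on the event $\hat q\le 1/2-c/\sqrt T$ the swap $\sigma\equiv 0$ gives $\sr_Z(r_{Z,\alpha},S_T)\ge c/(2\sqrt T) > \alpha$. If instead $\rho<1/2$, the same holds on the event $\hat q \ge 1/2 + c/\sqrt T$ via $\sigma\equiv 1$. In either case, with probability at least $\kappa$ the swap regret of $r_{Z,\alpha}$ on $S_T$ exceeds $\alpha$. By the contrapositive of actionability, this forces $\CAL_T(S_T) > \alpha$ on that event. Since the measure may be taken nonnegative (or we simply lower bound on this event), we get $\E[\CAL_T(S_T)] \ge \kappa\alpha = \Omega(T^{-1/2})$. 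Testability then forces $\varepsilon(T)\ge \Omega(T^{-1/2})$.

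The main subtlety is the quantifier order: the response function may depend on $\alpha$ (and on $T$), but not on the distribution. That is exactly why we fix $\alpha$ first and only then case-split on $\rho$. The only genuinely technical step is the uniform-in-$T$ anti-concentration bound for a fair binomial, which is standard.
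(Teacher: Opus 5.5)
Your proof is correct and is essentially the paper's argument: the same calibrated distribution with every prediction equal to $1/2$ and fair-coin outcomes, a binary task whose best response switches at $1/2$, the observation that the fixed action distribution at $p=1/2$ puts mass at least $1/2$ on one action, and binomial anti-concentration in the opposite direction. The only difference is bookkeeping: the paper sets $\alpha = 4\varepsilon(T)$ and intersects the Markov event $\{\CAL_T(S_T)\le 4\varepsilon(T)\}$ with the large-regret event, whereas you fix $\alpha=\Theta(T^{-1/2})$ and lower-bound $\E[\CAL_T(S_T)]$ directly; both arguments need $\CAL_T\ge 0$, and your parenthetical alternative of bounding only on the event does not control the expectation without it.
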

We use the following lemma to prove \Cref{thm:lb}.
\begin{lemma}
\label{lm:lb}
    There exists a perfectly calibrated distribution $\cD$ of $(p,y)\in [0,1]\times \{0,1\}$ and a decision task $Z = (A,U)$ with the following properties:
    \begin{enumerate}
        \item The utility function $U$ is bounded between $0$ and $1$, i.e.,  $U:A\times \{0,1\}\to [0,1]$.
        \item Let $S$ be the uniform distribution on $T$ i.i.d.\ data points drawn from $\cD$. For every positive integer $T$ and every randomized response function $r:[0,1]\to \Delta(A)$, with probability at least $1/3$ over the randomness in $S$,
        \[
        \sr_Z(r,S) = \Omega(T^{-1/2}).
        \]
    \end{enumerate}
\end{lemma}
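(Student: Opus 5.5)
We take the simplest possible calibrated distribution: the prediction is the constant $p \equiv 1/2$ and $y\sim\Ber(1/2)$. For the decision task we take $A=\{0,1\}$ with $U(a,y)=\ind[a=y]$, which is bounded in $[0,1]$. Because the prediction is constant, every randomized response function $r$ reduces, on the sample $S$, to a single fixed distribution $\rho:=r(1/2)\in\Delta(A)$ over actions. This holds no matter how $r$ was designed. The same $\rho$ is used on every data point, so the decision maker cannot adapt to the sampled outcomes. This is the key point: $r$ cannot depend on $S$.

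Let $\hat y:=\frac1T\sum_t y_t$ be the empirical frequency of ones. For any swap $\sigma:A\to A$, the swap gain on $S$ is
\[
\E_{a\sim\rho}\bigl[\hat u(\sigma(a))-\hat u(a)\bigr], \qquad \hat u(1)=\hat y,\quad \hat u(0)=1-\hat y.
\]
We then lower bound the supremum by comparing against the two constant swaps. If $\hat y\ge 1/2$, we use $\sigma\equiv 1$, which gives
\[
\rho(0)\bigl(\hat y-(1-\hat y)\bigr)=\rho(0)(2\hat y-1).
\]
If $\hat y\le1/2$, we use $\sigma\equiv 0$, which gives $\rho(1)(1-2\hat y)$. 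Since $\rho(0)+\rho(1)=1$, at least one of $\rho(0),\rho(1)$ is $\ge 1/2$. So we fix in advance the sign event that matters. If $\rho(0)\ge1/2$, we consider the event $\hat y\ge 1/2+c/\sqrt T$. Otherwise we consider the event $\hat y\le 1/2-c/\sqrt T$. On that event, $\sr_Z(r,S)\ge \frac12\cdot 2c/\sqrt T=c/\sqrt T$.

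The remaining step, which I expect to be the only real work, is an anti-concentration bound for the binomial. We need $\Pr[\hat y\ge 1/2+c/\sqrt T]\ge 1/3$ (and symmetrically for the lower tail) for some absolute constant $c>0$ and every $T\ge1$. By symmetry, $\Pr[\hat y>1/2]=(1-\Pr[\hat y=1/2])/2$. The central binomial probability is at most $1/2$ for $T\ge 2$, and it equals $0$ for odd $T$. So the tail mass beyond the median is close to $1/2$, minus the mass in a window of width $c/\sqrt T$.

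The window mass is $O(c)$ uniformly in $T$, by the bound $\binom{T}{k}2^{-T}\le O(1/\sqrt T)$ on each point. We choose $c$ small enough to keep the total at least $1/3$. Small $T$ needs separate handling. For instance, at $T=1$ we have $\hat y\in\{0,1\}$ and each value has probability $1/2$, so there we simply take $c\le 1/2$. Alternatively, the requirement "$1/3$" could be met with a slightly different constant, e.g.\ by using $p=1/2$ and noting that $\Pr[\hat y\ne 1/2]\ge 1/2$ suffices when combined with a gain of at least $1/T\ge T^{-1/2}\cdot T^{-1/2}$.

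The cleanest route to a uniform constant is probably the Berry--Esseen theorem for large $T$ and direct enumeration for small $T$. That anti-concentration estimate is the main obstacle; everything else follows from the observation that the response is fixed.
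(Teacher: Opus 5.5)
Your argument is the paper's proof. It uses the same constant-prediction distribution and the same $0$-$1$ matching task. It reduces $r$ to the single distribution $\rho=r(1/2)$, and it picks the tail of $\hat y$ opposite to the action carrying mass at least $1/2$. Your constant-swap computation $\rho(0)(2\hat y-1)\ge c/\sqrt T$ just makes explicit the paper's ``probability $q$ of the wrong action costs $\Omega(qT^{-1/2})$''.

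The problem is the anti-concentration step, which the paper asserts without proof and which you correctly single out. The claim $\Pr[\hat y\ge 1/2+c/\sqrt T]\ge 1/3$ for \emph{every} $T\ge 1$ is false for every $c>0$:
\begin{itemize}
\item at $T=2$, the event $\hat y>1/2$ is $\{y_1=y_2=1\}$, which has probability $1/4$;
\item at $T=4$, the same event has probability $5/16$.
\end{itemize}
Concretely, take $T=2$ and $r(1/2)=\delta_0$. The swap regret on $S$ is exactly $\max\{0,2\hat y-1\}$, which vanishes with probability $3/4$. So with this $\cD$ and $Z$ the constant $1/3$ is unattainable, and the paper's proof has the same defect. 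Your own estimate already shows this: a central atom of mass at most $1/2$ only gives $\Pr[\hat y>1/2]\ge 1/4$. Neither shrinking $c$ nor enumerating small $T$ can reach $1/3$ at $T=2,4$.

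The repair is cosmetic. Prove the lemma with probability $1/4$ in place of $1/3$. With a small absolute constant (say $c=1/10$), your Berry--Esseen bound for large $T$ plus enumeration for small $T$ then goes through, with $T=2$ the tight case at exactly $1/4$. In the proof of \Cref{thm:lb}, apply Markov at level $4/5$: $\Pr[\CAL_T(S_T)\le 5\varepsilon(T)]\ge 4/5$, and set $\alpha=5\varepsilon(T)$. The two events still intersect because $4/5+1/4>1$.

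Separately, drop your ``Alternatively'' remark. The relevant event is one-sided, fixed in advance by which action $\rho$ favors, so $\Pr[\hat y\ne 1/2]\ge 1/2$ is not the right quantity. Also, a gain of order $1/T$ is not $\Omega(T^{-1/2})$.
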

\begin{proof}
    Let $\cD$ be the uniform distribution on $(1/2,0)$ and $(1/2,1)$. Let $U$ be the simple decision task with binary actions $A = \{0,1\}$ and a 0-1 utility function $U(a,y) = \ind[a = y]$. Consider the action distribution $r(1/2)$ produced by an arbitrary randomized response function $r$. Since there are only two actions $a = 0,1$, $r(1/2)$ must put probability mass at least $1/2$ on one of the two actions. We assume without loss of generality that $r(1/2)$ puts at least probability $1/2$ on action $a = 0$.

    On a random sample of $T$ i.i.d.\ points $(p_1,y_1),\ldots,(p_T,y_T)$ drawn from $\cD$, we have $p_1 = \cdots = p_T = 1/2$. Also, $y_1,\ldots,y_T$ are drawn independently from $\Ber(1/2)$, so with probability at least $1/3$, we have 
    \begin{equation}
    \label{eq:unbiased-coin}
    \frac 1T \sum_{t = 1}^Ty_t \ge 1/2 + \Omega(T^{-1/2}).
    \end{equation}
It remains to show that whenever \eqref{eq:unbiased-coin} holds, we have $\sr_Z(r,S) = \Omega(T^{-1/2})$. When \eqref{eq:unbiased-coin} holds, for $(p,y)$ drawn from  $S$, we have $p = 1/2$ deterministically and $y$ is drawn from a Bernoulli distribution with mean $1/2 + \Omega(T^{-1/2})$. For this distribution of $y$, the best action to take is $a = 1$. A probability $q$ of taking the wrong action $a = 0$ incurs swap regret $\Omega(qT^{-1/2})$. Since $r(1/2)$ puts probability at least $1/2$ on action $a = 0$, it incurs swap regret $\Omega(T^{-1/2})$.
\end{proof}
\begin{proof}[Proof of \Cref{thm:lb}]
    Let $\CAL_T$ be a solution to \Cref{question}. Consider the distribution $\cD$ and the decision task $Z$ from \Cref{lm:lb}. The second requirement of \Cref{question} implies $\E[\CAL_T(S_T)]\le \varepsilon(T)$, where $S_T$ is the uniform distribution over $(p_1,y_1),\ldots,(p_T,y_T)$ drawn i.i.d.\ from $\cD$. By Markov's inequality, $\Pr[\CAL_T(S_T) \le 4\varepsilon(T)] \ge 3/4$. Setting $\alpha = 4\varepsilon(T)$ in the first requirement of \Cref{question}, we have $\Pr[\sr_Z(r_{Z,\alpha},
    S_T) \le 4\varepsilon(T)] \ge 3/4$. By \Cref{lm:lb}, $\Pr[\sr_Z(r_{Z,\alpha},S_T) \ge \Omega(T^{-1/2})] \ge 1/3$. Combining the last two inequalities, we get $\varepsilon(T) \ge \Omega(T^{-1/2})$.
\end{proof}

\section{Experiments}
\label{sec:experiments}
We follow the experimental setup of \citet{test-action} and extend it with additional experiments. For a given $\alpha \in [0,1]$, each sample $(X_i,Y_i)$ is drawn as follows: the feature $X_i$ is sampled independently from the uniform distribution on $[0,1]$, and the label $Y_i$ is drawn from a Bernoulli distribution with mean 
$\E[Y|X=x] = \alpha (1-2x)^2+(1-\alpha)x$. We fit a predictor $f$ using logistic regression on $n=500$ samples from the distribution. To break monotonicity, we also consider the predictor that flips the labels, i.e. $1-f$. To build intuition of the data distribution and the predictors used in our experiments, we plot $\E[Y|f(x)], f(x)$ and $1-f(x)$ for different values of $\alpha$ in \Cref{fig:calibration_curves}.

\begin{figure}[ht]
\centering
\begin{subfigure}{0.32\textwidth}
    \includegraphics[width=\textwidth]{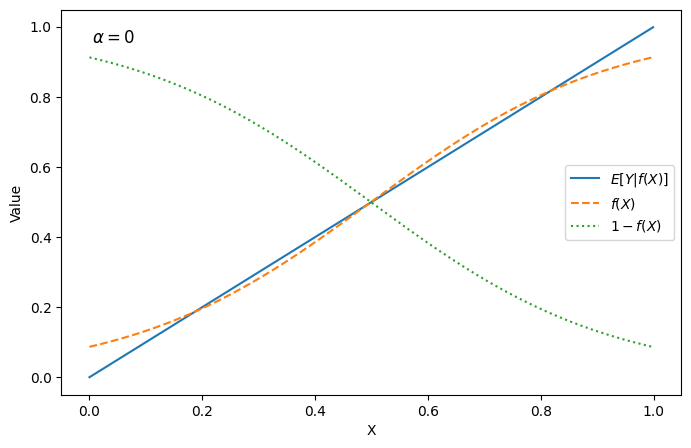}
    \caption{$\alpha = 0.0$}
\end{subfigure}
\begin{subfigure}{0.32\textwidth}
    \includegraphics[width=\textwidth]{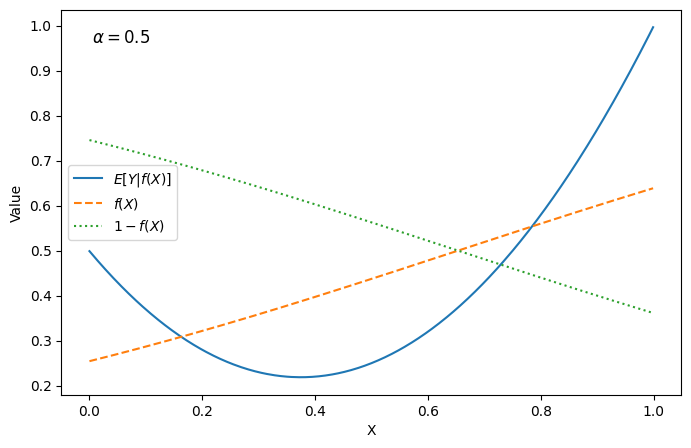}
    \caption{$\alpha = 0.5$}
\end{subfigure}
\\[1em]
\begin{subfigure}{0.32\textwidth}
    \includegraphics[width=\textwidth]{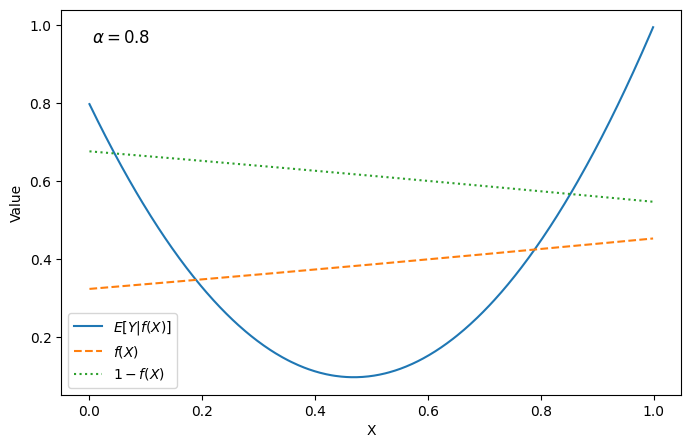}
    \caption{$\alpha = 0.8$}
\end{subfigure}
\begin{subfigure}{0.32\textwidth}
    \includegraphics[width=\textwidth]{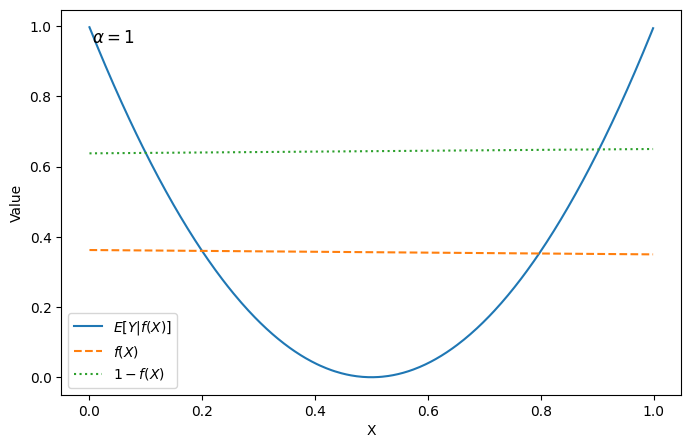}
    \caption{$\alpha = 1$}
\end{subfigure}
\caption{Prediction curves for different values of $\alpha$.}
\label{fig:calibration_curves}
\end{figure}

\paragraph{Testability.}
In \Cref{tab:calibration_metrics}, we compute the value and the standard deviation of the estimated calibration errors from $N=1000$ samples. To estimate $\ECE$, we use $\bECE$ for $11$ bins of equal length. 

\begin{table}[ht]
\centering
\begin{tabular}{lcccc}
\toprule
Measure & $\alpha = 0$ & $\alpha = 0.5$ & $\alpha = 0.8$ & $\alpha = 1$ \\
\midrule
$\smCE$   & $0.021 \pm 0.014$ & $0.028 \pm 0.013$ & $0.027 \pm 0.016$ & $0.025 \pm 0.016$ \\
$\mathsf{Cutoff}$ & $0.030 \pm 0.012$ & $0.068 \pm 0.016$ & $0.110 \pm 0.016$ & $0.136 \pm 0.015$ \\
$\mathsf{bECE}$    & $0.043 \pm 0.011$ & $0.117 \pm 0.015$ & $0.140 \pm 0.054$ & $0.064 \pm 0.065$ \\
$\scdl$   & $0.016 \pm 0.003$ & $0.036 \pm 0.006$ & $0.080 \pm 0.014$ & $0.076 \pm 0.034$ \\
\bottomrule
\end{tabular}
\caption{Calibration error estimates (mean $\pm$ std) for predictor $f$ across values of $\alpha$.}
\label{tab:calibration_metrics}
\end{table}

Across most values of $\alpha$ in \Cref{tab:calibration_metrics}, $\scdl$ has the smallest standard deviation among the four error measures, suggesting it can be estimated more reliably from finite samples. The exception is $\alpha=1$, where $\smCE$ and $\mathsf{Cutoff}$ have lower variance, though $\scdl$ still compares favorably to $\mathsf{bECE}$.

\paragraph{Actionability.}

We evaluate actionability using the decision task from \citet{test-action}: the action space is $A = \{0,1\}$, the utility function is $U(a,y) = 1-0.35(1-y)a-0.65y(1-a)$ and the best-response function for this task is $r_Z^*(p) = \ind[p \geq 0.35]$. 
For $200$ uniformly random choices of $\alpha$, we compute each calibration measure and the swap regret of the best-response action using the same setup as above. We show scatter plots of the results for $f$ in \Cref{fig:correlation_f_200} and for $1-f$ in \Cref{fig:correlation_1-f_200}, where darker shades of blue correspond to higher $\alpha$. To quantify the relationship between each calibration error measure and the swap regret, we report the Spearman correlation.

\begin{figure}[htbp]
  \centering
  \includegraphics[width=0.8\textwidth]{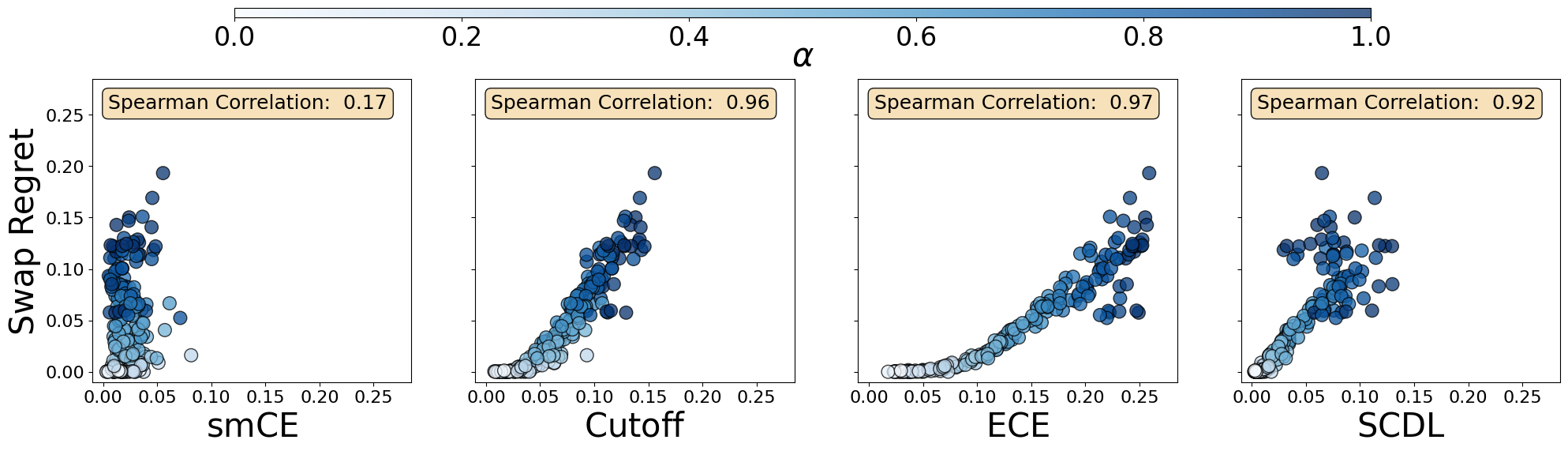}
  \caption{Plots of calibration error measures versus swap regret for predictor $f$. Spearman correlations are reported in each subplot.}
  \label{fig:correlation_f_200}
\end{figure}

\begin{figure}[htbp]
  \centering
  \includegraphics[width=0.8\textwidth]{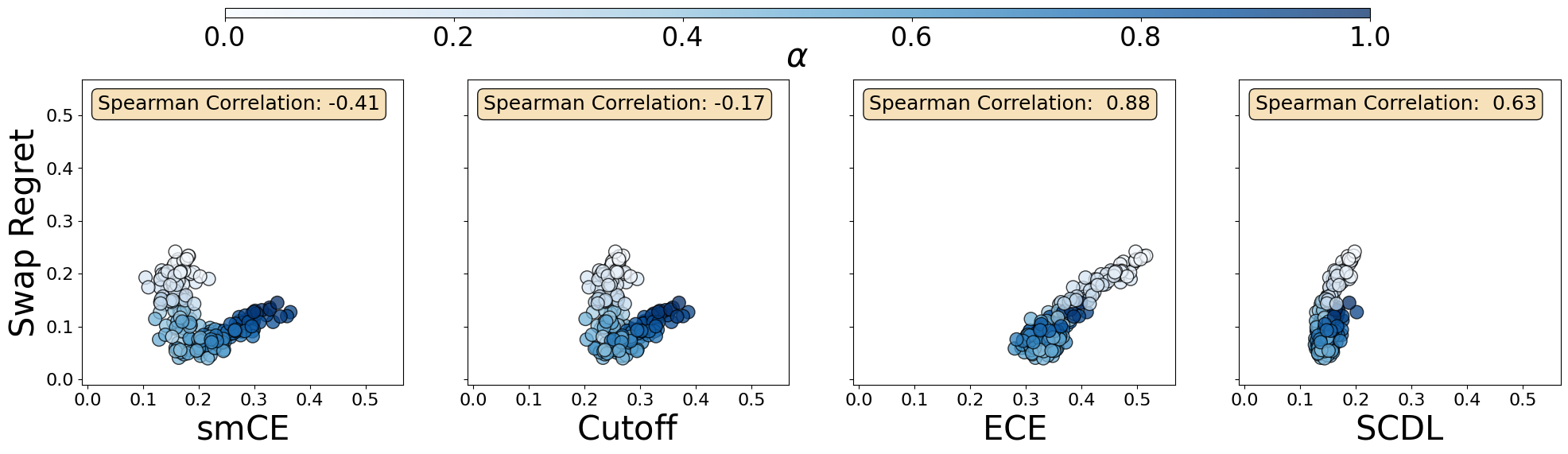}
  \caption{Plots of calibration error measures versus swap regret for predictor $1-f$. Spearman correlations are reported in each subplot.}
  \label{fig:correlation_1-f_200}
\end{figure}

For predictor $f$, all measures except $\smCE$ are highly correlated with swap regret (see \Cref{fig:correlation_f_200}). However, the picture changes for $1-f$. $\mathsf{Cutoff}$ is no longer correlated with swap regret, it just upper bounds it, with several (prediction, outcome) distributions sharing similar swap regret but different $\mathsf{Cutoff}$ values. $\ECE$ remains highly correlated and $\scdl$ shows moderate correlation, indicating that it tracks swap regret adequately well. This can be explained by the fact that the optimal swap function for $1-f$ is not monotone, falling outside the swap function class that $\mathsf{Cutoff}$ restricts to when bounding the swap regret.

\section*{Acknowledgments}
LH would like to thank Yifan Wu for insightful discussions that initiated this work as well as helpful conversations about \cite{smooth-decision}. LH would like to thank Parikshit Gopalan for conversations about \cite{efficient-decision,importance}.

%% file: appendix.tex
\section{Quadratic Gap between Smooth Calibration Error and Swap Regret}
\label{sec:lb-smooth}
\label{sec:quadratic-smooth}
\begin{theorem}[\cite{smooth-decision}]
\label{thm:quadratic-smooth}
    For every $\varepsilon\in [0,1/8)$ there exist two distributions $\cD_1$ and $\cD_2$ of $(p,y)\in [0,1]\times \{0,1\}$ and a decision task $Z = (A,U)$ such that 
    \begin{enumerate}
        \item $\smCE(\cD_1) = \smCE(\cD_2) = \varepsilon$,
        \item the utility function $U$ is bounded between $0$ and $1$, i.e.,  $U:A\times \{0,1\}\to [0,1]$, and
        \item    for every randomized response function $r:[0,1]\to \Delta(A)$,
    \[
    \max\{\sr_Z(r,\cD_1), \sr_Z(r,\cD_2)\} \ge \Omega(\sqrt \varepsilon).
    \]
    \end{enumerate}
\end{theorem}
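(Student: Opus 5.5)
We take $\delta:=\sqrt{\varepsilon}$, up to a constant, and build $\cD_1,\cD_2$ on the \emph{same} two prediction values, so that a response function cannot tell them apart, while their conditional means $q=\E[y\mid p]$ sit on opposite sides of $1/2$ at one of the two points. The bias pattern is chosen so that a $1$-Lipschitz weight function $w$ can exploit it only at order $\delta\cdot\delta$. That pins $\smCE$ at $\Theta(\delta^2)=\Theta(\varepsilon)$, while the decision-relevant gap in $q$ stays of order $\delta=\sqrt\varepsilon$.

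\textbf{Step 1: the construction.} Put mass $1/2$ on each of $x_1=1/2$ and $x_2=1/2+2\delta$. In $\cD_1$ set $q(x_1)=q(x_2)=1/2+\delta$, so the biases are $+\delta$ and $-\delta$. In $\cD_2$ set $q(x_1)=1/2-\delta$ and $q(x_2)=1/2+3\delta$, giving biases $-\delta$ and $+\delta$. In both cases the biases cancel, and for any $1$-Lipschitz $w$ we get $\E[(y-p)w(p)]=\tfrac{\delta}{2}\,|w(x_1)-w(x_2)|\le \delta^2$. This bound is attained by $w(p)=\pm(p-1/2-\delta)$ clipped to $[-1,1]$, so $\smCE(\cD_1)=\smCE(\cD_2)=\delta^2$. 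Setting $\delta=\sqrt\varepsilon$ gives exactly $\varepsilon$, and $\varepsilon<1/8$ keeps every $q$ inside $[0,1]$.

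\textbf{Step 2: the case split.} We first try the $0$-$1$ task $A=\{0,1\}$, $U(a,y)=\ind[a=y]$. The two cases are split by where $r(x_1)$ puts probability at least $1/2$.
\begin{itemize}
\item If $r(x_1)$ puts at least $1/2$ on action $0$: in $\cD_1$ every prediction has $q=1/2+\delta$. The constant swap $\sigma\equiv 1$ then gains at least $\tfrac12\cdot\tfrac12\cdot 2\delta=\Omega(\delta)$.
\item If $r(x_1)$ puts at least $1/2$ on action $1$: we want to exploit $\cD_2$, where action $1$ is wrong at $x_1$. But swap regret is measured per \emph{action}, not per prediction. Action $1$ may also be used at $x_2$, where $q=1/2+3\delta$, and pooling the two points can make the average $q$ of the action-$1$ pool exceed $1/2$. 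In that case the swap $1\mapsto 0$ does not gain.
\end{itemize}

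\textbf{The main obstacle} is exactly this pooling issue in the second case. I would first fix it by changing the task rather than the distributions, to one where a swap can separate the points. The candidate is the quadratic task $A=[0,1]$, $U(a,y)=1-(a-y)^2$. There the optimal swap sends each action $a$ to $\E[y\mid a]$, so the swap regret equals $\E[(a-\E[y\mid a])^2]$. We then argue that whatever action distributions $r(x_1),r(x_2)$ are used, one of the two distributions has a constant fraction of mass whose action is at distance $\Omega(\delta)$ from the pooled conditional mean. Since $q(x_1)$ differs by $2\delta$ between $\cD_1$ and $\cD_2$ while the actions are identical, this should follow from a triangle-inequality/averaging argument.

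That argument only gives regret $\Omega(\delta^2)$, however, and we need $\Omega(\delta)$. So the real fix is a task whose utility difference across a threshold is linear in the gap. I would use a task mixing the threshold at $1/2$ with a third \emph{abstain} action of utility exactly $1/2$. Any action used at $x_1$ then either commits to a side, which is wrong by $\Omega(\delta)$ in one of $\cD_1,\cD_2$, or abstains, which loses $\Omega(\delta)$ against $\sigma$ moving the abstain pool to the correct side. Before finalizing, one must check that the abstain pool cannot be contaminated by $x_2$ so as to cancel the gain; if it can, adjust the masses at $x_2$ and rebalance the biases accordingly.
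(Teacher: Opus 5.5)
There is a genuine gap, and it goes beyond the unfinished second case: your $\cD_2$ cannot work with \emph{any} decision task. In both of your distributions the marginal of $y$ is $\Ber(1/2+\delta)$; in $\cD_2$ this is because $\tfrac12(1/2-\delta)+\tfrac12(1/2+3\delta)=1/2+\delta$. Take the constant response $r\equiv a_0$, where $a_0$ is a best response to $\Ber(1/2+\delta)$ (up to arbitrarily small slack if the supremum is not attained). With a single action in play, the swap regret reduces to $\sup_a\E[U(a,y)]-\E[U(a_0,y)]$. This depends only on the marginal of $y$, so $\sr_Z(r,\cD_1)=\sr_Z(r,\cD_2)=0$ for the $0$-$1$ task, the quadratic task, the abstain task, or any other $Z$. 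This is your pooling obstruction in its extreme form: play action $1$ at both points.

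So the quadratic and abstain routes fail too. The abstain action with utility $1/2$ is in any case weakly dominated, since $\max\{q,1-q\}\ge 1/2$. The obstruction is structural. The weights $w\equiv\pm1$ are admissible in $\smCE$, so $|\E[y]-\E[p]|\le\smCE(\cD)$. Hence any two distributions with the same prediction marginal and $\smCE\le\varepsilon$ have $y$-means within $2\varepsilon$, and the constant best response to one of them has swap regret $O(\varepsilon)$ on both. You therefore have to drop the principle ``same prediction values, so $r$ cannot tell them apart.'' Only part of the prediction mass may be shared, and the $y$-means must differ by $\Theta(\sqrt\varepsilon)$. Separately, your range claim is wrong: $1/2+3\sqrt\varepsilon\le1$ needs $\varepsilon\le1/36$, and $x_2\le1$ needs $\varepsilon\le1/16$.

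Your $\cD_1$ is fine. Shifting both the predictions and $q$ down by $\delta$ gives exactly the paper's $\cD_1$: $p$ uniform on $\{1/2-\sqrt\varepsilon,1/2+\sqrt\varepsilon\}$ and $y\sim\Ber(1/2)$ independently, with $\smCE=\varepsilon$ as in your Step 1. The paper takes $\cD_2$ to be the point mass $p=1/2-\sqrt\varepsilon$ with $y\sim\Ber(1/2-\sqrt\varepsilon-c\varepsilon)$. Its bias is $-c\varepsilon$, so $c=1$ gives $\smCE=\varepsilon$.

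In both distributions $y$ is independent of $p$, hence of the action. Swapping an entire action to another then gains exactly the probability of that action times the gap in expected utility, so pooling disappears. The task $U(a,y)=(1-\sqrt\varepsilon)\ind[a=0,y=0]+\ind[a=1,y=1]$ puts the decision threshold strictly between the two means. Action $1$ wins by $\sqrt\varepsilon/2$ at $q=1/2$, and action $0$ wins by $\Omega(\sqrt\varepsilon)$ at $q=1/2-\sqrt\varepsilon-\varepsilon$. The tilt is needed because the plain $0$-$1$ task ties at $q=1/2$.

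Finally, the prediction $1/2-\sqrt\varepsilon$ carries mass at least $1/2$ in both distributions. Whichever action $r(1/2-\sqrt\varepsilon)$ favors is therefore wrong with probability at least $1/4$ in one of them, giving $\Omega(\sqrt\varepsilon)$ swap regret.
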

\begin{proof}
    Let $\cD_1$ be the distribution of $(p,y)\in [0,1]\times \{0,1\}$ where $p$ is uniform from $\{1/2 - \sqrt \varepsilon, 1/2 + \sqrt \varepsilon\}$ and $y$ is drawn independently and uniformly from $\{0,1\}$. Let $\cD_2$ be the distribution of $(p,y)$ where $p = 1/2 - \sqrt \varepsilon$ deterministically and $y\sim \Ber(1/2 - \sqrt \varepsilon - c\varepsilon)$ for a parameter $c > 0$. Any $c = O(1)$ ensures that both $\cD_1$ and $\cD_2$ have smooth calibration error $O(\varepsilon)$, and we can choose $c$ appropriately to make the two distributions have the same smooth calibration error. Consider a simple decision task with binary actions $A = \{0,1\}$ and utility function
\[
u(a,y) = (1 - \sqrt \varepsilon)\ind[a = 0,y = 0] + \ind[a = 1,y = 1].
\]
On $\cD_1$, the optimal action to take is always $a = 1$, regardless of the prediction $p$. On $\cD_2$, the optimal action to take is always $a = 0$. A probability $q$ of taking the wrong action incurs $\Omega(q\sqrt \varepsilon)$ expected swap regret, and in fact $\Omega(q\sqrt \varepsilon)$ external regret as well. Since both $\cD_1$ and $\cD_2$ put a probability mass of at least $1/2$ on the same prediction $1/2 - \sqrt \varepsilon$, any post-processing, even a randomized one, will lead to a probability of at least $1/4$ taking the wrong action on either $\cD_1$ or $\cD_2$.
\end{proof}
\section{Power-2/3 Gap between Cutoff Calibration Error and Swap Regret}
\label{sec:lb-cutoff}
\label{sec:gap-cutoff}
\begin{theorem}
\label{thm:gap-cutoff}
    Let $Z = (A,U)$ be the decision task where $A = [0,1]$ and $U(a,y) = 1 - (a - y)^2$. For every $\varepsilon \in (0,10^{-100})$ and every randomized response function $r:[0,1]\to \Delta(A)$, there exists a distribution $\cD$ with $\cutoff(\cD)  = \varepsilon$ and $\sr_Z(r,\cD) \ge \Omega(\varepsilon^{2/3})$.
\end{theorem}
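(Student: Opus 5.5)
The plan is to argue adversarially against the fixed response $r$, using a special feature of the squared-loss task. For $U(a,y) = 1-(a-y)^2$ the best swap sends each action $a$ to $\E[y\mid a]$. Hence, for any distribution $\cD$ and randomized $r$,
\[
\sr_Z(r,\cD) = \E_{a}\big[(a - \E[y\mid a])^2\big] = \E_a\big[(a - \E[q\mid a])^2\big],
\]
where $q = \E[y\mid p]$ and $a\sim r(p)$. Swap regret therefore only penalizes the gap between an action and the average true probability of the predictions routed to it; merging predictions costs nothing by itself. I will set $\eta := \varepsilon^{1/3}$, which is the scale at which $r$ is forced to make a mistake, and split into two cases according to how $r$ behaves at resolution $\eta$.

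\textbf{Case 1: $r$ is coarse somewhere.} Suppose there is a prediction $p_0$ such that $r(p_0)$ puts constant mass on actions at distance $\ge c\eta$ from $p_0$. Take $\cD$ to be the point mass at $p_0$ with $y\sim\Ber(p_0)$. This distribution is calibrated, so $\cutoff(\cD)=0$. To get $\cutoff = \varepsilon$ exactly, I will perturb the outcome mean by $\varepsilon$, which changes the regret by only $O(\eta\varepsilon)$. Every action then has $\E[y\mid a]= p_0\pm\varepsilon$, so the regret is $\Omega(\eta^2)=\Omega(\varepsilon^{2/3})$. The same argument works when $r$ spreads actions at distance $\ge c\eta$ around their conditional mean.

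\textbf{Case 2: $r$ is $c\eta$-accurate everywhere.} Now every action $a\sim r(p)$ lies within $c\eta$ of $p$ with high probability. Take $k=\varepsilon^{-2/3}$ grid points $x_j$ spaced $\delta \gg c\eta$ apart, with $k\delta<1$, each of mass $1/k$. (If the needed spacing does not fit in $[0,1]$, I will instead place points at spacing $\Theta(\eta)$ in clusters.) Set true probabilities $q_j = x_j + s_j\eta$ with signs $s_j=\pm1$ alternating. Because the biases alternate, every interval sum $\E[(y-p)\ind[a'\le p\le b']]$ telescopes to at most one term, so
\[
\cutoff = \Theta(\eta/k) = \Theta(\varepsilon^{1/3}\cdot\varepsilon^{2/3}) = \Theta(\varepsilon),
\]
and I will tune constants to make it exactly $\varepsilon$. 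Since $r$ is accurate and the points are separated by more than $2c\eta$, the action sets of different $x_j$ are essentially disjoint. Thus $\E[q\mid a]\approx q_j = x_j\pm\eta$ while $a\approx x_j\pm c\eta$, and with $c<1/2$ this gives regret $\Omega(\eta^2)=\Omega(\varepsilon^{2/3})$. It is cleaner to average over the two sign patterns $\pm(s_j)$: the law of $a$ is the same under both patterns, and
\[
(a-\E_+[q\mid a])^2+(a-\E_-[q\mid a])^2\ge \tfrac12\big(\E_+[q\mid a]-\E_-[q\mid a]\big)^2 .
\]
The right-hand side is $\Omega(\eta^2)$ for any action whose preimage is dominated by a single $x_j$, so one of the two patterns works.

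The main obstacle is making the dichotomy exhaustive for an arbitrary randomized $r$, in particular an $r$ that merges a few neighbouring grid points onto shared actions so that the biases partially cancel. My first attempt will be to choose the grid spacing larger than the Case 2 accuracy radius and use the averaging identity above. With that spacing, any action receiving substantial mass from two grid points must lie at distance $\Omega(\delta)\ge\Omega(\eta)$ from one of them, and I will charge that mass to a Case 1 style point-mass distribution for that grid point. The remaining work is bookkeeping: pick either the point-mass $\cD$ or the better sign pattern, and fix the constants so that $\cutoff(\cD)=\varepsilon$ exactly. The condition $\varepsilon<10^{-100}$ leaves ample room for those constants.
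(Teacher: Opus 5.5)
There is a genuine gap in Case 2. Your identity $\sr_Z(r,\cD)=\E[(a-\E[y\mid a])^2]$ and your Case 1 match the paper, but Case 2 cannot work as set up. You need $k=\varepsilon^{-2/3}$ predictions separated by more than the accuracy radius $c\eta=c\varepsilon^{1/3}$. That requires an interval of length about $c\varepsilon^{-1/3}\gg 1$, so ``$k\delta<1$'' is never satisfiable; it is not an edge case.

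The fallbacks do not help. Points placed at a common prediction value merge into a single prediction with averaged bias. Suppose instead you pack the points at spacing $\delta\approx 1/k\ll\eta$. Take $r$ to be nearest-point rounding to a grid of mesh just below $2c\eta$. It never moves a prediction by $c\eta$, so your Case 1 never fires. It pools $\Theta(\eta/\delta)$ consecutive alternating points onto one action, and their pooled mean of $q$ lies within $O(\delta)$ of that action. The regret is then $O(\delta^2)=O(\varepsilon^{4/3})$. Flipping all signs changes nothing, since both patterns alternate. This is exactly the merging obstacle you flag, and the larger-spacing fix you propose for it is impossible for the reason above.

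The obstruction is quantitative. Against rounding at mesh $\Theta(\eta)$, each cell is an interval, so its pooled mass $W_\ell$ and pooled bias $\beta_\ell$ satisfy $W_\ell|\beta_\ell|\le\varepsilon$. The part of the regret due to miscalibration is therefore $O(\sum_\ell W_\ell\beta_\ell^2)\le O(\varepsilon\sum_\ell|\beta_\ell|)$, summed over $O(\varepsilon^{-1/3})$ cells. This reaches $\varepsilon^{2/3}$ only if $\Omega(\varepsilon^{-1/3})$ cells carry constant bias, and hence mass $O(\varepsilon)$ each. Biases of size $\eta$ yield only $O(\varepsilon)$.

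The missing idea is the paper's scaling:
\begin{itemize}
\item Take $m=\lfloor\varepsilon^{-1/3}\rfloor$ points $p_i=1/3+i/(3m)$, each of mass $3\varepsilon$, with constant biases $q_i-p_i=(-1)^i/3$. Then $\cutoff(\cD)=\varepsilon$, and only $3m\varepsilon=\Theta(\varepsilon^{2/3})$ of the mass is miscalibrated.
\item Put the remaining $1-3m\varepsilon$ mass on the distant calibrated prediction $p=0$.
\end{itemize}
This forces a stronger dichotomy than yours. A constant-probability bound on the event $|a-p|\ge c\eta$ only caps, at a constant, the fraction of $r(0)$ that lands near the $p_i$. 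But $r(0)$ carries almost all the mass, and the biased mass is only $\Theta(\varepsilon^{2/3})$.

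The paper instead splits on whether $\E_{a\sim r(p)}[(a-p)^2]>10^{-6}\varepsilon^{2/3}$ for some $p\in[0,1-\varepsilon]$. If so, the point mass at $p$ with $\E[y]=p+\varepsilon$ has regret at least this quantity minus $2\varepsilon$. If not, Markov's inequality controls deviations at scale $1/(6m)$ for the $p_i$ and at scale about $1/3$ for $p=0$.

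The paper then concludes by Cauchy--Schwarz, $\sr_Z(r,\cD)\,\E[\varphi(a)^2]\ge\E[(y-a)\varphi(a)]^2$, with $\varphi(a)=\sum_i(-1)^i\ind[|a-p_i|<1/(6m)]$. Here $\E[\varphi(a)^2]=O(\varepsilon^{2/3})$ and $\E[(y-a)\varphi(a)]=\Omega(\varepsilon^{2/3})$, which gives regret $\Omega(\varepsilon^{2/3})$.
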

The proof of the this theorem as well as the exponent $2/3$ was discovered by ChatGPT Pro. We independently verified that the proof is correct and present it below. 
We first prove the following helper lemma:
\begin{lemma}
\label{lm:cutoff-helper}
    Let $Z$ be the decision task from \Cref{thm:gap-cutoff}. 
    Let $\cD$ be a distribution on $[0,1]\times \{0,1\}$ and let $r:[0,1]\to \Delta([0,1])$ be a randomized response function.
    Let $\Gamma$ be the distribution of $(a,y)\in [0,1]\times \{0,1\}$ where we first draw $(p,y)$ from $\cD$ and then sample $a\sim r(p)$. We have
    \[
    \sr_Z(r,\cD) = \E_\Gamma[(a - \E_\Gamma[y|a])^2].
    \]
    Moreover, for every function $\varphi:[0,1]\to \R$, it holds that
    \begin{equation}
    \label{eq:varphi}
    \sr_Z(r,\cD) \E_\Gamma[\varphi(a)^2] \ge \E_\Gamma[(y - a)\varphi(a)]^2.
    \end{equation}
\end{lemma}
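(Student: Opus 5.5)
The plan is to reduce the swap regret to a statement about the joint distribution $\Gamma$ of (action, outcome), and then use the explicit structure of the quadratic utility. First observe that in \Cref{def:swap} the expectation depends on $(p,y,a)$ only through $(a,y)$. Hence
\[
\sr_Z(r,\cD) = \sup_{\sigma:[0,1]\to[0,1]} \E_\Gamma\big[(a-y)^2 - (\sigma(a)-y)^2\big].
\]
Condition on $a$ and write $\bar q(a):=\E_\Gamma[y\mid a]\in[0,1]$. Since $y\in\{0,1\}$, for any $b\in[0,1]$ we have
\[
\E_\Gamma[(b-y)^2\mid a] = (b-\bar q(a))^2 + \bar q(a)(1-\bar q(a)),
\]
because the variance term does not depend on $b$. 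The bracket conditional on $a$ therefore equals $(a-\bar q(a))^2 - (\sigma(a)-\bar q(a))^2$. This is maximized pointwise by $\sigma(a)=\bar q(a)$, which is a legitimate swap since $\bar q(a)\in[0,1]=A$. This gives $\sr_Z(r,\cD)=\E_\Gamma[(a-\bar q(a))^2]$.

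One technical point needs care: the supremum over arbitrary functions $\sigma$ versus the measurability of $\bar q$. I would handle it by choosing a measurable version of the conditional expectation. For the upper bound, I would apply the pointwise inequality under the conditional expectation. I expect this to be the only mildly delicate step.

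For \eqref{eq:varphi}, use the tower property to write
\[
\E_\Gamma[(y-a)\varphi(a)] = \E_\Gamma[(\bar q(a)-a)\varphi(a)].
\]
Cauchy--Schwarz then bounds its square by $\E_\Gamma[(\bar q(a)-a)^2]\,\E_\Gamma[\varphi(a)^2] = \sr_Z(r,\cD)\,\E_\Gamma[\varphi(a)^2]$. If $\E_\Gamma[\varphi(a)^2]=\infty$ the inequality is trivial, with the convention that the left side is $+\infty$ unless the regret is zero; in that case $\bar q(a)=a$ almost surely and the right side is $0$. Otherwise integrability follows from Cauchy--Schwarz as well.
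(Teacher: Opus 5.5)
Your proposal is correct and follows essentially the paper's proof. The paper also uses the bias--variance decomposition (cross term vanishing) to show the optimal swap is $\sigma(a)=\E_\Gamma[y\mid a]$, and then combines the tower property with Cauchy--Schwarz; your added care about measurability and the case $\E_\Gamma[\varphi(a)^2]=\infty$ covers details the paper glosses over.
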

\begin{proof}
    By the definition of swap regret,
    \begin{equation}
    \label{eq:cutoff-swap-regret}
            \sr_Z(r,\cD) = \sup_{\sigma:[0,1]\to [0,1]}\E_{(a,y)\sim \Gamma}[(a - y)^2 - (\sigma(a) - y)^2].
    \end{equation}
    We can simply the expression using the following calculation:
    \begin{align*}
    \E[(a - y)^2] & = \E[(a - \E[y|a])^2] + \E[(y - \E[y|a])^2] - 2 \, \E[(a - \E[y|a])(y - \E[y|a])]\\
    & = \E[(a - \E[y|a])^2] + \E[(y - \E[y|a])^2].
    \end{align*}
    Similarly, $\E[(\sigma(a) - y)^2] = \E[(\sigma(a) - \E[y|a])^2] + \E[(y - \E[y|a])^2]$. Therefore, the supremum in \eqref{eq:cutoff-swap-regret} is achieved when $\sigma(a) = \E[y|a]$. Now \eqref{eq:cutoff-swap-regret} simplifies to
    \[
    \sr_Z(r,\cD) = \E_{(a,y)\sim \Gamma}[(a - \E[y|a])^2].
    \]
    By the Cauchy-Schwarz inequality,
    \[
    \sr_Z(r,\cD) \E[\varphi(a)^2] \ge \E[(a - \E[y|a])\varphi(a)]^2 = \E[(a - y)\varphi(a)]^2. \qedhere
    \]
\end{proof}

\begin{proof}[Proof of \Cref{thm:gap-cutoff}]
    Consider an arbitrary distribution $\cD$ on $[0,1]\times \{0,1\}$. Let $\Gamma$ be the distribution of $(a,y)$, where we first draw $(p,y)\sim \cD$ and then draw $a\sim r(p)$. 
    By \Cref{lm:cutoff-helper},
    \[
    \sr_Z(r,\cD) = \E_\Gamma[(a - \E[y|a])^2].
    \]

Let $\cD$ be the distribution of $(p,y)$ where $p = p_0\in [0,1-\varepsilon]$ is deterministic and $\E[y] = p_0 + \varepsilon$. Clearly, we have $\cutoff(\cD) = \varepsilon$. In this case, $y$ and $a$ are independent, so we have $\E[y|a] = p_0 + \varepsilon$ and
\[
\sr_Z(r,\cD) = \E_{a\sim r(p_0)}[(a - (p_0 + \varepsilon))^2] \ge \E_{a\sim r(p_0)}[(a - p_0)^2] - 2\varepsilon.
\]
If $\E_{a\sim r(p_0)}[(a - p_0)^2] > 10^{-6}\varepsilon^{2/3}$, then we have $\sr_Z(r,\cD) \ge \Omega(\varepsilon^{2/3})$ and the theorem holds. 

We can thus assume that $\E_{a\sim r(p)}[(a - p)^2] \le 10^{-6}\varepsilon^{2/3}$ for every $p\in [0,1-\varepsilon]$. By Markov's inequality, for every $s > 0$,
\begin{equation}
\label{eq:cutoff-markov}
\Pr_{a\sim r(p)}[|a - p| \ge s]\le \frac{10^{-6}\varepsilon^{2/3}}{s^2}.
\end{equation}

We choose $m = \lfloor \varepsilon^{-1/3}\rfloor$ evenly spaced points $p_1,\ldots,p_m\in [1/3,2/3]$, where each $p_i = 1/3 + i/(3m)$. We define the distribution $\cD$ of $(p,y)$ as follows. For every $i = 1,\ldots,m$, we set $\Pr[p = p_i] = 3\varepsilon$. Thus the total probability mass on $\{p_1,\ldots,p_m\}$ is $3m\varepsilon = \Theta(\varepsilon^{2/3}) < 1$. We place the remaining $1 - 3m\varepsilon$ probability mass on $p = 0$. The label $y$ is distributed as follows:
\[
\E[y|p] = \begin{cases}
    0, & \text{if } p = 0;\\
    p_i  + (-1)^i /3, & \text{if $p = p_i$ for $i\in \{1,\ldots,m\}$}.
\end{cases}
\]
It is easy to verify that $\cutoff(\cD) = \varepsilon$ because the bias $(-1)^i/3$ in every two neighboring bins cancel out, so the bias of exactly one bin with weight $3\varepsilon$ contributes to $\cutoff(\cD)$. It remains to show that $\sr_Z(r,\cD) = \Omega(\varepsilon^{2/3})$. We define
\[
\varphi(a):= \sum_{i = 1}^m (-1)^i\ind[|a - p_i| < 1/(6m)] \in \{-1,1\}.
\]
Since $p_1,\ldots,p_m\in [1/3,2/3]$, 
\begin{equation}
    \label{eq:zero}
    \varphi(a) = 0, \quad \text{for every $a\in [0,1/3 - 1/(6m)]$}
\end{equation}
Now we bound the quantities related to $\varphi$ in \eqref{eq:varphi}.
\begin{align}
\E_\Gamma[\varphi(a)^2] & \le \Pr[p\ne 0] + \Pr[p = 0]\E_{a\sim r(0)}[\varphi(a^2)]\notag \\
& \le 3m\varepsilon + (1 - 3m\varepsilon) \Pr_{a\sim r(0)}[a \ge 1/3 - 1/(6m)] \tag{by \eqref{eq:zero}}\\
& \le O(\varepsilon^{2/3}). \tag{by \eqref{eq:cutoff-markov}}
\end{align}
Also,
\[
\E[(y - a)\varphi(a)|p = 0] \ge -\Pr_{a\sim r(0)}[a \ge 1/3 - 1/(6m)] \ge - \varepsilon^{2/3}/100,\tag{by \eqref{eq:zero} and \eqref{eq:cutoff-markov}}
\]
and 
\begin{align*}
\E[(y - a)\varphi(a)|p = p_i]  & = \E[(p_i + (-1)^i/3 - a)\varphi(a)|p = p_i]\\
& = \E_{a\sim r(p_i)}[(p_i + (-1)^i/3 - a)\varphi(a)]\\
& \ge \Pr_{a\sim r(p_i)}[|a - p_i| < 1/(6m)](1/3 - 1/(6m)) - \Pr_{a\sim r(p_i)}[|a - p_i| \ge 1/(6m)]\\
& \ge  (9/10) (1/3 - 1/(6m)) - 1/10 \tag{by \eqref{eq:cutoff-markov}}\\
& \ge 1/10.
\end{align*}
Combining the last two inequalities,
\[
\E[(y - a)\varphi(a)] \ge -\varepsilon^{2/3}/100 + 3m\varepsilon/10 \ge \Omega(\varepsilon^{2/3}).
\]
Plugging our bounds on $\E[\varphi(a)^2]$ and $\E[(y - a)\varphi(a)]$ into \eqref{eq:varphi} proves that $\E[\sr_Z(r,\cD)] = \Omega(\varepsilon^{2/3})$.
\end{proof}

%% file: main.bbl
\newcommand{\etalchar}[1]{$^{#1}$}
\begin{thebibliography}{HLNRY23}

\bibitem[AIGT{\etalchar{+}}22]{metrics-cal}
Imanol Arrieta-Ibarra, Paman Gujral, Jonathan Tannen, Mark Tygert, and Cherie
  Xu.
\newblock Metrics of calibration for probabilistic predictions.
\newblock {\em Journal of Machine Learning Research}, 23(351):1--54, 2022.
\newblock URL: \url{http://jmlr.org/papers/v23/22-0658.html}.

\bibitem[BGHN23a]{utc}
Jaros\l{}aw B\l{}asiok, Parikshit Gopalan, Lunjia Hu, and Preetum Nakkiran.
\newblock A unifying theory of distance from calibration.
\newblock In {\em Proceedings of the 55th Annual ACM Symposium on Theory of
  Computing}, STOC 2023, page 1727–1740, New York, NY, USA, 2023. Association
  for Computing Machinery.
\newblock \href {https://doi.org/10.1145/3564246.3585182}
  {\path{doi:10.1145/3564246.3585182}}.

\bibitem[BGHN23b]{when}
Jaroslaw Blasiok, Parikshit Gopalan, Lunjia Hu, and Preetum Nakkiran.
\newblock When does optimizing a proper loss yield calibration?
\newblock In A.~Oh, T.~Naumann, A.~Globerson, K.~Saenko, M.~Hardt, and
  S.~Levine, editors, {\em Advances in Neural Information Processing Systems},
  volume~36, pages 72071--72095. Curran Associates, Inc., 2023.
\newblock URL:
  \url{https://proceedings.neurips.cc/paper_files/paper/2023/file/e4165c96702bac5f4962b70f3cf2f136-Paper-Conference.pdf}.

\bibitem[BN24]{smoothECE}
Jaroslaw Blasiok and Preetum Nakkiran.
\newblock Smooth {ECE}: Principled reliability diagrams via kernel smoothing.
\newblock In {\em The Twelfth International Conference on Learning
  Representations}, 2024.
\newblock URL: \url{https://openreview.net/forum?id=XwiA1nDahv}.

\bibitem[Daw82]{calibration}
A.~P. Dawid.
\newblock The well-calibrated bayesian.
\newblock {\em Journal of the American Statistical Association},
  77(379):605--610, 1982.
\newblock \href {https://doi.org/10.1080/01621459.1982.10477856}
  {\path{doi:10.1080/01621459.1982.10477856}}.

\bibitem[DDF{\etalchar{+}}25]{breaking-barrier}
Yuval Dagan, Constantinos Daskalakis, Maxwell Fishelson, Noah Golowich, Robert
  Kleinberg, and Princewill Okoroafor.
\newblock Breaking the {T\^{}(2/3)} barrier for sequential calibration.
\newblock In {\em Proceedings of the 57th Annual ACM Symposium on Theory of
  Computing}, STOC '25, page 2007–2018, New York, NY, USA, 2025. Association
  for Computing Machinery.
\newblock \href {https://doi.org/10.1145/3717823.3718178}
  {\path{doi:10.1145/3717823.3718178}}.

\bibitem[FGMS26]{fishelson2026highdimensional}
Maxwell Fishelson, Noah Golowich, Mehryar Mohri, and Jon Schneider.
\newblock High-dimensional calibration from swap regret.
\newblock In {\em The Thirty-ninth Annual Conference on Neural Information
  Processing Systems}, 2026.
\newblock URL: \url{https://openreview.net/forum?id=UVDihUz0iT}.

\bibitem[FV97]{foster1997calibrated}
Dean~P. Foster and Rakesh~V. Vohra.
\newblock Calibrated learning and correlated equilibrium.
\newblock {\em Games and Economic Behavior}, 21(589):40--55, 1997.

\bibitem[FV98]{asymptotic-calibration}
Dean~P. Foster and Rakesh~V. Vohra.
\newblock Asymptotic calibration.
\newblock {\em Biometrika}, 85(2):379--390, 06 1998.
\newblock \href {https://doi.org/10.1093/biomet/85.2.379}
  {\path{doi:10.1093/biomet/85.2.379}}.

\bibitem[GHK{\etalchar{+}}23]{loss-oi}
Parikshit Gopalan, Lunjia Hu, Michael~P. Kim, Omer Reingold, and Udi Wieder.
\newblock {Loss Minimization Through the Lens Of Outcome Indistinguishability}.
\newblock In Yael Tauman~Kalai, editor, {\em 14th Innovations in Theoretical
  Computer Science Conference (ITCS 2023)}, volume 251 of {\em Leibniz
  International Proceedings in Informatics (LIPIcs)}, pages 60:1--60:20,
  Dagstuhl, Germany, 2023. Schloss Dagstuhl -- Leibniz-Zentrum f{\"u}r
  Informatik.
\newblock URL:
  \url{https://drops.dagstuhl.de/entities/document/10.4230/LIPIcs.ITCS.2023.60},
  \href {https://doi.org/10.4230/LIPIcs.ITCS.2023.60}
  {\path{doi:10.4230/LIPIcs.ITCS.2023.60}}.

\bibitem[GHR24]{multiclass}
Parikshit Gopalan, Lunjia Hu, and Guy~N. Rothblum.
\newblock On computationally efficient multi-class calibration.
\newblock In Shipra Agrawal and Aaron Roth, editors, {\em Proceedings of Thirty
  Seventh Conference on Learning Theory}, volume 247 of {\em Proceedings of
  Machine Learning Research}, pages 1983--2026. PMLR, 30 Jun--03 Jul 2024.
\newblock URL: \url{https://proceedings.mlr.press/v247/gopalan24a.html}.

\bibitem[GJRR24]{oracle-omni}
Sumegha Garg, Christopher Jung, Omer Reingold, and Aaron Roth.
\newblock Oracle efficient online multicalibration and omniprediction.
\newblock In David~P. Woodruff, editor, {\em Proceedings of the 2024 {ACM-SIAM}
  Symposium on Discrete Algorithms, {SODA} 2024, Alexandria, VA, USA, January
  7-10, 2024}, pages 2725--2792. {SIAM}, 2024.
\newblock \href {https://doi.org/10.1137/1.9781611977912.98}
  {\path{doi:10.1137/1.9781611977912.98}}.

\bibitem[GKR{\etalchar{+}}22]{omni}
Parikshit Gopalan, Adam~Tauman Kalai, Omer Reingold, Vatsal Sharan, and Udi
  Wieder.
\newblock {Omnipredictors}.
\newblock In Mark Braverman, editor, {\em 13th Innovations in Theoretical
  Computer Science Conference (ITCS 2022)}, volume 215 of {\em Leibniz
  International Proceedings in Informatics (LIPIcs)}, pages 79:1--79:21,
  Dagstuhl, Germany, 2022. Schloss Dagstuhl -- Leibniz-Zentrum f{\"u}r
  Informatik.
\newblock URL:
  \url{https://drops.dagstuhl.de/entities/document/10.4230/LIPIcs.ITCS.2022.79},
  \href {https://doi.org/10.4230/LIPIcs.ITCS.2022.79}
  {\path{doi:10.4230/LIPIcs.ITCS.2022.79}}.

\bibitem[GKR23]{equiv}
Parikshit Gopalan, Michael Kim, and Omer Reingold.
\newblock Swap agnostic learning, or characterizing omniprediction via
  multicalibration.
\newblock In A.~Oh, T.~Naumann, A.~Globerson, K.~Saenko, M.~Hardt, and
  S.~Levine, editors, {\em Advances in Neural Information Processing Systems},
  volume~36, pages 39936--39956. Curran Associates, Inc., 2023.
\newblock URL:
  \url{https://proceedings.neurips.cc/paper_files/paper/2023/file/7d693203215325902ff9dbdd067a50ac-Paper-Conference.pdf}.

\bibitem[GOR{\etalchar{+}}24]{omni-regression}
Parikshit Gopalan, Princewill Okoroafor, Prasad Raghavendra, Abhishek Sherry,
  and Mihir Singhal.
\newblock Omnipredictors for regression and the approximate rank of convex
  functions.
\newblock In Shipra Agrawal and Aaron Roth, editors, {\em Proceedings of Thirty
  Seventh Conference on Learning Theory}, volume 247 of {\em Proceedings of
  Machine Learning Research}, pages 2027--2070. PMLR, 30 Jun--03 Jul 2024.
\newblock URL: \url{https://proceedings.mlr.press/v247/gopalan24b.html}.

\bibitem[GSTT25]{efficient-decision}
Parikshit Gopalan, Konstantinos Stavropoulos, Kunal Talwar, and Pranay Tankala.
\newblock Efficient calibration for decision making.
\newblock {\em arXiv preprint arXiv:2511.13699}, 2025.

\bibitem[GSTT26]{importance}
Parikshit Gopalan, Konstantinos Stavropoulos, Kunal Talwar, and Pranay Tankala.
\newblock The importance of being smoothly calibrated.
\newblock {\em arXiv preprint arXiv:2603.16015}, 2026.

\bibitem[HHW25]{truthful}
Jason Hartline, Lunjia Hu, and Yifan Wu.
\newblock A perfectly truthful calibration measure.
\newblock {\em arXiv preprint arXiv:2508.13100}, 2025.

\bibitem[HJKRR18]{mc}
Ursula Hebert-Johnson, Michael Kim, Omer Reingold, and Guy Rothblum.
\newblock Multicalibration: Calibration for the
  ({C}omputationally-identifiable) masses.
\newblock In Jennifer Dy and Andreas Krause, editors, {\em Proceedings of the
  35th International Conference on Machine Learning}, volume~80 of {\em
  Proceedings of Machine Learning Research}, pages 1939--1948. PMLR, 10--15 Jul
  2018.
\newblock URL: \url{https://proceedings.mlr.press/v80/hebert-johnson18a.html}.

\bibitem[HLNRY23]{constrained-omni}
Lunjia Hu, Inbal~Rachel Livni~Navon, Omer Reingold, and Chutong Yang.
\newblock Omnipredictors for constrained optimization.
\newblock In Andreas Krause, Emma Brunskill, Kyunghyun Cho, Barbara Engelhardt,
  Sivan Sabato, and Jonathan Scarlett, editors, {\em Proceedings of the 40th
  International Conference on Machine Learning}, volume 202 of {\em Proceedings
  of Machine Learning Research}, pages 13497--13527. PMLR, 23--29 Jul 2023.
\newblock URL: \url{https://proceedings.mlr.press/v202/hu23b.html}.

\bibitem[HQYZ24]{truthful-HQYZ}
Nika Haghtalab, Mingda Qiao, Kunhe Yang, and Eric Zhao.
\newblock Truthfulness of calibration measures.
\newblock In A.~Globerson, L.~Mackey, D.~Belgrave, A.~Fan, U.~Paquet,
  J.~Tomczak, and C.~Zhang, editors, {\em Advances in Neural Information
  Processing Systems}, volume~37, pages 117237--117290. Curran Associates,
  Inc., 2024.
\newblock URL:
  \url{https://proceedings.neurips.cc/paper_files/paper/2024/file/d4cbcae8cfc8aa3ae897a1296e4e0cac-Paper-Conference.pdf},
  \href {https://doi.org/10.52202/079017-3722}
  {\path{doi:10.52202/079017-3722}}.

\bibitem[HTY25]{sim-mim}
Lunjia Hu, Kevin Tian, and Chutong Yang.
\newblock Omnipredicting single-index models with multi-index models.
\newblock In {\em Proceedings of the 57th Annual ACM Symposium on Theory of
  Computing}, STOC '25, page 1762–1773, New York, NY, USA, 2025. Association
  for Computing Machinery.
\newblock \href {https://doi.org/10.1145/3717823.3718223}
  {\path{doi:10.1145/3717823.3718223}}.

\bibitem[HTY26]{hu2026simultaneous}
Lunjia Hu, Kevin Tian, and Chutong Yang.
\newblock Simultaneous blackwell approachability and applications to multiclass
  omniprediction.
\newblock {\em arXiv preprint arXiv:2602.17577}, 2026.

\bibitem[HW24]{cdl}
Lunjia Hu and Yifan Wu.
\newblock Predict to minimize swap regret for all payoff-bounded tasks.
\newblock In {\em 2024 IEEE 65th Annual Symposium on Foundations of Computer
  Science (FOCS)}, pages 244--263, 2024.
\newblock \href {https://doi.org/10.1109/FOCS61266.2024.00024}
  {\path{doi:10.1109/FOCS61266.2024.00024}}.

\bibitem[HWY25]{smooth-decision}
Jason Hartline, Yifan Wu, and Yunran Yang.
\newblock {Smooth Calibration and Decision Making}.
\newblock In Mark Bun, editor, {\em 6th Symposium on Foundations of Responsible
  Computing (FORC 2025)}, volume 329 of {\em Leibniz International Proceedings
  in Informatics (LIPIcs)}, pages 16:1--16:26, Dagstuhl, Germany, 2025. Schloss
  Dagstuhl -- Leibniz-Zentrum f{\"u}r Informatik.
\newblock URL:
  \url{https://drops.dagstuhl.de/entities/document/10.4230/LIPIcs.FORC.2025.16},
  \href {https://doi.org/10.4230/LIPIcs.FORC.2025.16}
  {\path{doi:10.4230/LIPIcs.FORC.2025.16}}.

\bibitem[KF08]{smooth}
Sham~M. Kakade and Dean~P. Foster.
\newblock Deterministic calibration and {Nash} equilibrium.
\newblock {\em Journal of Computer and System Sciences}, 74(1):115--130, 2008.
\newblock Learning Theory 2004.
\newblock URL:
  \url{https://www.sciencedirect.com/science/article/pii/S0022000007000633},
  \href {https://doi.org/10.1016/j.jcss.2007.04.017}
  {\path{doi:10.1016/j.jcss.2007.04.017}}.

\bibitem[KKG{\etalchar{+}}22]{universal}
Michael~P. Kim, Christoph Kern, Shafi Goldwasser, Frauke Kreuter, and Omer
  Reingold.
\newblock Universal adaptability: Target-independent inference that competes
  with propensity scoring.
\newblock {\em Proceedings of the National Academy of Sciences},
  119(4):e2108097119, 2022.
\newblock URL: \url{https://www.pnas.org/doi/abs/10.1073/pnas.2108097119},
  \href
  {https://arxiv.org/abs/https://www.pnas.org/doi/pdf/10.1073/pnas.2108097119}
  {\path{arXiv:https://www.pnas.org/doi/pdf/10.1073/pnas.2108097119}}, \href
  {https://doi.org/10.1073/pnas.2108097119}
  {\path{doi:10.1073/pnas.2108097119}}.

\bibitem[KLST23]{u-cal}
Bobby Kleinberg, Renato~Paes Leme, Jon Schneider, and Yifeng Teng.
\newblock U-calibration: Forecasting for an unknown agent.
\newblock In Gergely Neu and Lorenzo Rosasco, editors, {\em Proceedings of
  Thirty Sixth Conference on Learning Theory}, volume 195 of {\em Proceedings
  of Machine Learning Research}, pages 5143--5145. PMLR, 12--15 Jul 2023.
\newblock URL: \url{https://proceedings.mlr.press/v195/kleinberg23a.html}.

\bibitem[KP23]{performative}
Michael~P. Kim and Juan~C. Perdomo.
\newblock {Making Decisions Under Outcome Performativity}.
\newblock In Yael Tauman~Kalai, editor, {\em 14th Innovations in Theoretical
  Computer Science Conference (ITCS 2023)}, volume 251 of {\em Leibniz
  International Proceedings in Informatics (LIPIcs)}, pages 79:1--79:15,
  Dagstuhl, Germany, 2023. Schloss Dagstuhl -- Leibniz-Zentrum f{\"u}r
  Informatik.
\newblock URL:
  \url{https://drops.dagstuhl.de/entities/document/10.4230/LIPIcs.ITCS.2023.79},
  \href {https://doi.org/10.4230/LIPIcs.ITCS.2023.79}
  {\path{doi:10.4230/LIPIcs.ITCS.2023.79}}.

\bibitem[LRS25]{lrs}
Jiuyao Lu, Aaron Roth, and Mirah Shi.
\newblock Sample efficient omniprediction and downstream swap regret for
  non-linear losses.
\newblock In Nika Haghtalab and Ankur Moitra, editors, {\em Proceedings of
  Thirty Eighth Conference on Learning Theory}, volume 291 of {\em Proceedings
  of Machine Learning Research}, pages 3829--3878. PMLR, 30 Jun--04 Jul 2025.
\newblock URL: \url{https://proceedings.mlr.press/v291/lu25b.html}.

\bibitem[NRRX23]{high-dim}
Georgy Noarov, Ramya Ramalingam, Aaron Roth, and Stephan Xie.
\newblock High-dimensional unbiased prediction for sequential decision making.
\newblock In {\em OPT 2023: Optimization for Machine Learning}, 2023.
\newblock URL: \url{https://openreview.net/forum?id=P4j4l45NUq}.

\bibitem[OKK25]{optimal-omni}
Princewill Okoroafor, Robert Kleinberg, and Michael~P. Kim.
\newblock Near-optimal algorithms for omniprediction.
\newblock In {\em 2025 IEEE 66th Annual Symposium on Foundations of Computer
  Science (FOCS)}, pages 1595--1609, 2025.
\newblock \href {https://doi.org/10.1109/FOCS63196.2025.00084}
  {\path{doi:10.1109/FOCS63196.2025.00084}}.

\bibitem[Pen25]{peng2025high}
Binghui Peng.
\newblock High dimensional online calibration in polynomial time.
\newblock {\em arXiv preprint arXiv:2504.09096}, 2025.

\bibitem[QV21]{QV}
Mingda Qiao and Gregory Valiant.
\newblock Stronger calibration lower bounds via sidestepping.
\newblock In {\em Proceedings of the 53rd Annual ACM SIGACT Symposium on Theory
  of Computing}, STOC 2021, page 456–466, New York, NY, USA, 2021.
  Association for Computing Machinery.
\newblock \href {https://doi.org/10.1145/3406325.3451050}
  {\path{doi:10.1145/3406325.3451050}}.

\bibitem[QZ25]{truthful-QZ}
Mingda Qiao and Eric Zhao.
\newblock Truthfulness of decision-theoretic calibration measures.
\newblock In Nika Haghtalab and Ankur Moitra, editors, {\em Proceedings of
  Thirty Eighth Conference on Learning Theory}, volume 291 of {\em Proceedings
  of Machine Learning Research}, pages 4686--4739. PMLR, 30 Jun--04 Jul 2025.
\newblock URL: \url{https://proceedings.mlr.press/v291/qiao25a.html}.

\bibitem[RS24]{Roth-Shi}
Aaron Roth and Mirah Shi.
\newblock Forecasting for swap regret for all downstream agents.
\newblock In {\em Proceedings of the 25th ACM Conference on Economics and
  Computation}, EC '24, page 466–488, New York, NY, USA, 2024. Association
  for Computing Machinery.
\newblock \href {https://doi.org/10.1145/3670865.3673622}
  {\path{doi:10.1145/3670865.3673622}}.

\bibitem[RSB{\etalchar{+}}25]{test-action}
Raphael Rossellini, Jake~A. Soloff, Rina~Foygel Barber, Zhimei Ren, and Rebecca
  Willett.
\newblock Can a calibration metric be both testable and actionable?
\newblock In Nika Haghtalab and Ankur Moitra, editors, {\em Proceedings of
  Thirty Eighth Conference on Learning Theory}, volume 291 of {\em Proceedings
  of Machine Learning Research}, pages 4937--4972. PMLR, 30 Jun--04 Jul 2025.
\newblock URL: \url{https://proceedings.mlr.press/v291/rossellini25a.html}.

\end{thebibliography}
